%% file: main.tex
\documentclass{article}

\usepackage{microtype}
\usepackage{graphicx}
\usepackage{subcaption}
\usepackage{booktabs} 
\usepackage{enumitem}

\usepackage{hyperref}

\usepackage[accepted]{template/icml2026}

\usepackage{amsmath}
\usepackage{amssymb}
\usepackage{mathtools}
\usepackage{amsthm}
\usepackage{bbm}
\usepackage{pifont} 
\usepackage[most]{tcolorbox}

\newcommand{\hypbox}[2]{%
\begin{tcolorbox}[colback=white!98!black,colframe=white!30!black,boxsep=1.1pt,top=6.75pt]%
\vspace{1.75pt}%
\textbf{#1}\\[-0.575em]
\noindent\makebox[\textwidth]{\rule{\textwidth}{0.4pt}}
\\[0.25em]
#2
\end{tcolorbox}
}

\newcommand{\mat}[1]{\mathbf{#1}}      
\newcommand{\vect}[1]{\mathbf{#1}}     
\newcommand{\Prob}{\mathbb{P}}         
\newcommand{\E}{\mathbb{E}}            
\newcommand{\R}{\mathbb{R}}            

\usepackage{aliascnt} 
\usepackage[nameinlink,noabbrev]{cleveref}

\theoremstyle{plain}
\newtheorem{theorem}{Theorem}[section]

\newaliascnt{proposition}{theorem}
\newtheorem{proposition}[proposition]{Proposition}
\aliascntresetthe{proposition}

\newaliascnt{lemma}{theorem}
\newtheorem{lemma}[lemma]{Lemma}
\aliascntresetthe{lemma}

\newaliascnt{corollary}{theorem}
\newtheorem{corollary}[corollary]{Corollary}
\aliascntresetthe{corollary}

\theoremstyle{definition}
\newaliascnt{definition}{theorem}
\newtheorem{definition}[definition]{Definition}
\aliascntresetthe{definition}

\newaliascnt{assumption}{theorem}
\newtheorem{assumption}[assumption]{Assumption}
\aliascntresetthe{assumption}

\theoremstyle{remark}
\newaliascnt{remark}{theorem}

\aliascntresetthe{remark}

\crefname{theorem}{Theorem}{Theorems}
\Crefname{theorem}{Theorem}{Theorems}

\crefname{proposition}{Proposition}{Propositions}
\Crefname{proposition}{Proposition}{Propositions}

\crefname{lemma}{Lemma}{Lemmas}
\Crefname{lemma}{Lemma}{Lemmas}

\crefname{corollary}{Corollary}{Corollaries}
\Crefname{corollary}{Corollary}{Corollaries}

\crefname{definition}{Definition}{Definitions}
\Crefname{definition}{Definition}{Definitions}

\crefname{assumption}{Assumption}{Assumptions}
\Crefname{assumption}{Assumption}{Assumptions}

\crefname{remark}{Remark}{Remarks}
\Crefname{remark}{Remark}{Remarks}

\crefname{appendix}{Appendix}{Appendices}
\Crefname{appendix}{Appendix}{Appendices}
\crefname{subappendix}{Appendix}{Appendices}
\Crefname{subappendix}{Appendix}{Appendices}
\crefname{subsubappendix}{Appendix}{Appendices}
\Crefname{subsubappendix}{Appendix}{Appendices}

\usepackage[acronym,nogroupskip,nopostdot]{glossaries}
\makeglossaries

\newacronym{cka}{CKA}{Centered Kernel Alignment}
\newacronym{rsa}{RSA}{Representational Similarity Analysis}
\newacronym{cca}{CCA}{Canonical Correlation Analysis}
\newacronym{svcca}{SVCCA}{Singular Vector Canonical Correlation Analysis}
\newacronym{pwcca}{PWCCA}{Projection Weighted Canonical Correlation Analysis}
\newacronym{hsic}{HSIC}{Hilbert-Schmidt Independence Criterion}
\newacronym{mknn}{mKNN}{mutual $k$-Nearest Neighbors}
\newacronym{rmt}{RMT}{Random Matrix Theory}
\newacronym{evt}{EVT}{Extreme Value Theory}
\newacronym{prh}{PRH}{Platonic Representation Hypothesis}
\newacronym{cdf}{CDF}{Cumulative Distribution Function}
\newacronym{kde}{KDE}{Kernel Density Estimation}
\newacronym{snr}{SNR}{Signal-to-Noise Ratio}
\newacronym{fwer}{FWER}{Family-Wise Error Rate}
\newacronym{bh}{BH}{Benjamini-Hochberg}
\newacronym{by}{BY}{Benjamini-Yekutieli}
\newacronym{prds}{PRDS}{Positive Regression Dependency on Subsets}
\newacronym{wuc}{WUC}{Whitened Unbiased Covariance}
\glsdisablehyper

\icmltitlerunning{
Revisiting the Platonic Representation Hypothesis: An Aristotelian View
}

\begin{document}
\frenchspacing

\twocolumn[
  \icmltitle{
  Revisiting the Platonic Representation Hypothesis:
  An Aristotelian View
  }


  \icmlsetsymbol{equal}{*}

  \begin{icmlauthorlist}
    \icmlauthor{Fabian Gr\"oger}{equal,epfl,unibas,hslu}
    \icmlauthor{Shuo Wen}{equal,epfl}
    \icmlauthor{Maria Brbi\'c}{epfl}
  \end{icmlauthorlist}

  \icmlaffiliation{epfl}{EPFL}
  \icmlaffiliation{unibas}{University of Basel}
  \icmlaffiliation{hslu}{HSLU}

  \icmlcorrespondingauthor{Maria Brbi\'c}{maria.brbic@epfl.ch}

  \icmlkeywords{
    Platonic Representation Hypothesis, 
    Representation Similarity,
    Hypothesis Testing,
    Representation Learning,
    Unsupervised Learning,
   }

  \vskip 0.3in
]

\printAffiliationsAndNotice{%
\hspace*{-1.6em}
\textbf{\fontsize{9.5pt}{10pt}
\selectfont Project Page:} 
{\fontsize{9.5pt}{10pt}\selectfont
\href{https://brbiclab.epfl.ch/projects/aristotelian/}{\texttt{brbiclab.epfl.ch/aristotelian}}}
\\
{\textbf{\fontsize{9.5pt}{10pt}\selectfont Code:}} \href{https://github.com/mlbio-epfl/Aristotelian}{\texttt{github.com/mlbio-epfl/aristotelian}}
\\
\icmlEqualContribution}

\begin{abstract}
The Platonic Representation Hypothesis suggests that representations from neural networks are converging to a common statistical model of reality. 
We show that the existing metrics used to measure representational similarity are \textit{confounded by network scale}: increasing model depth or width can systematically inflate representational similarity scores.
To correct these effects, we introduce a permutation-based null-calibration framework that transforms any representational similarity metric into a calibrated score with statistical guarantees.
We revisit the Platonic Representation Hypothesis with our calibration framework, which reveals a nuanced picture: the apparent convergence reported by global spectral measures largely disappears after calibration, while local neighborhood similarity, but not local distances, retains significant agreement across different modalities. 
Based on these findings, we propose the \textit{Aristotelian Representation Hypothesis}: representations in neural networks are converging to shared local neighborhood relationships. 
\end{abstract}

\section{Introduction}\label{sec:intro}
\input{sections/Intro}

\section{Related work}\label{sec:related}
\input{sections/Related}

\section{Problem setup}\label{sec:setup}
\input{sections/Setup}

\section{Theoretical motivation: spurious alignment}\label{sec:theory}
\input{sections/TheoryMotivation}

\section{Representational similarity calibration}
\label{sec:methods}
\input{sections/Method}

\section{Experiments}\label{sec:exp-setup}
\input{sections/Experiment}

\section{Conclusion}\label{sec:conclusion}
\input{sections/Conclusion}

\section*{Acknowledgements}
We thank Artyom Gadetsky, Siba Smarak Panigrahi, Debajyoti Dasgupta, David Frühbuss, Shin Matsushima, Rishubh Singh, Adriana Moreno Castan, and Gioele La Manno for their valuable suggestions, which helped improve the manuscript.
We are especially grateful to Simone Lionetti for additional input and support.
We gratefully acknowledge the support of the Swiss National Science Foundation (SNSF) starting grant TMSGI2\_226252/1, SNSF grant IC00I0\_231922, SNSF grant 10.004.411, and the Swiss AI Initiative Large Call \#32. 
M.B. is a CIFAR Fellow in the Multiscale Human Program.

\clearpage
\section*{Impact statement}

This paper presents work whose goal is to advance the field of machine learning. 
There are many potential societal consequences of our work, none of which we feel must be specifically highlighted here.

\bibliography{references}
\bibliographystyle{template/icml2026}

\newpage
\appendix
\crefalias{section}{appendix}
\crefalias{subsection}{appendix}
\crefalias{subsubsection}{appendix}
\onecolumn

\input{sections/Appendix}

\end{document}

%% file: sections/Intro.tex
\begin{figure}[t]
    \hypbox{The Aristotelian Representation Hypothesis}{%
    Neural networks, trained with different objectives on different data and modalities, are converging to shared \emph{local neighborhood relationships}.}
    \centering
    \includegraphics[width=\linewidth]{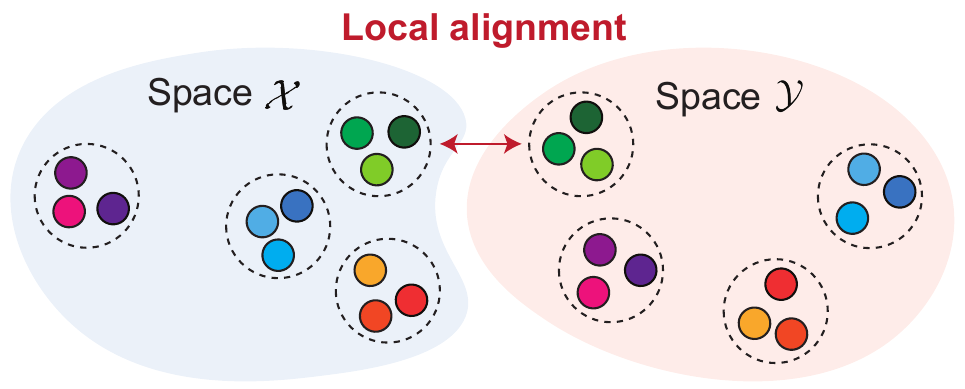}
    \caption{
    \textbf{The Aristotelian Representation Hypothesis:} Local relations (``who is near whom''), rather than distances between data points, are preserved across different representation spaces.
   Representation learning algorithms converge toward shared \emph{local neighborhood relationships}.
    }
    \label{fig:local-prh}
    \vspace{-4mm}
\end{figure}

Quantifying the similarity between neural network representations is central to understanding the geometry of learned representation spaces~\citep{raghu2017svcca,nguyen2021do}, guiding transfer learning decisions~\citep{pmlr-v97-kornblith19a,neyshabur2020being,groger2026limited}, and relating artificial representations to neural measurements in neuroscience~\citep{schrimpf2018brain}.
The Platonic Representation Hypothesis~\citep{pmlr-v235-huh24a} posits that as neural networks scale, representations across different modalities become increasingly similar, suggesting convergence to a shared statistical model of reality. 
This hypothesis has motivated a growing literature that uses representational similarity to study whether scaling produces universal structure across models~\citep{pmlr-v235-huh24a,Maniparambil_2024_CVPR,tjandrasuwita2025understanding,zhu2025dynamic}.
To measure representational similarity across models, different metrics have been proposed, such as Centered Kernel Alignment~\citep{pmlr-v97-kornblith19a}, Canonical Correlation Analysis~\citep{weenink2003canonical}, Representational Similarity Analysis~\citep{kriegeskorte2008rsa}, and mutual $k$-Nearest Neighbors~\citep{pmlr-v235-huh24a}.

\begin{figure}[t]
  \centering
  \includegraphics[width=\linewidth]{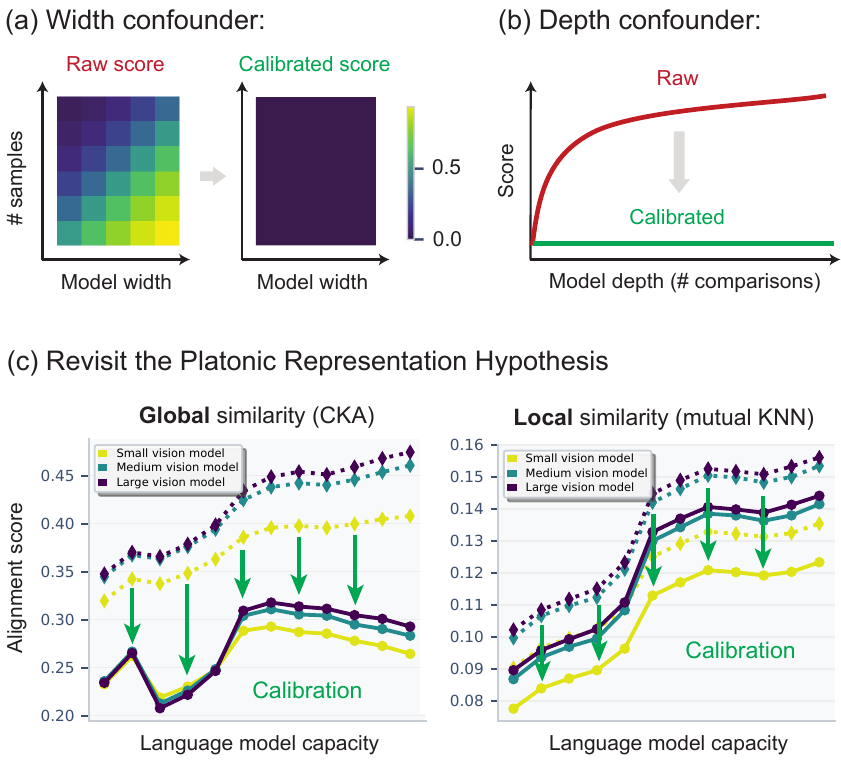}
  \caption{
  \textbf{Null calibration removes width and depth confounders.}
  (a)~\emph{Width} confounder: raw scores exhibit positive null baselines that increase with the ratio of dimension (width) of the spaces and the number of samples; calibration collapses them to zero.
  (b)~\emph{Depth} confounder: selection-based summaries (max over layers) inflate with search space size; aggregation-aware calibration removes this.
  (c)~After calibration, global metrics lose their convergence trend, while local metrics retain significant alignment.
  }
  \vspace{-5mm}
  \label{fig:intro}
\end{figure}

In this work, we identify two pervasive confounders that distort representational similarity measurements.
The first is the \emph{model width}: when the embedding dimension increases relative to the sample size, interaction-matrix-based similarity metrics exhibit a systematic positive baseline even when representations are independent.
This spurious similarity is a general consequence of dimensionality-driven null inflation: the expected similarity under independence does not vanish but instead depends on both the representation dimensionality and the sample size (\Cref{fig:intro}a).
As a result, wider models can appear more aligned simply because their representations live in higher-dimensional spaces.
The second confounder is the \emph{model depth}.
Many analyses do not compare individual layer pairs, because it is unknown where similarity arises~\citep{schrimpf2018brain,pmlr-v235-huh24a}.
Instead, they search over all pairs and report a summary statistic such as the maximum.
This is widespread practice: studies routinely compute similarity over many layer pairs and report an aggregate such as the maximum or best-matching layer~\citep{raghu2017svcca,pmlr-v97-kornblith19a,schrimpf2018brain,pmlr-v235-huh24a,kapoor2025bridging}.
Taking a maximum over many comparisons inflates the reported score even if there is no similarity, since the expected maximum of independent draws exceeds the mean.
This inflation grows with the number of comparisons, so deeper models can appear more aligned simply because more layer pairs are compared (\Cref{fig:intro}b).
Together, these confounders undermine the comparative use of representational similarity without calibration.

To address these issues, we introduce the \emph{null-calibration for representational similarity}, a general permutation-based framework that transforms any similarity metric into a calibrated score with a principled null reference, here defined as no relationship (\Cref{fig:intro}).
The core idea is to measure how extreme an observed similarity is relative to an empirical null distribution obtained by breaking sample correspondences.
For scalar comparisons (\textit{i.e.}, width confounder), we estimate a critical threshold from the null distribution and define a calibrated score that is zero when the observed similarity falls below this threshold and rescaled to preserve the maximum at one.
For selection-based summaries (\textit{i.e.}, depth confounder), we apply \emph{aggregation-aware} calibration.
We compute the null distribution of the same aggregate statistic that is ultimately reported (\textit{e.g.}, the maximum over all layer pairs), thereby calibrating the selection step itself.

These observations raise a question: \textit{Does the Platonic Representation Hypothesis still hold once similarity is calibrated? }
We find that, after calibration, the previously reported convergence in global metrics~\citep{pmlr-v235-huh24a,Maniparambil_2024_CVPR,tjandrasuwita2025understanding} largely disappears, suggesting it was driven primarily by width and depth confounders, whereas local neighborhood-based metrics retain significant cross-modal alignment (\Cref{fig:intro}c). 
However, we also observe that the convergence in local distances is not preserved, suggesting that only local neighborhood relationships are aligned.
Motivated by these results, we refine the original Platonic Representation Hypothesis and propose the \textit{Aristotelian Representation Hypothesis}\footnote{Calling this refinement \textit{Aristotelian}: it emphasizes learned representations converging on relations among instances (who is near whom) rather than the idea of convergence toward a globally matching structure.}: Neural networks, trained with different objectives on different data and modalities, converge to shared local neighborhood relationships~(\Cref{fig:local-prh}). 
We name it after the Greek philosopher Aristotle, who was a student of Plato and, in his Categories, established the principles of relatives \cite{aristotle}. 

%% file: sections/Related.tex
\paragraph{Representational similarity metrics.}
\looseness=-1
A long line of work compares representation spaces using a variety of similarity measures.
\Gls{cca}~\citep{hotelling1992relations} and variants such as \gls{svcca}~\citep{raghu2017svcca} and \gls{pwcca}~\citep{morcos2018insights} compare subspaces up to linear transformations, while Procrustes- and shape-based distances compare representations up to restricted alignment classes~\citep{ding2021grounding,williams2021shapemetrics}.
\Gls{cka}~\citep{pmlr-v97-kornblith19a} has become a dominant tool for comparing deep representations, with kernelized variants extending to nonlinear similarity.
\Gls{rsa}~\citep{kriegeskorte2008rsa}, originating in neuroscience, compares representational dissimilarity matrices rather than feature bases.
Neighborhood-based approaches, such as \gls{mknn}~\citep{pmlr-v235-huh24a}, capture local topological consistency rather than global alignment.
However, recent evaluations stress that different metrics encode different invariances and can yield qualitatively different conclusions, motivating more robust reporting practices~\citep{klabunde2023survey,klabunde2025resi,ding2021grounding,harvey2024decodable,bo2024functional}.
This sensitivity is especially consequential in the neuro-AI literature, where the choice of similarity measure can profoundly change conclusions about which models best align with the brain~\citep{soni2024conclusions}, which is why scores that are calibrated and comparable across metrics and scales are needed.

\paragraph{Reliability of representational similarity metrics.}
\looseness=-1
In finite-sample, high-dimensional regimes, raw similarity scores can be systematically biased.
Recent works~\citep{murphy2024biasedcka,chun2025sparsecka} propose debiased \gls{cka}, but these corrections are \emph{metric-specific}.
For neighborhood-based metrics, no analogous debiasing methods exist despite distance concentration effects that inflate random $k$-NN overlap~\citep{beyer1999nearest,aggarwal2001surprising}.
Other approaches address confounding from \emph{input population structure}. 
For instance, \citet{cui2022deconfounded} propose regression-style deconfounding to remove effects of shared input statistics on \gls{rsa}/\gls{cka}.
Analogous concerns arise for regression-based predictivity in neuroscience, where high-dimensional fits can inflate alignment scores and may fail to identify genuinely brain-like models~\citep{schaeffer2024position}.
A separate reliability issue arises from layer search, where max or top-$k$ aggregation across many layer pairs introduces multiple-comparison inflation.
While resampling-based ``maxT'' procedures~\citep{westfall1993resampling,nichols2002permutation} can calibrate such aggregates, this has not yet been applied in representational similarity studies.
Our calibration framework addresses both finite-sample bias and selection inflation in a unified, metric-agnostic way.

\paragraph{The Platonic Representation Hypothesis.}
A growing body of work examines whether neural networks trained under different conditions converge toward similar representations.
The Platonic Representation Hypothesis~\citep{pmlr-v235-huh24a} posits that as models scale, their representations increasingly converge across architectures and even across modalities such as vision and language, with convergence reported under both global and local similarity measures.
Follow-up work has examined factors influencing these trends, including model size, training duration, and data distribution~\citep{raugel2025disentangling}, and has explored analogous convergence effects in broader settings such as video models~\citep{zhu2025dynamic} and comparisons to biological vision~\citep{marcos2025convergent}.
Prior work attributes convergence to shared factors such as training data distribution, scale, and objective functions~\citep{raugel2025disentangling,pmlr-v235-huh24a}. Our calibration-based analysis does not contradict these findings but refines their interpretation, showing that the evidence for convergence depends on whether it is measured by global or local similarity.
In this work, we revisit the Platonic Representation Hypothesis using our null-calibration framework that controls for width and depth confounders.

%% file: sections/Setup.tex
\subsection{Representation spaces and similarity score}

Let $\mathcal{X} \subseteq \R^{d_x}$ and $\mathcal{Y} \subseteq \R^{d_y}$ be two representation spaces, where $d_x$ and $d_y$ are the respective space dimensions.
For a set of $n$ input samples, let $\mat{X} \in \R^{n \times d_x}$ and $\mat{Y} \in \R^{n \times d_y}$ be the corresponding embeddings in $\mathcal{X}$ and $\mathcal{Y}$. 
We assume row-wise alignment such that the $i$-th row of $\mat{X}$ and $\mat{Y}$ correspond to paired inputs.
We use a similarity score $s(\mathcal{X},\mathcal{Y}) \in \R$ to quantify the agreement between $\mathcal{X}$ and $\mathcal{Y}$.
In practice, we compute it from $\mat{X},\mat{Y}$ and, by a slight abuse of notation, denote it with $s(\mat{X},\mat{Y})$.

We consider three families of metrics:
\textit{(i)}~\textit{spectral:} 
metrics defined on the spectrum of cross-covariance or Gram matrices (\textit{e.g.}, \gls{cka}, \gls{cca}),
\textit{(ii)}~\textit{neighborhood:}
metrics measuring local topological overlap (\textit{e.g.}, \gls{mknn}), 
and \textit{(iii)}~\textit{geometric:}
second-order isomorphism metrics (\textit{e.g.}, \gls{rsa}).
\Cref{app:metrics} provides definitions of the metrics used in this paper.

\subsection{The null hypothesis of independence} 
We claim that a similarity score $s(\mat{X},\mat{Y})$ is uninterpretable without a baseline. 
To provide this baseline, we define the null hypothesis $H_0$ as the absence of a relationship between $\mat{X}$ and $\mat{Y}$ beyond their marginal statistics.
We operationalize $H_0$ via a permutation group $\Pi_n$ acting on sample indices: draw $\pi\sim\mathrm{Unif}(\Pi_n)$
independently of $(\mat{X},\mat{Y})$ and evaluate $s(\mat{X},\pi(\mat{Y}))$, where $\pi(\mat{Y})$ permutes the rows of $\mat{Y}$.

\begin{assumption}[Exchangeability under the null]
\label{assm:exchangeability}
Under $H_0$, the joint distribution of paired samples is invariant to relabeling of correspondences.
For any permutation $\pi\in\Pi_n$, $\Prob_{H_0}(\mat{X},\mat{Y}) = \Prob_{H_0}(\mat{X},\pi(\mat{Y}))$.
\end{assumption}

This assumption implies that if no true relationship exists, the observed pairing is statistically indistinguishable from a random shuffling of the data.
It allows us to construct an empirical null distribution by holding $\mat{X}$ fixed and shuffling the rows of $\mat{Y}$.

\subsection{Baseline problem: non-zero null expectations} 
Ideally, under $H_0$, we desire $\E_{\pi}[s(\mat{X}, \pi(\mat{Y}))] \approx 0$. 
However, for commonly used raw or \emph{biased} estimators, the expected similarity under the null is \textit{not zero},
\begin{equation}
    \mu_{0}(n, d_x, d_y) \,\coloneqq\, \E_{\pi}[s(\mat{X}, \pi(\mat{Y}))].
\end{equation}
This baseline $\mu_0$ is metric- and preprocessing-dependent and can deviate from zero in finite samples.
It also varies with sample size and dimension, thus acting as a confounding variable in comparative studies. 

%% file: sections/TheoryMotivation.tex
We motivate and formalize \emph{why} raw representational similarity metrics fail in cross-scale model comparisons. 
We identify two distinct sources of confounding: \textit{(i)} \textit{the width confounder} driven by representation dimension, and \textit{(ii)} \textit{the depth confounder} driven by the number of layers considered when comparing models. 

\subsection{The width confounder}
\label{sec:theory_rmt}
Many spectral-family similarity metrics, \textit{e.g.}, linear/kernel \gls{cka} and the RV coefficient, can be written as functionals of an \emph{interaction operator} constructed from two representations.
One such operator is the (normalized) cross-covariance
\begin{equation}
    \widetilde{\mat{C}} \;=\; \frac{1}{n-1}\mat{X}_{c}^{\top}\mat{Y}_{c}\in\R^{d_x\times d_y},
\end{equation}
where $\mat{X}_{c}$ and $\mat{Y}_{c}$ denote row-centered representations (\Cref{app:metrics_preproc}).

A common but misleading intuition is that if $\mat{X}$ and $\mat{Y}$ are independent, then $\widetilde{\mat{C}}\approx \mat{0}$
and therefore spectral aggregates should be near zero. 
In high dimension this fails: the null interaction energy is typically non-zero.

\begin{proposition}[Non-vanishing null interaction energy]
\label{prop:nonzero_energy}
Assume the rows are i.i.d.\ with $\E[\vect{x}_i]=\E[\vect{y}_i]=0$,
$\mathrm{Cov}(\vect{x}_i)=\mat{I}_{d_x}$, $\mathrm{Cov}(\vect{y}_i)=\mat{I}_{d_y}$, and $\vect{x}_i$ and $\vect{y}_i$ are independent.
Then
\begin{equation}
    \E_{H_0}\left[\|\widetilde{\mat{C}}\|_F^2\right] \;=\; \frac{d_x d_y}{n-1}.
\end{equation}
\end{proposition}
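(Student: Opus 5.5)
The plan is to write the centered cross-covariance in terms of the centering matrix and compute the expectation by iterated conditioning, using only second moments. Let $\mat{H} = \mat{I}_n - \frac{1}{n}\mathbf{1}\mathbf{1}^\top$ be the centering projector, so that $\mat{X}_c = \mat{H}\mat{X}$ and $\mat{Y}_c = \mat{H}\mat{Y}$. Since $\mat{H}$ is symmetric and idempotent, $\mat{X}_c^\top \mat{Y}_c = \mat{X}^\top \mat{H}\mat{Y}$. Then
\begin{equation*}
\|\widetilde{\mat{C}}\|_F^2 = \frac{1}{(n-1)^2}\,\mathrm{tr}\!\left(\mat{X}^\top \mat{H}\mat{Y}\mat{Y}^\top \mat{H}\mat{X}\right) = \frac{1}{(n-1)^2}\,\mathrm{tr}\!\left(\mat{H}\mat{Y}\mat{Y}^\top \mat{H}\,\mat{X}\mat{X}^\top\right).
\end{equation*}

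First I will note that the hypotheses make the whole matrix $\mat{X}$ independent of the whole matrix $\mat{Y}$. Indeed, the pairs $(\vect{x}_i,\vect{y}_i)$ are i.i.d.\ across $i$, and within each pair $\vect{x}_i \perp \vect{y}_i$. This lets me condition on $\mat{Y}$ and take the expectation over $\mat{X}$ alone, using linearity of the trace.

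The key moment identity is $\E[\mat{X}\mat{X}^\top] = d_x \mat{I}_n$. The $(i,j)$ entry of $\mat{X}\mat{X}^\top$ is $\vect{x}_i^\top\vect{x}_j$. For $i\neq j$ its mean is $\E[\vect{x}_i]^\top\E[\vect{x}_j]=0$ by independence and zero mean. For $i=j$ its mean is $\mathrm{tr}\,\mathrm{Cov}(\vect{x}_i)=d_x$.

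Substituting this gives a conditional expectation of $d_x\,\mathrm{tr}(\mat{H}\mat{Y}\mat{Y}^\top\mat{H}) = d_x\,\mathrm{tr}(\mat{H}\mat{Y}\mat{Y}^\top)$, using idempotence and cyclicity of the trace. Applying the same identity $\E[\mat{Y}\mat{Y}^\top]=d_y\mat{I}_n$ then yields $d_x d_y\,\mathrm{tr}(\mat{H}) = d_x d_y (n-1)$. Dividing by $(n-1)^2$ gives the claimed value $d_x d_y/(n-1)$.

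There is no real obstacle in this argument. The only point needing care is the bookkeeping with the centering matrix. Centering introduces cross-row dependence, so one should not claim that the centered rows are independent. Routing everything through $\mat{H}$ avoids this, and the factor $\mathrm{tr}(\mat{H})=n-1$ is exactly what makes the $\frac{1}{n-1}$ normalization produce the clean constant. No higher moments or Gaussianity are required.
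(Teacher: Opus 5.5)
Your proof is correct. It reaches the result by a different organization than the paper's.

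\textbf{The paper's route (feature side).} It writes each entry $\widetilde{C}_{ab}$ explicitly using $H_{ij}=\delta_{ij}-\frac{1}{n}$. It then expands $\E[\widetilde{C}_{ab}^2]$ into three sums: the raw cross-moment, the mixed term, and the product of column sums. These contribute $n$, $-2$, and $+1$ inside the bracket. This gives $\E[\widetilde{C}_{ab}^2]=1/(n-1)$ for every entry, and summing over the $d_x d_y$ entries finishes the proof.

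\textbf{Your route (sample side).} You write $\|\widetilde{\mat{C}}\|_F^2=\frac{1}{(n-1)^2}\mathrm{tr}(\mat{H}\mat{Y}\mat{Y}^\top\mat{H}\,\mat{X}\mat{X}^\top)$. You then apply the Gram-moment identity $\E[\mat{X}\mat{X}^\top]=d_x\mat{I}_n$, and its analogue for $\mat{Y}$, via conditioning. The factor $n-1$ then appears directly as $\mathrm{tr}(\mat{H})$, the rank of the centering projector, and no cross-term bookkeeping is needed.

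\textbf{What each buys.} The paper's expansion gives slightly finer information: the per-entry second moment, and an explicit view of how centering removes one unit from $n$. Yours is shorter and generalizes at no extra cost. Substituting $\E[\mat{X}\mat{X}^\top]=\mathrm{tr}(\mat{\Sigma}_x)\mat{I}_n+\|\boldsymbol{\mu}_x\|^2\mathbf{1}\mathbf{1}^\top$ into the same computation gives $\mathrm{tr}(\mat{\Sigma}_x)\,\mathrm{tr}(\mat{\Sigma}_y)/(n-1)$, because the mean term is annihilated by $\mat{H}\mathbf{1}=\mathbf{0}$. So neither isotropy nor zero mean is actually needed. Your trace expression is also exactly the centered-Gram form $\langle \mat{H}\mat{X}\mat{X}^\top\mat{H},\mat{H}\mat{Y}\mat{Y}^\top\mat{H}\rangle_F$ of the linear CKA numerator used in the appendix. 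This ties the null energy directly to the quantity CKA normalizes.
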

\noindent\emph{Proof.} See \Cref{app:theory_rmt}.

Since \gls{cka} is scaled by the normalized self-similarity terms, which each scale as $\mathcal{O}(\sqrt{d})$, the resulting null baseline for the metric is thus $\mathcal{O}(d/n)$ to leading order.
We refer to this $\mathcal{O}(d/n)$ effect as the \emph{width confounder}: at a fixed sample size $n$ it grows with the representation width $d$, and it persists even when two representations share a genuine signal (\Cref{prop:width_h1}). 

This aligns with insights from random matrix theory: in high-dimensional regimes ($d\sim n$), the null singular spectrum of interaction operators (after centering/whitening) concentrates into a non-trivial ``noise bulk'' whose upper edge depends on $d/n$ and preprocessing, rather than collapsing to zero \citep{wachter1978strong,muller2002random,livan2018introduction}.
Our framework estimates this null baseline directly via permutation, providing a metric- and pipeline-independent alternative to asymptotic formulas.

\paragraph{Neighborhood metrics follow a different regime.}
While spectral metrics have null baselines scaling as $\mathcal{O}(d/n)$, neighborhood-based metrics such as mutual $k$-NN exhibit different behavior, as they rely on set comparisons rather than interactions.

\begin{proposition}[Null baseline for neighborhood metrics]
\label{prop:null_mknn}
Assume the rows are i.i.d.\ with $\vect{x}_i$ and $\vect{y}_i$ independent, and that pairwise distances are almost surely distinct (\textit{e.g.}, under absolutely continuous distributions).
Then for any $k < n$,
\begin{equation}
    \E_{H_0}\bigg[\mathrm{mKNN}(\mat{X},\mat{Y})\bigg] \;=\; \frac{k}{n-1}.
\end{equation}
\end{proposition}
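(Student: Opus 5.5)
We work with the standard definition of mutual $k$-NN from \Cref{app:metrics}. Let $N_X(i)$ be the set of $k$ nearest neighbours of row $i$ among the other $n-1$ rows of $\mat{X}$, and define $N_Y(i)$ likewise for $\mat{Y}$. Then
\begin{equation*}
\mathrm{mKNN}(\mat{X},\mat{Y})=\frac{1}{n}\sum_{i=1}^n \frac{|N_X(i)\cap N_Y(i)|}{k}.
\end{equation*}
Both neighbourhood sets have exactly $k$ elements almost surely, since pairwise distances are a.s.\ distinct and so there are no ties. By linearity of expectation, and since the summands are identically distributed by the i.i.d.\ assumption, it suffices to show that for a fixed $i$
\begin{equation*}
\E\big[|N_X(i)\cap N_Y(i)|\big]=\frac{k^2}{n-1}.
\end{equation*}
Averaging over $i$ and dividing by $k$ then gives $k/(n-1)$.

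The key structural step is that $N_X(i)$ is a measurable function of $\mat{X}$ alone and $N_Y(i)$ is a measurable function of $\mat{Y}$ alone. Since the rows $(\vect{x}_j,\vect{y}_j)$ are i.i.d.\ with $\vect{x}_j$ independent of $\vect{y}_j$, the whole matrix $\mat{X}$ is independent of $\mat{Y}$. Hence $N_X(i)$ and $N_Y(i)$ are independent random subsets of $[n]\setminus\{i\}$.

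Next we show that $N_Y(i)$ is uniformly distributed over the $k$-subsets of $[n]\setminus\{i\}$. The rows $\vect{y}_j$ with $j\neq i$ are i.i.d., so their joint law is invariant under any permutation $\sigma$ of $[n]\setminus\{i\}$. Applying $\sigma$ maps $N_Y(i)$ to $\sigma(N_Y(i))$, so the law of $N_Y(i)$ is permutation invariant. A permutation-invariant law on $k$-subsets must be uniform. In particular, $\Prob(j\in N_Y(i))=k/(n-1)$ for every $j\neq i$.

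Writing the intersection size as a sum of indicators and conditioning on $\mat{X}$, independence gives
\begin{equation*}
\E\big[|N_X(i)\cap N_Y(i)|\big]=\E\Big[\sum_{j\in N_X(i)}\Prob\big(j\in N_Y(i)\mid \mat{X}\big)\Big]=\E\Big[\sum_{j\in N_X(i)}\frac{k}{n-1}\Big]=\frac{k^2}{n-1}.
\end{equation*}
The last equality uses $|N_X(i)|=k$ almost surely.

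The main subtlety is making sure the neighbourhoods are well defined with cardinality exactly $k$. This is what the a.s.\ distinct-distances hypothesis provides; without it, tie-breaking conventions could bias the count. The exchangeability argument also needs care, because row $i$ itself is a shared anchor in $\mat{Y}$. Invariance therefore applies only to permutations fixing $i$, and that is exactly why the denominator is $n-1$ rather than $n$. Note that only one side, here $\mat{Y}$, needs to be exchangeable; no distributional assumption on $\mat{X}$ beyond a.s.\ distinct distances is used.
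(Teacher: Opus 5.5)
Your proof is correct and follows essentially the same route as the paper: exchangeability of the candidate indices $[n]\setminus\{i\}$ makes the $k$-NN set a uniform $k$-subset, independence of $\mat{X}$ and $\mat{Y}$ makes $N_{\mat{X}}(i)$ and $N_{\mat{Y}}(i)$ independent, and the expected intersection size is $k^2/(n-1)$. The only difference is in the last step: the paper invokes the hypergeometric law using uniformity of both neighbor sets, whereas your indicator-and-conditioning computation correctly shows that uniformity of one side suffices.
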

\noindent\emph{Proof.} See \Cref{app:neighborhood-null}.

In particular, neighborhood metrics have null baselines scaling as $\mathcal{O}(k/n)$.

The difference in null baseline between spectral and neighborhood metrics is substantial:
\textit{(i)} The neighborhood scale $k$ can be fixed consistently across experiments, whereas the embedding dimension $d$ is determined by the model architecture, making it difficult to control in comparison studies.
\textit{(ii)} The neighborhood metrics are much less confounded since $k \ll d$ in typical settings.

Geometric metrics (\textit{e.g.}, Procrustes), similarly to the spectral metrics, are functions of all-pairs distances or inner products, which are the interaction quantities of \Cref{prop:nonzero_energy}.
They thus inherit a width-dependent $\mathcal{O}(d/n)$ baseline rather than the $\mathcal{O}(k/n)$ baseline of rank-based neighborhood metrics. 
We verify this empirically in \Cref{app:null-drift-full}.

\subsection{The depth confounder}
\label{sec:theory_evt}
\looseness=-1
A subtle yet pervasive issue is the comparison of \emph{selection-based} alignment summaries across models.
Let \mbox{$S_{\ell,\ell'} := s(\mat{X}^{(A)}_\ell, \mat{Y}^{(B)}_{\ell'})$} be the similarity between layer $\ell$ of model $A$ and layer $\ell'$ of model $B$.
It is common to summarize the similarity between two models by the maximum alignment score $T_{\max}=\max_{\ell,\ell'} S_{\ell,\ell'}$~\citep{schrimpf2018brain,pmlr-v235-huh24a,raghu2017svcca,pmlr-v97-kornblith19a}.
Let $M=L_A L_B$ be the number of layer pairs searched, where $L_A$ and $L_B$ are the depths of models $A$ and $B$.
Even under $H_0$, taking a maximum over $M$ comparisons inflates the reported score, a ``look-elsewhere'' effect.
This is an instance of the classical multiple comparisons problem~\citep{benjamini1995controlling,bonferroni1936teoria}: as $M$ increases, the probability that at least one null similarity exceeds any fixed threshold grows, inflating the expected maximum.
Consequently, when alignment is summarized via a max or top-$k$ statistic without correction, unrelated representations can exhibit spuriously high reported similarity, as the inflation depends on model depth, making raw summaries non-comparable across architectures.

\looseness=-1
Characterizing this inflation does not require independence across pairs.
It follows from a uniform right-tail bound.
Assume there exist a common mean $\mu\in\R$ and $\sigma>0$ such that the null fluctuations satisfy, for all $(\ell,\ell')$ and all $t\ge 0$,
\begin{equation}
    \Prob(S_{\ell,\ell'}-\mu \ge t)\le \exp\!\left(-\frac{t^2}{2\sigma^2}\right).
    \label{eq:subg_tail_main}
\end{equation}
For bounded similarities $S_{\ell,\ell'}\in[s_{\min},s_{\max}]$, Hoeffding's inequality implies a sub-Gaussian right-tail bound of the form \Cref{eq:subg_tail_main} with $\sigma\le (s_{\max}-s_{\min})/2$.
This covers many common bounded metrics (\textit{e.g.}, \gls{cka}/\gls{rsa}/\gls{mknn}).
Crucially, only the right tail is needed for bounding the maximum.
Then a union bound gives 
\begin{equation}
    \Prob(T_{\max}-\mu \ge t)\le M\exp\!\left(-\frac{t^2}{2\sigma^2}\right),
\end{equation}
and consequently for a constant $C$ 
\begin{equation}
    \E_{H_0}\left[T_{\max}\right] \;\le\; \mu + C\,\sigma \sqrt{\log M}.
    \label{eq:evt_scaling}
\end{equation}
\noindent\emph{Proof.} See \Cref{app:theory_evt}.

This creates a depth confounder. 
Deeper models (larger $M = L_A L_B$) can attain higher raw ``max-alignment'' scores purely because of a larger search space.
Correlations across neighboring layers reduce the \emph{effective} number of comparisons, but the inflation remains monotone in the search space size in typical workflows.
Therefore, raw scaling plots of $T_{\max}$ (or top-$k$ summaries) are not comparable across architectures unless the \emph{selection step itself} is calibrated.

The two confounders are not independent but hierarchical: the depth confounder operates on top of the width confounder. 
The width confounder gives each individual layer pair a positive chance similarity, and deeper models supply a larger pool of such chance similarities, so the selection maximum drifts higher. 

%% file: sections/Method.tex
To overcome the issues of the width and depth confounders, we introduce the \textit{null-calibration for representational similarity.}
The key idea is to compare observed similarity scores against an empirical null distribution obtained by permuting sample correspondences, thereby establishing a principled zero point that accounts for finite-sample, high-dimensional artifacts.

\subsection{Null-calibrated similarity}
\label{sec:scalar-calibration}

We propose \textit{null-calibrated} similarity measures to correct for width and depth confounders by transforming raw similarity scores into an effect size with a principled zero point.

Given representations $\mat{X}\in\R^{n\times d_x}$ and $\mat{Y}\in\R^{n\times d_y}$ aligned by rows, we operationalize the null hypothesis $H_0$ (no relationship beyond marginal statistics) by permuting sample correspondences. 
For permutations $\pi_k\in\Pi_n$ drawn i.i.d.\ uniformly from $\Pi_n$ and independently of $(\mat{X},\mat{Y})$, we form null scores
\begin{equation}
    s^{(k)} = s(\mat{X}, \pi_k(\mat{Y})), \qquad k=1,\dots,K.
\end{equation}
Let $s_{\mathrm{obs}} := s(\mat{X},\mat{Y})$ denote the observed score.
Let $s_{(1)} \le s_{(2)} \le \cdots \le s_{(K+1)}$ denote the order statistics of the \emph{combined} multiset $\{s_{\mathrm{obs}}, s^{(1)},\dots,s^{(K)}\}$ (with ties allowed).
We define a right-tail rank-based critical value:
\begin{equation}
    \tau_\alpha \;:=\; s_{(\lceil (1-\alpha)(K+1)\rceil)},
    \label{eq:get_tau_alpha}
\end{equation}
where $\lceil (1-\alpha)(K+1)\rceil$ is the $(1-\alpha)$-quantile of the $(K+1)$-sized multiset and the empirical right-tail $p$-value:
\begin{equation}
    p \;=\; \frac{1 + \#\{k\in\{1,\dots,K\} : s^{(k)} \ge s_{\mathrm{obs}}\}}{K+1}.
    \label{eq:pvalue_scalar}
\end{equation}

The critical value $\tau_\alpha$ defines a robust zero point: values below $\tau_\alpha$ are typical under $H_0$ at level $\alpha$, while $p$ provides an evidence measure that can be combined with multiple-testing correction when many comparisons are performed.


The proposed calibration framework relies on \emph{randomization} (permutation) to construct a null distribution for any similarity statistic.
This yields finite-sample guarantees under an exchangeability condition (\Cref{assm:exchangeability}), and it implies useful invariances that make calibrated scores comparable across metrics and implementations.

The permutation $p$-value in \Cref{eq:pvalue_scalar} is \emph{super-uniform} under $H_0$ (\textit{i.e.}, $\Prob_{H_0}(p \le \alpha) \le \alpha$ for all $\alpha \in [0,1]$), a standard consequence of randomization inference \citep{nichols2002permutation,phipson2010pvalues,good2005permutation} (see \Cref{app:perm_validity} for formal definitions and proofs).

\begin{corollary}[Type-I control for calibrated scores]
\label{cor:gating_type1}
Let $s_{\mathrm{obs}}=s(\mat{X},\mat{Y})$ and $s^{(k)}=s(\mat{X},\pi_k(\mat{Y}))$ for $k=1,\dots,K$.
Define the add-one permutation $p$-value $p$ as in \Cref{eq:pvalue_scalar}, and equivalently define the rank-based critical value
$\tau_\alpha := s_{(\lceil (1-\alpha)(K+1)\rceil)}$ from the sorted combined set $\{s_{\mathrm{obs}}, s^{(1)},\dots,s^{(K)}\}$.
Under \Cref{assm:exchangeability},
\begin{equation}
    \Prob_{H_0}\big(p \le \alpha\big) \le \alpha
    \quad\text{and hence}\quad
    \Prob_{H_0}\big(s_{\mathrm{obs}} > \tau_\alpha\big) \le \alpha,
\end{equation}
so the gating rule ``$s_{\mathrm{cal}}>0$'' (where $s_{\mathrm{cal}}$ is the calibrated score defined in \Cref{eq:scalar_gate_rescale}, which implies $p\le \alpha$) is a finite-sample $\alpha$-level declaration of similarity above chance.
\end{corollary}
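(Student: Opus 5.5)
The argument has two parts: super-uniformity of the add-one $p$-value via an exchangeability/rank argument, and then the observation that the event $\{s_{\mathrm{obs}}>\tau_\alpha\}$ is contained in $\{p\le\alpha\}$.

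\emph{Exchangeability of the scores.} Set $\pi_0=\mathrm{id}$ and let $\sigma$ be a uniform permutation independent of everything else. Under \Cref{assm:exchangeability}, $(\mat{X},\mat{Y})\overset{d}{=}(\mat{X},\sigma(\mat{Y}))$. Replacing $\mat{Y}$ by $\sigma(\mat{Y})$ sends the collection $\{\pi_k(\mat{Y})\}_{k=0}^K$ to $\{\pi_k\sigma(\mat{Y})\}_{k=0}^K$. The elements $\sigma,\pi_1\sigma,\dots,\pi_K\sigma$ are i.i.d.\ uniform on $\Pi_n$, because right-multiplication by a uniform group element gives uniform elements independent of one another. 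Hence the vector $(s_{\mathrm{obs}},s^{(1)},\dots,s^{(K)})$ has the same law as $(s(\mat{X},\rho_0(\mat{Y})),\dots,s(\mat{X},\rho_K(\mat{Y})))$ with $\rho_0,\dots,\rho_K$ i.i.d.\ uniform. Conditional on $(\mat{X},\mat{Y})$, this vector is i.i.d., so it is exchangeable. This group-theoretic step is the main point requiring care; everything after it is counting.

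\emph{Super-uniformity with ties.} Let $R=\#\{j\in\{0,\dots,K\}: s^{(j)}\ge s^{(0)}\}$, so that $p=R/(K+1)$. For exchangeable scores, define $R_j$ analogously for each $j$. We have $\Prob(R_0\le m)=\frac{1}{K+1}\E\big[\#\{j: R_j\le m\}\big]$.

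I claim $\#\{j: R_j\le m\}\le m$ deterministically. The indices with $R_j\le m$ are those whose values are weakly exceeded by at most $m$ scores. If $J$ is this set and $j^*\in J$ has the smallest value, then every element of $J$ is $\ge s^{(j^*)}$, so $|J|\le R_{j^*}\le m$.

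Taking $m=\lfloor\alpha(K+1)\rfloor$ gives $\Prob(p\le\alpha)\le m/(K+1)\le\alpha$. Ties only make $R_j$ larger, so no randomization is needed.

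\emph{Threshold form.} I next show that $s_{\mathrm{obs}}>\tau_\alpha$ implies $p\le\alpha$. Let $r=\lceil(1-\alpha)(K+1)\rceil$. If $s_{\mathrm{obs}}>s_{(r)}$, then at least $r$ elements of the combined multiset are strictly below $s_{\mathrm{obs}}$. Therefore at most $K+1-r$ elements, including $s_{\mathrm{obs}}$ itself, are $\ge s_{\mathrm{obs}}$. This gives $p\le (K+1-r)/(K+1)\le\alpha$, since $r\ge(1-\alpha)(K+1)$.

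It follows that $\Prob(s_{\mathrm{obs}}>\tau_\alpha)\le\Prob(p\le\alpha)\le\alpha$. Since by construction the calibrated score in \Cref{eq:scalar_gate_rescale} is positive only when $s_{\mathrm{obs}}>\tau_\alpha$, the gating statement follows.
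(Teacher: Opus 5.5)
Your proposal is correct and follows the same route as the paper. Both arguments derive super-uniformity of the add-one $p$-value from exchangeability of $(s_{\mathrm{obs}},s^{(1)},\dots,s^{(K)})$, then use the inclusion $\{s_{\mathrm{obs}}>\tau_\alpha\}\subseteq\{p\le\alpha\}$; your bound $K+1-r$ is exactly the paper's $\lfloor\alpha(K+1)\rfloor$. The difference is only in rigor: you actually prove the two ingredients that the paper either asserts or defers to Phipson and Smyth. These are exchangeability, via the auxiliary uniform $\sigma$ and group translation, and tie-robust super-uniformity, via the deterministic count $\#\{j:R_j\le m\}\le m$, so your version is self-contained.
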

\noindent\emph{Proof.} Follows directly from \Cref{lem:perm_super_uniform}; see \Cref{app:perm_validity}.

\paragraph{Calibrated score (scalar case).}
While $p$-values and null percentiles are rank-based and therefore invariant under monotone transformations of the raw score (\Cref{prop:monotone_invariance}; see \Cref{app:monotone}), effect sizes serve a complementary purpose: they quantify \emph{how much} similarity exceeds chance on an interpretable scale.
The calibrated score achieves this by rescaling the excess over the null threshold $\tau_\alpha$ to the interval $[0,1]$.
This rescaling is not monotone-invariant, by design.
A purely rank-based calibration would be equivalent to a score shift and would be unable to correct for the scale-dependent null baselines identified in \Cref{sec:theory}.
The calibrated score instead adapts to the actual null distribution, providing a meaningful zero point.

For similarity metrics with a known maximum (upper bound) $s_{\max}$ (often $s_{\max}=1$), we define a max-preserving calibrated score
\begin{equation}
    s_{\mathrm{cal}} =
    \max\!\left(
    \frac{s_{\mathrm{obs}} - \tau_\alpha}{s_{\max} - \tau_\alpha},
    0\right).
    \label{eq:scalar_gate_rescale}
\end{equation}
This calibrated score depends on the chosen level $\alpha$ through $\tau_\alpha$ (\Cref{eq:get_tau_alpha}).
We therefore also report the corresponding permutation $p$-value and/or null percentile for an $\alpha$-free summary.
This score satisfies $s_{\mathrm{cal}}=0$ whenever $s_{\mathrm{obs}} \le \tau_\alpha$ (\textit{i.e.}, below the estimated right-tail critical value of the permutation null), and $s_{\mathrm{cal}}=1$ when $s_{\mathrm{obs}}=s_{\max}$ (\textit{i.e.}, perfect similarity remains $1$). 
When $s_{\max}$ is unknown, or the metric is unbounded, we default to the unnormalized effect size $[s-\tau_\alpha]_+ = \max(s-\tau_\alpha,0)$.

\subsection{Aggregation-aware null-calibration}
\label{sec:selection_bias}

To analyze the similarity between two models $A$ and $B$ with depths $L_A$ and $L_B$, a common approach is to compute a layer-by-layer similarity matrix $\mat{S} \in \R^{L_A \times L_B}$ by evaluating a similarity score for every pair of layers:
\begin{equation}
S_{\ell,\ell'} = s\!\left(\mat{X}^{(A)}_\ell, \mat{Y}^{(B)}_{\ell'}\right),
\end{equation}
\looseness=-1
where $\mat{X}^{(A)}_\ell \in \R^{n \times d_\ell}$ and $\mat{Y}^{(B)}_{\ell'} \in \R^{n \times d_{\ell'}}$ are the representations of models $A$ and $B$ at layers $\ell$ and $\ell'$ respectively, evaluated on $n$ samples, and $s(\cdot,\cdot)$ is a similarity metric.
A common practice is then to summarize $\mat{S}$ by a \emph{selection-based} aggregation operator, such as taking the maximum.
These summaries are attractive because they support statements such as ``there exists a layer in $A$ that matches some layer in $B$'' or ``each layer of $A$ best matches a layer in $B$''.
However, selection introduces a statistical effect: even under the null hypothesis of no relationship between representations, selection-based summaries are systematically inflated.

As analyzed in \Cref{sec:theory_evt}, this inflation grows with the number of layer pairs and makes na\"ive post-selection $p$-values anti-conservative.
Our aggregation-aware calibration addresses this by calibrating the \emph{reported} statistic directly: the null distribution must match the \emph{entire} analysis pipeline. 
Let the aggregate score be $T(\mat{S})$ (\textit{e.g.}, a maximum), then the appropriate null is the distribution of $T(\mat{S})$ under a valid null transformation (\textit{e.g.}, permuting sample correspondences).
We therefore define an aggregation-aware permutation null.
When $T$ is the maximum over layer pairs, this recovers the classical ``maxT'' procedure of \citet{westfall1993resampling}. 
Our formulation generalizes it to arbitrary selection-based aggregates (\textit{e.g.}, top-$k$) and to any similarity metric.

\paragraph{Consistency of permutations across layers.}
For each draw $\pi_k\in\Pi_n$, we apply the \emph{same} sample permutation to all layers of model $B$ and define
\begin{gather}
S^{(k)}_{\ell,\ell'} := s\!\left(\mat{X}^{(A)}_\ell, \pi_k\!\left(\mat{Y}^{(B)}_{\ell'}\right)\right),
\\
\ell=1,\dots,L_A,\quad \ell'=1,\dots,L_B, \nonumber
\end{gather}
then compute $T^{(k)} := T(\mat{S}^{(k)})$.
Let $T_{\mathrm{obs}} := T(\mat{S})$ denote the observed aggregate.
Let $T_{(1)}\le\cdots\le T_{(K+1)}$ denote the order statistics of the combined set $\{T_{\mathrm{obs}},T^{(1)},\dots,T^{(K)}\}$ (with ties allowed). 
We define
\begin{equation}
    \tau_\alpha^{\mathrm{agg}} := T_{(\lceil (1-\alpha)(K+1)\rceil)},
\end{equation}
where $\lceil (1-\alpha)(K+1)\rceil$ is the $(1-\alpha)$-quantile of the $(K+1)$-sized multiset.
We report the right-tail permutation $p$-value
\begin{equation}
p_{\mathrm{agg}} = \frac{1 + \#\{k \in \{1, \ldots, K\} : T^{(k)} \ge T_{\mathrm{obs}}\}}{K+1},
\label{eq:pvalue_agg}
\end{equation}
By the same exchangeability argument as for scalar calibration, $p_{\mathrm{agg}}$ is super-uniform under $H_0$ (see \Cref{prop:agg_validity}).

\paragraph{Calibrated score (aggregate case).}
For bounded similarities with maximum $s_{\max}$ (often $s_{\max}=1$), we report a max-preserving calibrated aggregate
\begin{equation}
T_{\mathrm{cal}} =
\max\!\left(
\frac{T_{\mathrm{obs}} - \tau_\alpha^{\mathrm{agg}}}{s_{\max} - \tau_\alpha^{\mathrm{agg}}},
\, 0
\right).
\label{eq:max_preserving_gate}
\end{equation}
This score satisfies $T_{\mathrm{cal}}=0$ when $T_{\mathrm{obs}} \le \tau_\alpha^{\mathrm{agg}}$ and $T_{\mathrm{cal}}=1$ when $T_{\mathrm{obs}}=s_{\max}$.
As above, $T_{\mathrm{cal}}$ depends on $\alpha$ via $\tau_\alpha^{\mathrm{agg}}$; we therefore report both $T_{\mathrm{cal}}$ (magnitude above null) and $p_{\mathrm{agg}}$ (evidence against null), applying multiplicity correction \citep{holm1979simple,benjamini1995controlling} when many model pairs are evaluated.


\subsection{Summary}
\label{sec:method_summary}

To compute a calibrated similarity score:
\textit{(i)}~fix a significance level $\alpha$ (\textit{e.g.}, $\alpha=0.05$);
\textit{(ii)}~generate $K$ null scores by permuting sample correspondences;
\textit{(iii)}~compute critical value $\tau$ as the $\lceil (1-\alpha)(K+1) \rceil$-th order statistic of the combined set (observed + null scores);
\textit{(iv)}~return calibrated score, either $s_{\mathrm{cal}}$ or $T_{\mathrm{cal}}$.

Use \textit{scalar calibration} (\Cref{sec:scalar-calibration}) when comparing a single pair of representations.
Use \textit{aggregation-aware calibration} (\Cref{sec:selection_bias}) when reporting a summary statistic (\textit{e.g.}, maximum) over multiple layer pairs.
\Cref{app:implementation} provides pseudocode for both procedures.

%% file: sections/Experiment.tex
\begin{figure}[b]
  \vspace{-5mm}
  \centering
  \includegraphics[width=\linewidth]{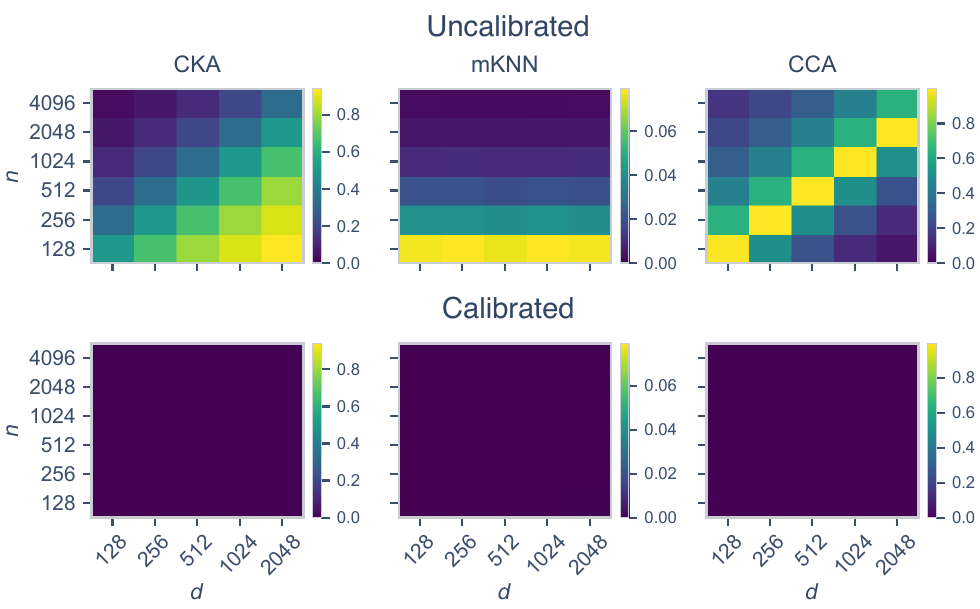}
  \caption{
  \textbf{Calibration eliminates spurious similarity across metrics.}
  Raw scores (top) drift with $d/n$; calibrated scores (bottom) collapse to zero.
  Results for heavy-tailed distributions and additional metrics are in \Cref{app:null-drift-full}. 
  }
  \label{fig:null-drift-aggregate}
  \vspace{-5mm}
\end{figure}

\looseness=-1
We quantify the effects of the width and depth confounders in controlled synthetic experiments and show that our calibration framework effectively removes them. 
We then revisit the Platonic Representation Hypothesis using our calibration framework, assessing which convergence trends remain robust after controlling for these confounding factors.

\subsection{Null-calibration removes width confounder}
\label{sec:exp-calibration-works}

We validate that our calibration eliminates width-related inflation of similarity across metrics, regimes, and noise distributions, without metric-specific derivations.

We design controlled synthetic experiments as follows.
Under $H_0$, we draw $\mat{X}, \mat{Y} \in \R^{n \times d}$ independently from Gaussian and heavy-tailed (Student-$t$, Laplace) distributions.
We vary the number of samples $n \in \{128, 256, 512, 1024, 2048, 4096\}$ and the dimension $d \in \{128, 256, 512, 1024, 2048\}$.
Under $H_1$, we inject a shared low-rank signal component and vary the signal-to-noise ratio.
We evaluate representative metrics spanning three families.
For spectral similarity, we use linear and RBF \gls{cka}, as well as \gls{cca}/SVCCA/PWCCA; for neighborhood similarity, we use \gls{mknn} (with $k=10$); and for geometric similarities, we use RSA and Procrustes. 
\Cref{fig:null-drift-aggregate} reports a subset of these metrics for readability; additional metrics are reported in \Cref{app:null-drift-full}.
For calibration, we use $K=200$ permutations with $\alpha=0.05$.

Under $H_0$, uncalibrated scores are systematically inflated, while our calibrated scores stay at zero across settings (\Cref{fig:null-drift-aggregate}).
This confirms that the similarity scores of wider models can arise purely from high-dimensional finite-sample effects, and our calibration removes this spurious baseline.
Importantly, the magnitude of the null baseline is metric-dependent, consistent with our theory: CKA's baseline scales as $\mathcal{O}(d/n)$ (\Cref{prop:nonzero_energy}), while \gls{mknn}'s baseline scales as $\mathcal{O}(k/n)$ (\Cref{prop:null_mknn}). 
Intuitively, \gls{mknn} compares local neighborhood overlap at a fixed $k$, thus only comparing relationships instead of local distances, making its null baseline insensitive to representation width $d$, which explains the order-of-magnitude gap observed in raw scores.
The same pattern holds for heavy-tailed noise (\Cref{app:null-drift-full}).

The same width inflation arises under a genuine shared signal, and calibration corrects it there as well, as we show analytically (\Cref{prop:width_h1}) and on real networks at fixed $n$ (\Cref{app:width-h1-real}).

Next, we verify the statistical guarantees of our empirical null calibration.
For Type-I error control, rejection rates stay at or below the nominal $\alpha=0.05$ across $(n, d/n)$ configurations (\Cref{fig:statistical-guarantees}a). 
Crucially, our calibration does not sacrifice sensitivity to real alignment: detection rates increase rapidly with signal strength (\Cref{fig:statistical-guarantees}b).
Overall, our calibration preserves signal structure: in the high-signal regime, raw and calibrated scores show the same pattern, while in the low-signal regime, calibration correctly gates scores to zero (\Cref{app:snr-heatmaps}).

Furthermore, we verify that our empirical calibration closely matches existing analytical bias corrections for \gls{cka}~\citep{murphy2024biasedcka}, recovering the width correction without metric-specific derivation (\Cref{app:analytical-comparison}).

Additionally, we perform ablations on different noise distributions used in the synthetic experiments (\Cref{app:phase-diagrams}), different calibration approaches (\Cref{app:per-metric}), and an ablation on the influence of the number of permutations $K$ used for calibration (\Cref{app:perm-budget}).  \vspace{-2mm}

\begin{figure}[h]
  \centering
  \includegraphics[width=\linewidth]{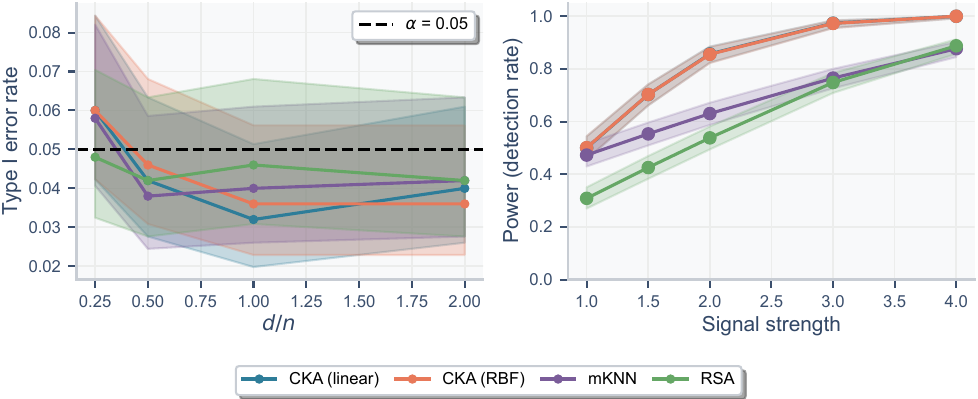}
  \caption{
  \textbf{Statistical guarantees.}
  (Left)~Type-I error stays at or below $\alpha$ across configurations.
  (Right)~Power increases rapidly with signal strength; calibration does not sacrifice sensitivity.
  }
  \label{fig:statistical-guarantees}
  \vspace{-4mm}
\end{figure}

\begin{figure*}[t]
  \centering
\begin{subfigure}[t]{0.49\linewidth}
    \includegraphics[width=\linewidth]{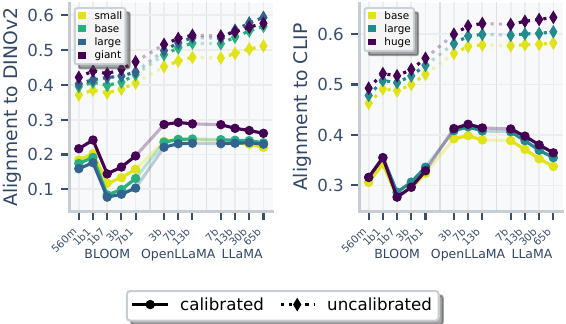}
    \caption{\Gls{cka} RBF: Global spectral alignment.}
  \end{subfigure}
  \hfill
  \begin{subfigure}[t]{0.49\linewidth}
    \includegraphics[width=\linewidth]{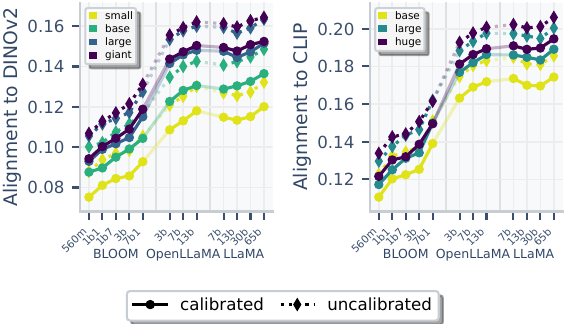}
    \caption{\gls{mknn}: Local neighborhood overlap.}
  \end{subfigure}
  \caption{
  \textbf{Revisiting the Platonic Representation Hypothesis.}
  Models are ranked according to their language performance \citep{pmlr-v235-huh24a}.
  Solid lines connect the models within the same family, while semi-transparent lines connect the models across different families.
  (a)~Global spectral metrics lose their convergence trend; calibrated scores show no systematic increase with scale.
  (b)~Local neighborhood metrics keep their trend even after calibration.
  Full results for all vision families and metrics in \Cref{app:prh-full}.
  }
  \label{fig:prh-alignment}
  \vspace{-4mm}
\end{figure*}

\subsection{Null-calibration removes depth confounder}
\label{sec:exp-aggregation}

We validate that our aggregation-aware null-calibration eliminates the depth confounder.
To build a controlled synthetic setting, we construct two synthetic models, $A$ and $B$, each with $L$ layers. 
Under $H_0$, we sample layer representations $\{\mat{X}_\ell\}_{\ell=1}^L$ and $\{\mat{Y}_{\ell'}\}_{\ell'=1}^L$, where each $\mat{X}_\ell, \mat{Y}_{\ell'} \in \R^{n\times d}$ has i.i.d.\ $\mathcal{N}(0,1)$ entries (independent across layers and between models), using $d/n=8$ to match the upper range of the \acrlong{prh} setting.
We then compute the layerwise similarity matrix $S_{\ell,\ell'} = \operatorname{CKA}(\mat{X}_\ell, \mat{Y}_{\ell'})$ and summarize it with standard aggregates. 

The uncalibrated max-aggregated scores inflate with layer count even under $H_0$ (\Cref{fig:agg-calibration}): raw max-scores are systematically higher at $L=128$ than at $L=2$, despite no true signal.
Our aggregation-aware calibration eliminates this bias: calibrated aggregates remain stable regardless of depth.
We further show that naively calibrating each scalar comparison still leads to inflation, highlighting the importance of calibrating the final statistic.
Furthermore, since deeper models tend to be wider as well, raw comparisons are doubly confounded.  \vspace{-2mm}

\begin{figure}[H]
  \centering
  \includegraphics[width=\linewidth]{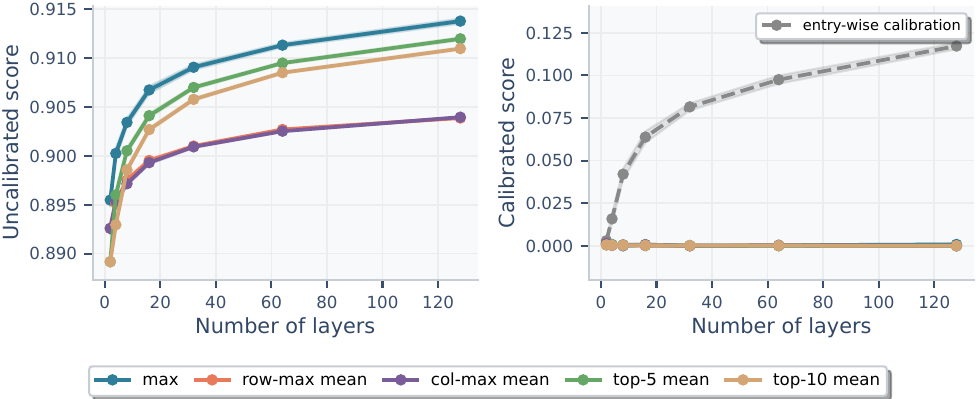}
  \caption{
  \textbf{Aggregation-aware calibration removes depth confounding.}
  Raw max-aggregates of linear \gls{cka} scores inflate with layer count under the null; calibrated aggregates are stable and show that naive entry-wise calibration still leads to inflation.
  }
  \vspace{-2mm}
  \label{fig:agg-calibration}
\end{figure}

\subsection{Revisiting the Platonic Representation Hypothesis}
\label{sec:prh}

A central claim behind the Platonic Representation Hypothesis is that, as models become more capable, their representations begin to \emph{converge} across modalities.
We revisit this claim through our calibration framework to determine whether the observed alignment reflects genuine shared representation structure or instead arises from width and depth confounders.

\looseness=-1
We follow the experimental protocol of \citet{pmlr-v235-huh24a} using $n = 1024$ image--text pairs (WIT; \citet{srinivasan2021wit}) and embeddings from three language model families (BLOOM, OpenLLaMA, LLaMA) and five vision model families (ImageNet-21K, MAE, DINOv2, CLIP, CLIP-finetuned) across multiple scales.
As in \citet{pmlr-v235-huh24a}, we extract per-layer representations using the class token for vision models and mean pooling over tokens for language models.
This yields 204 vision--language model pairs spanning $d/n \in [0.19, 8]$.
For each pair, we compute layer-wise similarity and report the maximum across layers, as in the original work.
We evaluate both global spectral metrics (\gls{cka} linear/RBF) and local neighborhood metrics (\gls{mknn}, cycle-$k$NN, CKNNA). 
Following \citet{pmlr-v235-huh24a}, we evaluate \gls{mknn}, cycle-$k$NN, and CKNNA with $k=10$.
We further apply Benjamini-Hochberg FDR correction~\citep{benjamini1995controlling} to control for multiple comparisons across model pairs.

For the \emph{global} similarity, we find that uncalibrated \gls{cka} scores increase with model scale (dotted lines in \Cref{fig:prh-alignment}a), reproducing the trend interpreted as evidence of cross-modal convergence~\citep{pmlr-v235-huh24a}. 
However, this trend disappears after our calibration (solid lines): calibrated \gls{cka} shows no systematic increase with model size.
This indicates that global convergence in uncalibrated \gls{cka} is largely attributable to width and depth confounders rather than a genuine increase in representational similarity.
The same holds for the other global metrics, including shape- and geometry-based ones: \gls{svcca}, the RV coefficient, and Procrustes distance all lose their apparent scaling trend after calibration (\Cref{app:prh-full}).

In contrast, for the \emph{local} similarity, evidence of cross-modal convergence remains strong for neighborhood-based metrics even under our calibration (\Cref{fig:prh-alignment}b).
The same qualitative conclusion holds for other neighborhood-based measures (cycle-$k$NN and CKNNA; \Cref{app:prh-full}) and different choices of $\alpha$ (\Cref{app:alpha-sensitivity}).
Further analysis (\Cref{app:locality-analysis}) reveals that models converge in local neighborhood structure: models increasingly agree on which points are neighbors, but do not agree on the pairwise distances, since CKA-RBF with a small bandwidth shows no alignment after calibration.

\gls{cka} and \gls{mknn} capture different invariances, so their raw magnitudes are not directly comparable. We therefore compare, for each metric separately, how strongly its score tracks language-model capability before and after calibration (its Pearson correlation with the model ranking). After calibration this correlation collapses for global metrics (for linear \gls{cka} it falls from $0.86$ to $0.45$, and for Procrustes distance from $0.89$ to $0.39$) but is essentially unchanged for local ones (\gls{mknn} stays near $0.85$ and CKNNA near $0.87$).
\Cref{app:prh-full} reports all metrics.

To test whether these findings generalize beyond images and text, we extend our analysis to video--language alignment following \citet{zhu2025dynamic}.
We compare video encoders (VideoMAE small/base/large/huge, fine-tuned on Kinetics) against the same language model families.
Consistent with our previous findings, the global similarity (\gls{cka}) shows no trend with model capacity (\Cref{fig:v2t}). 
In contrast, for local similarity (\gls{mknn}), a clear scaling trend emerges with VideoMAE-Large/Huge, whereas smaller video encoders appear to act as a bottleneck, limiting alignment regardless of language model size.
This confirms that local neighborhood convergence extends to video--language alignment, provided that representations are sufficiently powerful.
\Cref{app:video-language} further compares non-finetuned VideoMAE versions and a variety of image models at the frame level on the same dataset, showing the same trend.

\begin{figure}[H]
  \centering
  \includegraphics[width=\linewidth]{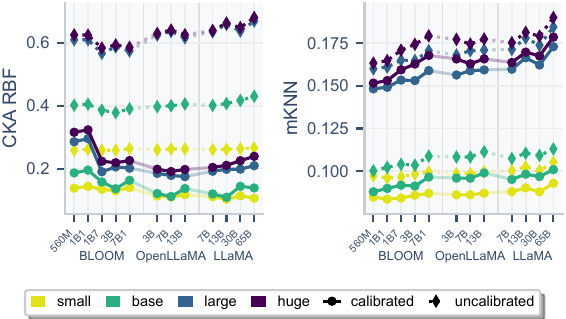}
  \vspace{-1.5em}
  \caption{
  \textbf{Video--language alignment.}
  Extending the \acrlong{prh} analysis to video encoders (VideoMAE small/base/large/huge) yields the same pattern: calibrated \gls{cka} drops substantially while \gls{mknn} retains alignment.
  }
  \label{fig:v2t}
  \vspace{-4mm}
\end{figure}

\looseness=-1
Taken together, these results suggest a refined version of the Platonic Representation Hypothesis.
After calibration, we find little evidence that representations converge in global spectral structure as models scale, at least under the considered setting.
What reliably persists is local geometric alignment: different models preserve similar neighborhood relationships among inputs.
We therefore propose the alternative \textbf{Aristotelian Representation Hypothesis}: \emph{As models become capable, their representations converge to shared local neighborhood relationships}.

%% file: sections/Conclusion.tex
Representational similarity metrics are widely used to study learned features, but their interpretation is systematically distorted by two artifacts: width-dependent null baselines and depth-dependent selection inflation. 
We introduced a unified null-calibration framework that corrects both, turning similarity scores into effect sizes with principled zero points and valid $p$-values. 
Applying our framework to the Platonic Representation Hypothesis reveals that previously reported global spectral convergence is largely confounded by width and depth, whereas local neighborhood alignment remains significant, motivating an Aristotelian Representation Hypothesis.

\paragraph{Relationship to the Platonic hypothesis.}
The Aristotelian hypothesis refines rather than refutes the Platonic one. Global convergence implies local convergence, because matching the full geometry preserves neighborhoods, but the converse does not hold.
The Platonic hypothesis therefore implies the Aristotelian one, which is a weaker criterion.
Our experiments support this local form and, after calibration, find little evidence for the global one. 
Prior reports of convergence thus remain compatible with our analysis, and the Aristotelian hypothesis offers a more precise lens on what converges across models and modalities.

\paragraph{Limitations and outlook.}
Our conclusions come with two caveats. 
First, representational similarity has no ground-truth scale, so we report the presence or absence of calibrated evidence for convergence rather than proving it. 
Second, our guarantees assume exchangeability, so grouped or clustered samples require restricted permutations that preserve their dependence structure. 
Why representations converge in local neighborhoods but not in global geometry remains the key open question.

%% file: sections/Appendix.tex
\section{Existing calibration approaches for representational similarity metrics}
\label{app:similarity-metrics}

\begin{table}[ht]
\centering
\caption{
Comparison of prior works.
\textbf{Y}=yes, \textbf{N}=no, \textbf{P}=partial/indirect.
``Debias'' indicates an explicit null correction of the reported similarity.
``Bounded'' indicates whether the corrected score preserves an interpretable upper bound (\textit{e.g.}, 1 for perfect alignment).
``Agg-aware'' indicates calibration of selection-based aggregates (\textit{e.g.}, max over layer pairs).
}
\label{tab:related_work_comparison}
\small
\setlength{\tabcolsep}{5pt}
\resizebox{0.7\linewidth}{!}{
\begin{tabular}{l l c c c}
\toprule
\textbf{Ref} & \textbf{Metric(s)} & \textbf{Debias?} & \textbf{Bounded?} & \textbf{Agg-aware?} \\
\midrule
\citet{murphy2024biasedcka} & \acrshort{cka} & Y & N & N \\
\citet{chun2025sparsecka} & \acrshort{cka} & Y & N & N \\
\citet{cui2022deconfounded} & \acrshort{rsa}/\acrshort{cka} & P & N & N \\
\citet{diedrichsen2020wuc} & \acrshort{rsa} (cv/\acrshort{wuc}) & Y & P & N \\
\citet{cai2019brsa} & \acrshort{rsa} (Bayes) & P & N & N \\
\citet{smilde2009rv} & RV / adj.\ RV & Y & N & N \\
\midrule
\textbf{Ours} & \textbf{Any bounded metric} & \textbf{Y} & \textbf{Y} & \textbf{Y} \\
\bottomrule
\end{tabular}
}
\end{table}

\section{Metrics and score definitions}
\label{app:metrics}

This appendix gives the definitions of the similarity metrics $s(\mat{X},\mat{Y})$ used throughout the paper.
The main text focuses on the calibration procedure (\Cref{sec:scalar-calibration,sec:selection_bias}).
Here we provide concrete instantiations of the metrics referenced in \Cref{sec:setup} and \Cref{sec:exp-setup}.

\subsection{Preprocessing and basic notation}
\label{app:metrics_preproc}

Let $\mat{X}\in\R^{n\times d_x}$ and $\mat{Y}\in\R^{n\times d_y}$ denote row-aligned representations evaluated on the same $n$ inputs.
We use the centering matrix
\begin{equation}
    \mat{H} \;=\; \mat{I}_n - \frac{1}{n}\mathbbm{1}_n\mathbbm{1}_n^\top,
    \label{eq:centering_matrix}
\end{equation}
where $\mat{I}_n\in\R^{n\times n}$ is the identity matrix and $\mathbbm{1}_n\in\R^n$ is the all-ones vector.
We define row-centered representations $\mat{X}_c=\mat{H}\mat{X}$ and $\mat{Y}_c=\mat{H}\mat{Y}$.
Unless stated otherwise, similarities are computed on centered representations.

\subsection{Raw similarity metrics}
\label{app:metrics_raw}

This section provides formal definitions of the similarity metrics used throughout the paper.
In the main text, we primarily use CKA (linear and RBF kernel)~\citep{pmlr-v97-kornblith19a}, RSA~\citep{kriegeskorte2008rsa}, and mutual $k$-NN~\citep{pmlr-v235-huh24a} as representative metrics from the spectral, geometric, and neighborhood families, respectively.
Additional metrics (SVCCA, PWCCA, cycle-$k$NN, CKNNA, RV coefficient, Procrustes) are included for completeness and used in supplementary experiments.

\subsubsection{Spectral metrics}

\paragraph{Linear \acrfull*{cka}.}
\label{app:metrics_cka_linear}

Linear \gls{cka}~\citep{pmlr-v97-kornblith19a} can be written as a normalized Frobenius energy of the \emph{sample cross-covariance} operator.
With $\mat{X}_c,\mat{Y}_c$ as above, define the sample (cross-)covariances
\begin{equation}
    \widetilde{\mat{\Sigma}}_{XX}:=\frac{1}{n-1}\mat{X}_c^\top\mat{X}_c,\qquad
    \widetilde{\mat{\Sigma}}_{YY}:=\frac{1}{n-1}\mat{Y}_c^\top\mat{Y}_c,\qquad
    \widetilde{\mat{C}}:=\widetilde{\mat{\Sigma}}_{XY}:=\frac{1}{n-1}\mat{X}_c^\top\mat{Y}_c.
\end{equation}
The biased linear \gls{hsic} energy equals $\|\widetilde{\mat{C}}\|_F^2$.
The commonly used linear \gls{cka} normalization can be written as
\begin{equation}
    \mathrm{CKA}_{\mathrm{lin}}(\mat{X},\mat{Y})
    \;=\;
    \frac{\|\widetilde{\mat{C}}\|_F^2}{\|\widetilde{\mat{\Sigma}}_{XX}\|_F\,\|\widetilde{\mat{\Sigma}}_{YY}\|_F}
    \;=\;
    \frac{\|\mat{X}_c^\top\mat{Y}_c\|_F^2}{\|\mat{X}_c^\top\mat{X}_c\|_F\,\|\mat{Y}_c^\top\mat{Y}_c\|_F}
    \;\in\;[0,1],
    \label{eq:cka_linear}
\end{equation}
where the second equality follows by cancellation of common $1/(n-1)$ factors.
\newline
\emph{What it measures:} linear \gls{cka} treats two representations as similar when pairs of points with high inner product in one space also have high inner product in the other, \textit{i.e.}, when their global second-order geometry agrees. It is invariant to rotations and isotropic rescaling, but not to general invertible linear maps.

\paragraph{Kernel \acrlong*{cka}.}
\label{app:metrics_cka_kernel}

Kernel \gls{cka}~\citep{pmlr-v97-kornblith19a} generalizes linear \gls{cka} by replacing dot products with kernel functions.
Let $k_X:\R^{d_x}\times\R^{d_x}\to\R$ and $k_Y:\R^{d_y}\times\R^{d_y}\to\R$ be positive semidefinite kernel functions (\textit{e.g.}, RBF kernel $k_X(\vect{x},\vect{x}')=\exp(-\|\vect{x}-\vect{x}'\|^2/2\sigma^2)$).
Let $\mat{K}_X\in\R^{n\times n}$ and $\mat{K}_Y\in\R^{n\times n}$ be Gram matrices with entries $(\mat{K}_X)_{ij}=k_X(\vect{x}_i,\vect{x}_j)$ and $(\mat{K}_Y)_{ij}=k_Y(\vect{y}_i,\vect{y}_j)$.
Let $\widetilde{\mat{K}}_X=\mat{H}\mat{K}_X\mat{H}$ and $\widetilde{\mat{K}}_Y=\mat{H}\mat{K}_Y\mat{H}$ denote centered Gram matrices.
Kernel \gls{cka} is defined as:
\begin{equation}
    \mathrm{CKA}_{k_X,k_Y}(\mat{X},\mat{Y})
    \;=\;
    \frac{\langle \widetilde{\mat{K}}_X,\widetilde{\mat{K}}_Y\rangle_F}{\|\widetilde{\mat{K}}_X\|_F\,\|\widetilde{\mat{K}}_Y\|_F}.
    \label{eq:cka_kernel}
\end{equation}
where $\langle A,B\rangle_F=\mathrm{tr}(A^\top B)$.
With positive semidefinite kernels and the biased \gls{hsic} estimator, the numerator is nonnegative, and kernel \gls{cka} typically lies in $[0,1]$.
\newline
\emph{What it measures:} kernel \gls{cka} treats representations as similar when their kernel-induced similarity patterns over points agree. With an RBF kernel of small bandwidth, this emphasizes local, nonlinear similarity structure rather than global linear geometry.

\paragraph{Unbiased \acrlong*{cka}.}
\label{app:metrics_cka_unbiased}

The biased \gls{hsic} estimator can yield inflated similarity scores at finite sample sizes.
\citet{song2012hsic} derived an unbiased \gls{hsic} estimator by recognizing that \gls{hsic} can be formulated as a U-statistic.
Following \citet{pmlr-v97-kornblith19a}, we substitute the unbiased estimator into the \gls{cka} formula.
Let $\mathring{\mat{K}}_X$ denote the Gram matrix $\mat{K}_X$ with its diagonal entries set to zero (and likewise $\mathring{\mat{K}}_Y$).
The unbiased \gls{hsic} estimator is:
\begin{equation}
    \mathrm{HSIC}_u(\mat{K}_X, \mat{K}_Y) = \frac{1}{n(n-3)}\left( \mathrm{tr}(\mathring{\mat{K}}_X \mathring{\mat{K}}_Y) + \frac{\mathbf{1}^\top \mathring{\mat{K}}_X \mathbf{1} \cdot \mathbf{1}^\top \mathring{\mat{K}}_Y \mathbf{1}}{(n-1)(n-2)} - \frac{2}{n-2} \mathbf{1}^\top \mathring{\mat{K}}_X \mathring{\mat{K}}_Y \mathbf{1} \right).
    \label{eq:hsic_unbiased}
\end{equation}
Unbiased \gls{cka} replaces both numerator and denominator of \Cref{eq:cka_kernel} with this estimator.
Unlike the biased version, unbiased \gls{cka} can take small negative values at finite $n$.
\newline
\emph{What it measures:} the same agreement of similarity patterns as kernel \gls{cka}, but with the finite-sample inflation of \gls{hsic} removed, so a value near zero indicates no more agreement than expected by chance.

\paragraph{\Acrfull{cca}-based similarity.}
\label{app:metrics_cca}

\Gls{cca}~\citep{weenink2003canonical} measures linear subspace alignment.
The sample canonical correlations $\{\rho_i\}_{i=1}^r$ (with $r=\mathrm{rank}(\widetilde{\mat{\Sigma}}_{XY})$) are the singular values of the whitened cross-covariance operator
\begin{equation}
    \widetilde{\mat{T}}_{\mathrm{CCA}}
    \;=\;
    \widetilde{\mat{\Sigma}}_{XX}^{-\tfrac{1}{2}}\,\widetilde{\mat{\Sigma}}_{XY}\,\widetilde{\mat{\Sigma}}_{YY}^{-\tfrac{1}{2}}.
    \label{eq:cca_operator}
\end{equation}
Common scalar summaries include the mean canonical correlation $\frac{1}{r}\sum_{i=1}^r \rho_i$ or a weighted average as used in \gls{svcca}~\citep{raghu2017svcca} and \gls{pwcca}~\citep{morcos2018insights}.
\newline
\emph{What it measures:} \gls{cca} treats two representations as similar when one can be mapped onto the other by an invertible linear transformation, \textit{i.e.}, when they span the same linear subspace. It ignores rotations, scaling, and other invertible linear changes within each space.

\paragraph{\Acrfull{svcca}.}
\label{app:metrics_svcca}

\Gls{svcca}~\citep{raghu2017svcca} combines dimensionality reduction via singular value decomposition (SVD) with \gls{cca}.
First, truncated SVD is applied to each representation to retain the top principal components, yielding $\mat{X}' \in \R^{n \times p}$ and $\mat{Y}' \in \R^{n \times q}$.
Then \gls{cca} is applied to the reduced representations, yielding canonical correlations $\{\rho_i\}_{i=1}^r$.
The \gls{svcca} similarity is the mean canonical correlation:
\begin{equation}
    \mathrm{SVCCA}(\mat{X}, \mat{Y}) = \frac{1}{r} \sum_{i=1}^{r} \rho_i.
    \label{eq:svcca}
\end{equation}
\emph{What it measures:} \gls{svcca} treats representations as similar when their high-variance subspaces are linearly aligned, discarding low-variance (noise) directions before measuring linear correspondence.

\paragraph{\Acrfull*{pwcca}.}
\label{app:metrics_pwcca}

\Gls{pwcca}~\citep{morcos2018insights} improves upon \gls{svcca} by weighting canonical correlations according to their importance in explaining the original representations.
Let $\vect{h}_i^X$ and $\vect{h}_i^Y$ denote the $i$-th canonical variables (projections onto canonical directions).
The weight for the $i$-th canonical correlation is the overall magnitude of the corresponding canonical variable across the dataset:
\begin{equation}
    \alpha_i = \sum_{m=1}^{n} \left| (\vect{h}_i^X)_m \right| = \lVert \vect{h}_i^X \rVert_1,
    \label{eq:pwcca_weights}
\end{equation}
the $\ell_1$ norm of the $i$-th canonical variable over the $n$ samples, following the reference implementation of \citet{morcos2018insights}.
The \gls{pwcca} similarity is the weighted mean:
\begin{equation}
    \mathrm{PWCCA}(\mat{X}, \mat{Y}) = \frac{\sum_{i=1}^{r} \alpha_i \rho_i}{\sum_{i=1}^{r} \alpha_i}.
    \label{eq:pwcca}
\end{equation}
This weighting ensures that canonical correlations corresponding to principal directions receive higher weight than those corresponding to noise dimensions.
\newline
\emph{What it measures:} like \gls{cca}, \gls{pwcca} treats representations as similar when they linearly correspond, but it counts agreement along high-variance directions more, so two representations are similar when their dominant directions align.

\paragraph{RV coefficient.}
\label{app:metrics_rv}

The RV (``Relation between two sets of Variables'') coefficient~\citep{robert1976unifying,smilde2009rv} is a multivariate generalization of the squared Pearson correlation.
It measures the similarity between two configuration matrices via their inner-product (Gram) matrices.
Let $\mat{W}_X = \mat{X}_c\mat{X}_c^\top$ and $\mat{W}_Y = \mat{Y}_c\mat{Y}_c^\top$ be the inner-product (Gram) matrices of the centered representations $\mat{X}_c, \mat{Y}_c$.
The RV coefficient is:
\begin{equation}
    \mathrm{RV}(\mat{X}, \mat{Y}) = \frac{\mathrm{tr}(\mat{W}_X \mat{W}_Y)}{\sqrt{\mathrm{tr}(\mat{W}_X^2) \ \mathrm{tr}(\mat{W}_Y^2)}} \;\in\; [0,1].
    \label{eq:rv_coefficient}
\end{equation}
\emph{What it measures:} the RV coefficient treats representations as similar when their point-by-point inner-product (Gram) matrices match. Like linear \gls{cka}, it captures global second-order geometry and is invariant to rotation but sensitive to scaling.
Computed on centered representations (our default convention), the RV coefficient coincides exactly with linear \gls{cka}; we list it separately as it arises from a different historical motivation and include it for completeness.

\subsubsection{Geometric metrics}

\paragraph{\Acrfull{rsa} via Spearman correlation of dissimilarity matrices.}
\label{app:metrics_rsa}

\Gls{rsa}~\citep{kriegeskorte2008rsa} compares the geometry induced by pairwise dissimilarities.
Let $\delta(\cdot,\cdot)$ be a dissimilarity on representation vectors (\textit{e.g.}, correlation distance $\delta(\vect{u},\vect{v})=1-\mathrm{corr}(\vect{u},\vect{v})$, cosine distance).
Define Representational Dissimilarity Matrices (RDMs)
\begin{equation}
    (\mat{D}_X)_{ij}=\delta(\vect{x}_i,\vect{x}_j),\qquad
    (\mat{D}_Y)_{ij}=\delta(\vect{y}_i,\vect{y}_j),
    \label{eq:rdm_def}
\end{equation}
and let $\mathrm{vec}_\triangle(\mat{D})\in\R^{n(n-1)/2}$ denote vectorization of the strict upper triangle.
\Gls{rsa} is then computed as a rank correlation between the two RDM vectors:
\begin{equation}
    \mathrm{RSA}(\mat{X},\mat{Y})
    \;=\;
    \rho_S\!\left(\mathrm{vec}_\triangle(\mat{D}_X),\;\mathrm{vec}_\triangle(\mat{D}_Y)\right),
    \label{eq:rsa_spearman}
\end{equation}
where Spearman's $\rho$ can be expressed as Pearson correlation of ranks,
\begin{equation}
    \rho_S(\vect{u},\vect{v})
    \;=\;
    \mathrm{corr}\!\left(\mathrm{rank}(\vect{u}),\;\mathrm{rank}(\vect{v})\right).
    \label{eq:spearman_rho}
\end{equation}
\emph{What it measures:} \gls{rsa} treats two representations as similar when they rank pairs of points in the same order of (dis)similarity. It compares the relational geometry of distances rather than the coordinates, and depends only on the pattern of pairwise dissimilarities.

\paragraph{Procrustes distance.}
\label{app:metrics_procrustes}

The orthogonal Procrustes distance~\citep{williams2021shapemetrics} measures the minimal Euclidean distance between two representations after optimal orthogonal alignment.
Assuming $d_x = d_y = d$, the optimal orthogonal matrix $\mat{Q}^* \in \mathcal{O}(d)$ is:
\begin{equation}
    \mat{Q}^* = \operatorname*{argmin}_{\mat{Q} \in \mathcal{O}(d)} \|\mat{X} - \mat{Y}\mat{Q}\|_F^2,
    \label{eq:procrustes_opt}
\end{equation}
which has the closed-form solution $\mat{Q}^* = \mat{V}\mat{U}^\top$ where $\mat{U}\mat{\Sigma}\mat{V}^\top = \mat{X}^\top \mat{Y}$ is the SVD.
The Procrustes distance is:
\begin{equation}
    d_{\mathrm{Proc}}(\mat{X}, \mat{Y}) = \|\mat{X} - \mat{Y}\mat{Q}^*\|_F.
    \label{eq:procrustes_distance}
\end{equation}
We convert this distance to a similarity score $s_{\mathrm{Proc}} = 1 - d_{\mathrm{Proc}} / \|\mat{Y}_c\|_F$, evaluated on centered representations, where $\mat{Y}_c$ denotes the centered $\mat{Y}$. It equals $1$ at perfect orthogonal alignment ($d_{\mathrm{Proc}}=0$) and can be negative for poorly aligned pairs.
\newline
\emph{What it measures:} Procrustes treats two representations as similar when one can be rotated and reflected onto the other with small residual. It compares absolute geometry up to rigid motion and, unlike neighborhood metrics, is sensitive to the actual pairwise distances.

\subsubsection{Neighborhood metrics}

\paragraph{\Acrfull{mknn}.}
\label{app:metrics_mknn}

\gls{mknn}~\citep{pmlr-v235-huh24a} focuses on local topology.
For each anchor sample $i$, define the set of its $k$ nearest neighbors according to a distance measure $\operatorname{dist}(\cdot, \cdot)$ in $\mat{X}$ and $\mat{Y}$,
\begin{equation}
    N_{\mat{X}}(i) = \operatorname{KNN}_k(i;\mat{X}),
    \qquad
    N_{\mat{Y}}(i) = \operatorname{KNN}_k(i;\mat{Y}),
    \label{eq:knn_sets}
\end{equation}
where $\operatorname{KNN}_k(i;\mat{X})$ denotes the indices of the $k$ samples (excluding $i$) that minimize $\mathrm{dist}(\vect{x}_i,\vect{x}_j)$.
\gls{mknn} is then defined as the average fraction of shared neighbors:
\begin{equation}
    \mathrm{mKNN}_k(\mat{X},\mat{Y})
    \;=\;
    \frac{1}{n}\sum_{i=1}^n \frac{|N_{\mat{X}}(i)\cap N_{\mat{Y}}(i)|}{k}
    \;\in\;[0,1].
    \label{eq:mknn_overlap}
\end{equation}
\emph{What it measures:} \gls{mknn} treats two representations as similar when each point keeps the same set of nearest neighbors in both spaces. It captures local topology, depends only on the rank order of distances, and is therefore invariant to any transformation that preserves $k$-nearest-neighbor sets, ignoring exact distances and global geometry.

\paragraph{Cycle-$k$NN (bidirectional $k$-NN).}
\label{app:metrics_cycle_knn}

While \gls{mknn} measures one-directional neighborhood overlap, cycle-$k$NN enforces a round-trip consistency between the two spaces~\citep{pmlr-v235-huh24a}.
An anchor $i$ counts as consistent if at least one of its nearest neighbors $j$ in $\mat{Y}$ has $i$ among its nearest neighbors in $\mat{X}$:
\begin{equation}
    \mathrm{cycle\text{-}kNN}_k(\mat{X},\mat{Y})
    \;=\;
    \frac{1}{n}\sum_{i=1}^n \mathbbm{1}\!\left[\, i \in \bigcup_{j \in N_{\mat{Y}}(i)} N_{\mat{X}}(j) \,\right]
    \;\in\;[0,1].
    \label{eq:cycle_knn}
\end{equation}
This is stricter than \gls{mknn}, as it requires neighbor relations to be mutually recognized across the two spaces.

\emph{What it measures:} cycle-$k$NN treats points as similar only when neighborhood membership is reciprocated across spaces, a stricter local-topology criterion than \gls{mknn}.

\paragraph{CKA with Neighborhood Alignment (CKNNA).}
\label{app:metrics_cknna}

CKNNA~\citep{pmlr-v235-huh24a} combines the kernel formulation of \gls{cka} with local neighborhood structure, restricting the kernel interaction to mutual $k$-nearest-neighbor edges.
Let $\mat{K}_X=\mat{X}\mat{X}^\top$ and $\mat{K}_Y=\mat{Y}\mat{Y}^\top$ be the (linear) Gram matrices, and let $\mat{M}_X,\mat{M}_Y\in\{0,1\}^{n\times n}$ be the $k$-NN indicator matrices with $(\mat{M}_X)_{ij}=\mathbbm{1}[\,j\in N_{\mat{X}}(i)\,]$ and $(\mat{M}_Y)_{ij}=\mathbbm{1}[\,j\in N_{\mat{Y}}(i)\,]$.
Let $\mat{A}=\mat{M}_X\odot\mat{M}_Y$ retain only edges that are $k$-NN in \emph{both} spaces, where $\odot$ is the Hadamard product.
CKNNA aligns the neighbor-masked Gram matrices with \gls{cka}'s normalization:
\begin{equation}
    \mathrm{CKNNA}_k(\mat{X},\mat{Y})
    \;=\;
    \frac{\langle \mat{H}(\mat{A}\odot\mat{K}_X)\mat{H},\; \mat{H}(\mat{A}\odot\mat{K}_Y)\mat{H} \rangle_F}{\|\mat{H}(\mat{M}_X\odot\mat{K}_X)\mat{H}\|_F \, \|\mat{H}(\mat{M}_Y\odot\mat{K}_Y)\mat{H}\|_F}.
    \label{eq:cknna}
\end{equation}
\emph{What it measures:} CKNNA treats representations as similar when kernel values on shared (mutual) nearest-neighbor edges agree, combining \gls{cka}'s normalization with a purely local notion of neighborhood structure.

\section{Theoretical Derivations}
\label{app:theory}

In this section, we provide the theoretical justification for the confounding factors identified in \Cref{sec:theory}.

\subsection{Permutation validity, super-uniformity, and gating}
\label{app:perm_validity}

This section formalizes the finite-sample validity of permutation calibration.

\begin{definition}[Super-uniformity]
\label{def:super_uniform}
A $p$-value $p$ is \emph{super-uniform} under $H_0$ if for all $t\in[0,1]$,
\begin{equation}
    \Prob_{H_0}(p \le t)\le t.
\end{equation}
Equivalently, $p$-values under $H_0$ are stochastically larger than $\mathrm{Unif}(0,1)$, which is sufficient for valid Type-I error control.
\end{definition}

\begin{lemma}[Permutation $p$-values are super-uniform]
\label{lem:perm_super_uniform}
Under \Cref{assm:exchangeability}, the permutation $p$-value in \Cref{eq:pvalue_scalar} satisfies super-uniformity:
$\Prob_{H_0}(p \le \alpha)\le \alpha$ for all $\alpha\in[0,1]$ (finite-sample validity).
\end{lemma}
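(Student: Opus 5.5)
The plan is the standard exchangeability argument for randomization tests, in the form of Phipson and Smyth. Set $\pi_0:=\mathrm{id}$ and $s^{(0)}:=s_{\mathrm{obs}}=s(\mat{X},\pi_0(\mat{Y}))$. The $p$-value in \Cref{eq:pvalue_scalar} can then be written as
\begin{equation*}
p=\frac{\#\{k\in\{0,\dots,K\}: s^{(k)}\ge s^{(0)}\}}{K+1}.
\end{equation*}
This is the normalized rank of $s^{(0)}$ within the combined multiset, counted from the top, with ties resolved in the conservative direction.

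\textbf{Step 1: exchangeability of the scores.} The first step is to show that $(s^{(0)},s^{(1)},\dots,s^{(K)})$ is an exchangeable vector under $H_0$. Draw an independent uniform $\sigma\in\Pi_n$. By \Cref{assm:exchangeability}, the triple $(\mat{X},\sigma(\mat{Y}),(\pi_k\circ\sigma^{-1})_{k})$ has the same law as $(\mat{X},\mat{Y},(\pi_k)_k)$, which uses that $\pi_k$ is independent and that $\Pi_n$ is a group, so $\pi_k\sigma^{-1}$ is again uniform. Writing $\rho_k=\pi_k\circ\sigma^{-1}$ for $k\ge1$ and $\rho_0=\sigma^{-1}$, the scores are then $s(\mat{X},\rho_k\sigma(\mat{Y}))$. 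Conditional on $(\mat{X},\sigma(\mat{Y}))$, the $K+1$ permutations $\rho_0,\dots,\rho_K$ are i.i.d.\ uniform on $\Pi_n$, since $\sigma$ is uniform and independent of the $\pi_k$. Hence the scores are conditionally i.i.d., and therefore exchangeable. I expect this step to be the main obstacle: one has to state carefully that the null distribution is invariant under the joint action and that including the identity does not break symmetry. The group structure of $\Pi_n$ together with the independence of the draws is what makes this work.

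\textbf{Step 2: the rank bound.} Given exchangeability, I will show that for any $j$,
\begin{equation*}
\Prob\bigl(\#\{k: s^{(k)}\ge s^{(0)}\}\le j\bigr)\le \frac{j}{K+1}.
\end{equation*}
For $i\in\{0,\dots,K\}$ let $R_i=\#\{k: s^{(k)}\ge s^{(i)}\}$. By exchangeability, $\Prob(R_0\le j)=\frac{1}{K+1}\E\,\#\{i: R_i\le j\}$. Deterministically, at most $j$ indices can satisfy $R_i\le j$, even with ties. To see this, if $i$ is such an index, then every $i'$ with $s^{(i')}\ge s^{(i)}$ has $R_{i'}\le R_i\le j$. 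So the set of such indices is an upper set, and its size is at most $\min_i R_i\le j$ over its members: take the minimal value in it, and all elements of the set are counted in its $R$.

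\textbf{Step 3: conclusion.} Take $j=\lfloor \alpha(K+1)\rfloor$. Then $\{p\le\alpha\}=\{R_0\le j\}$, so $\Prob_{H_0}(p\le\alpha)\le j/(K+1)\le\alpha$ for every $\alpha\in[0,1]$. This is super-uniformity in the sense of \Cref{def:super_uniform}. The unconditional statement follows by averaging over $(\mat{X},\sigma(\mat{Y}))$.
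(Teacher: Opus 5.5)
Your proof is correct and follows the same route as the paper's: exchangeability of $(s_{\mathrm{obs}},s^{(1)},\dots,s^{(K)})$, then a rank argument. It also supplies the two steps the paper only asserts or cites. First, it derives exchangeability via the auxiliary uniform $\sigma$; the key input there, worth stating explicitly, is that \Cref{assm:exchangeability} makes $(\sigma,\pi_1,\dots,\pi_K)$ jointly independent of $(\mat{X},\sigma(\mat{Y}))$, not just $(\pi_k\sigma^{-1})_k$. Second, it replaces the appeal to Phipson--Smyth with a tie-robust counting bound.

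One slip in Step~2: $|\{i:R_i\le j\}|$ is bounded by $R_{i^*}$ for the member $i^*$ of \emph{smallest} score. That is the maximum of $R_i$ over the set, not $\min_i R_i$ as written. Your verbal justification is the correct one, and the bound $\le j$ still holds.
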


\begin{proof}[Proof of \Cref{lem:perm_super_uniform}]
Let $s_{\mathrm{obs}}=s(\mat{X},\mat{Y})$ be the observed statistic and let $s^{(k)}=s(\mat{X},\pi_k(\mat{Y}))$ for $k=1,\dots,K$ be the statistics computed on permuted pairings.
Under \Cref{assm:exchangeability}, the vector $(s_{\mathrm{obs}},s^{(1)},\dots,s^{(K)})$ is \emph{exchangeable}: its joint distribution is invariant to permutations of the indices.
Consider the (upper) rank
\begin{equation}
    R \;=\; 1 + \#\{k\in\{1,\dots,K\}: s^{(k)} \ge s_{\mathrm{obs}}\}\ \in\ \{1,\dots,K+1\}.
\end{equation}
If the scores are almost surely distinct, exchangeability implies that the rank of $s_{\mathrm{obs}}$ among $\{s_{\mathrm{obs}},s^{(1)},\dots,s^{(K)}\}$ is uniform on $\{1,\dots,K+1\}$.
With possible ties, the add-one $p$-value of \citet{phipson2010pvalues},
\begin{equation}
    p=\frac{R}{K+1},
\end{equation}
is conservative, implying $\Prob_{H_0}(p\le \alpha)\le \alpha$ for all $\alpha\in[0,1]$.
\end{proof}

\begin{proof}[Proof of \Cref{cor:gating_type1}]
Let $s_{\mathrm{obs}}=s(\mat{X},\mat{Y})$ and $s^{(k)}=s(\mat{X},\pi_k(\mat{Y}))$ for $k=1,\dots,K$. Under
\Cref{assm:exchangeability}, the vector $(s_{\mathrm{obs}},s^{(1)},\dots,s^{(K)})$ is exchangeable.
Let
\[
\tau_\alpha := s_{(\lceil (1-\alpha)(K+1)\rceil)}
\]
be the $(1-\alpha)$-quantile defined via the order statistic of the \emph{combined} multiset $\{s_{\mathrm{obs}},s^{(1)},\dots,s^{(K)}\}$.
Define the (upper) rank
\[
R \;=\; 1 + \#\{k\in\{1,\dots,K\}: s^{(k)} \ge s_{\mathrm{obs}}\}\ \in\ \{1,\dots,K+1\},
\]
and the corresponding add-one $p$-value $p=R/(K+1)$.
By construction of $\tau_\alpha$, the rejection event $\{s_{\mathrm{obs}}>\tau_\alpha\}$ implies that $s_{\mathrm{obs}}$ lies among the largest
$\lfloor \alpha(K+1)\rfloor$ values of $\{s_{\mathrm{obs}},s^{(1)},\dots,s^{(K)}\}$, hence $R \le \alpha(K+1)$ and therefore $p\le \alpha$.
By \Cref{lem:perm_super_uniform}, $\Prob_{H_0}(p\le \alpha)\le \alpha$, which yields
\[
\Prob_{H_0}(s_{\mathrm{obs}}>\tau_\alpha) \le \Prob_{H_0}(p\le \alpha) \le \alpha.
\]
\end{proof}

\paragraph{Restricted permutations under dependence.}
\Cref{assm:exchangeability} treats the $n$ row pairs as exchangeable units. In practice, exchangeability can be violated even without sequential structure (\textit{e.g.}, grouped or clustered samples). Validity is then recovered by using \emph{restricted} permutations that preserve the dependence structure (\textit{e.g.}, permuting within blocks or permuting block labels) and re-running under each restricted permutation.

\subsection{Monotone invariance of rank-based calibration}
\label{app:monotone}

The following proposition is a standard result in randomization inference; we state it here for completeness and to clarify its role in justifying the calibrated score design.

\begin{proposition}[Monotone invariance of rank-based calibration \textnormal{\citep{lehmann2005testing}}]
\label{prop:monotone_invariance}
Let $g:\R\to\R$ be strictly increasing.
Define $p_g$ by applying \Cref{eq:pvalue_scalar} to the transformed statistic $g\circ s$ using the \emph{same} permutations.
Then $p_g=p$, and likewise the null percentile (the rank of $s_{\mathrm{obs}}$ among the combined set) is invariant under $g$.
\end{proposition}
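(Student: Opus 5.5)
The proof is a direct pointwise argument: a strictly increasing $g$ preserves every comparison between the observed score and the null scores, and the $p$-value and null percentile depend on the data only through these comparisons. We use the same permutations $\pi_1,\dots,\pi_K$ for both statistics. Write $s_{\mathrm{obs}}^g := g(s_{\mathrm{obs}})$ and $s^{(k),g} := g(s^{(k)})$; these are exactly the values obtained by applying $g\circ s$ to $(\mat{X},\mat{Y})$ and to $(\mat{X},\pi_k(\mat{Y}))$.

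First we show that a strictly increasing $g$ satisfies, for all real $a,b$,
\begin{equation}
a \ge b \iff g(a)\ge g(b).
\end{equation}
The forward direction holds because $g$ is nondecreasing. For the converse, suppose $a<b$; strict monotonicity then gives $g(a)<g(b)$, which is the contrapositive. The same reasoning gives $a>b \iff g(a)>g(b)$ and $a=b\iff g(a)=g(b)$, so ties are preserved as well.

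Next, applying this equivalence with $a=s^{(k)}$ and $b=s_{\mathrm{obs}}$ shows that the index sets $\{k: s^{(k)}\ge s_{\mathrm{obs}}\}$ and $\{k: s^{(k),g}\ge s_{\mathrm{obs}}^g\}$ coincide. Their cardinalities are therefore equal, and by \Cref{eq:pvalue_scalar} we get $p_g = p$ exactly, for every realization. This is a deterministic identity, not merely an equality in distribution.

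For the null percentile, or any rank of $s_{\mathrm{obs}}$ in the combined multiset under any convention (strict, weak, or mid-rank for ties), the rank is a function only of the pairwise comparison pattern among $\{s_{\mathrm{obs}}, s^{(1)},\dots,s^{(K)}\}$. Since $g$ preserves $<$, $=$ and $>$, this pattern is unchanged and so is the rank. A useful by-product concerns the order statistics: $g$ maps them to order statistics, so $g(s_{(j)}) = (g\circ s)_{(j)}$. Hence the critical value transforms equivariantly, $\tau_\alpha^g = g(\tau_\alpha)$, and the gating event $\{s_{\mathrm{obs}}>\tau_\alpha\}$ is also invariant.

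There is no real obstacle here. The only point needing care is tie handling: we need the converse implication, so that $g(a)\ge g(b)$ forces $a\ge b$. This is exactly where strict monotonicity is used, since a merely nondecreasing $g$ could create new ties and increase $p_g$. As a final remark, the calibrated score \Cref{eq:scalar_gate_rescale} is not invariant, because $g$ does not commute with the affine rescaling; this is the design point made in the main text.
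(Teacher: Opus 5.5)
Your proof is correct and follows essentially the same route as the paper. Both establish $a\ge b \iff g(a)\ge g(b)$ for strictly increasing $g$, conclude that the sets $\{k: s^{(k)}\ge s_{\mathrm{obs}}\}$ and their transformed counterparts coincide so that the rank (and hence $p_g=p$) is unchanged, and treat the null percentile by the same order-preservation argument. Your additional remarks on ties, on the equivariance $\tau_\alpha^g=g(\tau_\alpha)$, and on the non-invariance of $s_{\mathrm{cal}}$ are also correct.
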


\begin{proof}
Let $g$ be strictly increasing.
For any two real numbers $a,b$, we have $a\ge b$ if and only if $g(a)\ge g(b)$.
Therefore, for each permutation draw $k$,
\begin{equation}
    \mathbbm{1}\{s^{(k)} \ge s_{\mathrm{obs}}\} = \mathbbm{1}\{g(s^{(k)})\ge g(s_{\mathrm{obs}})\}.
\end{equation}
Summing over $k$ shows that the permutation rank $R$ (and thus the add-one $p$-value) is unchanged by applying $g$ to both the observed and permuted statistics.
The same argument applies to the null percentile, since the ordering of samples is preserved under $g$.
\end{proof}

\subsection{Post-selection inflation and aggregation-aware validity}
\label{app:agg_validity}

\begin{proposition}[Validity for aggregation-aware calibration]
\label{prop:agg_validity}
Let $T$ be any measurable aggregation operator applied to a layer-wise similarity matrix $\mat{S}$ (\textit{e.g.}, max, row-max, top-$k$).
If $T_{\mathrm{obs}}=T(\mat{S})$ is calibrated against the permutation null $\{T(\mat{S}^{(k)})\}_{k=1}^K$ as in \Cref{eq:pvalue_agg},
then the resulting $p_{\mathrm{agg}}$ is super-uniform under $H_0$.
\end{proposition}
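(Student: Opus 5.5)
The plan is to reduce the statement to \Cref{lem:perm_super_uniform} by treating the whole pipeline $(\mat{X}^{(A)}_{1:L_A},\mat{Y}^{(B)}_{1:L_B})\mapsto T(\mat{S})$ as a single scalar statistic. Concretely, define
\[
t(\mat{Y}^{(B)}_{1:L_B}) := T\bigl(\bigl[s(\mat{X}^{(A)}_\ell,\mat{Y}^{(B)}_{\ell'})\bigr]_{\ell,\ell'}\bigr),
\]
with the $\mat{X}^{(A)}$ layers held fixed. Because the \emph{same} permutation $\pi_k$ is applied to every layer of model $B$, we have $T^{(k)}=t(\pi_k(\mat{Y}^{(B)}_{1:L_B}))$, where $\pi_k$ acts on the rows of the stacked matrix $[\mat{Y}^{(B)}_1,\dots,\mat{Y}^{(B)}_{L_B}]\in\R^{n\times\sum_{\ell'} d_{\ell'}}$. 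Measurability of $t$ follows from measurability of $s$ and $T$, so $p_{\mathrm{agg}}$ in \Cref{eq:pvalue_agg} is exactly the scalar add-one $p$-value of \Cref{eq:pvalue_scalar} for the statistic $t$.

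The first step is to state the null in the form needed here. \Cref{assm:exchangeability} is applied to the joint sample-indexed objects: the rows of the stacked $A$-layers and the rows of the stacked $B$-layers. Under $H_0$, the joint law is invariant under relabeling the rows of the whole $B$ stack by any $\pi\in\Pi_n$.

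The second step is to show that $(T_{\mathrm{obs}},T^{(1)},\dots,T^{(K)})$ is exchangeable. Write $\pi_0=\mathrm{id}$ and let $\mathbf{Z}=(\mat{X}^{(A)},\mat{Y}^{(B)})$. I would use the standard group argument. Let $\sigma\sim\mathrm{Unif}(\Pi_n)$ be independent of everything. By invariance, $\mathbf{Z}\overset{d}{=}(\mat{X}^{(A)},\sigma(\mat{Y}^{(B)}))$. Conditionally, $(\sigma,\pi_1\sigma,\dots,\pi_K\sigma)$ are i.i.d.\ uniform on $\Pi_n$, because left-multiplying a uniform independent element by $\pi_k$ preserves uniformity and independence. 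The $K+1$ statistics are therefore $t$ evaluated at $K+1$ i.i.d.\ uniform relabelings of one common dataset, and that family is exchangeable.

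The third step is to invoke the rank argument in \Cref{lem:perm_super_uniform}. Exchangeability makes the upper rank $R$ of $T_{\mathrm{obs}}$ stochastically no smaller than uniform on $\{1,\dots,K+1\}$, and ties only enlarge $R$. Hence $\Prob_{H_0}(p_{\mathrm{agg}}\le\alpha)\le\alpha$. The same corollary argument also gives $\Prob_{H_0}(T_{\mathrm{obs}}>\tau_\alpha^{\mathrm{agg}})\le\alpha$.

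The main obstacle is the second step. It relies on two points: the permutations must be i.i.d.\ uniform over the full group, or over a subgroup under which the null is invariant, and a single permutation must be shared across layers. If each layer were permuted independently, $T^{(k)}$ would break within-model cross-layer dependence, and the argument would fail. I will therefore state explicitly that the result holds for arbitrary dependence across layers and for arbitrary $T$, because no property of $T$ beyond measurability is used.
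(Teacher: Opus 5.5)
Your proposal is correct and follows the same route as the paper. Both arguments treat the whole layer-wise pipeline as a single scalar statistic, use the shared permutation across all layers of model $B$ together with \Cref{assm:exchangeability} to obtain exchangeability of $(T_{\mathrm{obs}},T^{(1)},\dots,T^{(K)})$, and then reuse the rank argument of \Cref{lem:perm_super_uniform}; the paper only asserts the exchangeability step, whereas you justify it by showing that $(\sigma,\pi_1\sigma,\dots,\pi_K\sigma)$ are i.i.d.\ uniform, which is cleanest to verify by conditioning on $\sigma$ and using the uniformity of each $\pi_k$.
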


\begin{proof}[Proof of \Cref{prop:agg_validity}]
Let $T$ be any measurable functional of the full data (representations across all layers), producing the scalar report $T_{\mathrm{obs}}$.
Under \Cref{assm:exchangeability} and consistent layer-wise permutation of sample correspondences, the vector $(T_{\mathrm{obs}},T^{(1)},\dots,T^{(K)})$ is exchangeable.
Applying the same rank argument as in \Cref{app:perm_validity} yields super-uniformity for the add-one $p$-value in \Cref{eq:pvalue_agg}.
\end{proof}

\subsection{The width confounder}
\label{app:theory_rmt}

This appendix provides concrete calculations that justify the width confounder using \gls{rmt}: even under independence, interaction operators have non-trivial magnitude and spectrum when $d$ is not negligible relative to $n$.

\begin{proof}[Proof of \Cref{prop:nonzero_energy}]
Let $\mat{X}\in\R^{n\times d_x}$ and $\mat{Y}\in\R^{n\times d_y}$ have i.i.d.\ rows with mean $0$, identity covariance, and $\vect{x}_i$ and $\vect{y}_i$ independent.
Let $\mat{H}=\mat{I}_n - \frac{1}{n}\mathbbm{1}_n\mathbbm{1}_n^\top$ be the centering matrix, so $\mat{X}_c=\mat{H}\mat{X}$ and $\mat{Y}_c=\mat{H}\mat{Y}$.
Since $\mat{H}$ is symmetric and idempotent ($\mat{H}^2=\mat{H}$), the sample cross-covariance is
\begin{equation}
    \widetilde{\mat{C}} = \frac{1}{n-1}\mat{X}_c^\top\mat{Y}_c = \frac{1}{n-1}\mat{X}^\top\mat{H}\mat{Y}.
\end{equation}
Denote entry $(a,b)$ as $\widetilde{C}_{ab}$.
Expanding via $H_{ij}=\delta_{ij}-\frac{1}{n}$:
\begin{equation}
    \widetilde{C}_{ab} = \frac{1}{n-1}\left(\sum_{i=1}^n X_{ia}Y_{ib} - \frac{1}{n}\Bigl(\sum_{i=1}^n X_{ia}\Bigr)\Bigl(\sum_{j=1}^n Y_{jb}\Bigr)\right).
\end{equation}
We compute $\E[\widetilde{C}_{ab}^2]$ using independence of $\vect{x}_i$ and $\vect{y}_j$ for all $i,j$, zero means, and identity covariance.

\begin{description}
    \item[Term 1:]
    $\E\bigl[(\sum_i X_{ia}Y_{ib})^2\bigr] = \sum_{i,j}\E[X_{ia}X_{ja}]\E[Y_{ib}Y_{jb}]$.
    For $i\!\ne\!j$, independence across rows and zero mean give $\E[X_{ia}X_{ja}]=\E[X_{ia}]\E[X_{ja}]=0$.
    For $i=j$, we have $\E[X_{ia}^2]\E[Y_{ib}^2]=1$.
    Thus $\E\bigl[(\sum_i X_{ia}Y_{ib})^2\bigr]=n$.

    \item[Term 2:]
    $\E\bigl[(\sum_i X_{ia}Y_{ib})(\sum_j X_{ja})(\sum_k Y_{kb})\bigr] = \sum_{i,j,k}\E[X_{ia}X_{ja}]\E[Y_{ib}Y_{kb}]$.
    This is nonzero only when $i=j$ and $i=k$, yielding $\sum_i 1\cdot 1 = n$.

    \item[Term 3:]
    $\E\bigl[(\sum_i X_{ia})^2(\sum_j Y_{jb})^2\bigr] = \E[(\sum_i X_{ia})^2]\E[(\sum_j Y_{jb})^2] = n\cdot n = n^2$.
\end{description}

Combining:
\begin{align}
    \E\left[\widetilde{C}_{ab}^2\right]
    &= \frac{1}{(n-1)^2}\left(n - \frac{2}{n}\cdot n + \frac{n^2}{n^2}\right)
    = \frac{1}{(n-1)^2}(n-2+1)
    = \frac{1}{n-1}.
\end{align}
Summing over all entries:
\begin{equation}
    \E\left[\|\widetilde{\mat{C}}\|_F^2\right]= \sum_{a=1}^{d_x}\sum_{b=1}^{d_y}\E[\widetilde{C}_{ab}^2] = \frac{d_x d_y}{n-1}.
\end{equation}
\end{proof}

\paragraph{Interpretation.}
The null interaction energy is $\mathcal{O}(d_x d_y/n)$.
In the common regime $d_x,d_y\asymp n$, the null energy is $\mathcal{O}(n)$ and therefore \emph{does not vanish}.
Since many spectral similarity metrics aggregate singular values (\textit{e.g.}, via $\|\widetilde{\mat{C}}\|_F^2=\sum_i\sigma_i^2(\widetilde{\mat{C}})$), this already explains a positive baseline under $H_0$ and its dependence on $(n,d_x,d_y)$.

\paragraph{Width inflation persists under genuine signal ($H_1$).}
The width confounder is not specific to the null hypothesis. Even when two representations share a fixed, genuine signal, finite-sample \gls{cka} is inflated by width, because increasing $d$ grows the diagonal and off-diagonal Gram entries at different rates and the \gls{cka} denominator does not cancel the resulting width-dependent self-similarity.

\begin{proposition}[Width inflation under genuine signal]
\label{prop:width_h1}
Fix $n\ge 2$ and a target alignment $\rho\in(0,1)$. For each width $d$, draw $\vect{u}_i,\vect{v}_i,\vect{w}_i\stackrel{\mathrm{i.i.d.}}{\sim}\mathcal{N}(\mat{0},\mat{I}_d)$ independently across $i=1,\dots,n$, and set
\begin{equation}
\vect{x}_i=\sqrt{\rho}\,\vect{u}_i+\sqrt{1-\rho}\,\vect{v}_i,\qquad
\vect{y}_i=\sqrt{\rho}\,\vect{u}_i+\sqrt{1-\rho}\,\vect{w}_i,
\end{equation}
so that $\vect{x}_i$ and $\vect{y}_i$ share the latent signal $\vect{u}_i$. Then the population linear \gls{cka} equals $\rho^2<1$, whereas for \emph{fixed} $n$ the sample centered linear \gls{cka} satisfies $\mathrm{CKA}(\mat{X},\mat{Y})\to 1$ almost surely as $d\to\infty$. Hence the finite-sample estimate is inflated by width, with $\mathrm{CKA}(\mat{X},\mat{Y})-\rho^2\to 1-\rho^2>0$.
\end{proposition}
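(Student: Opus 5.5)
Two things need proving: the population linear \gls{cka} equals $\rho^2$, and the sample version tends to $1$ almost surely as $d\to\infty$ with $n$ fixed. Both follow from explicit second-moment and law-of-large-numbers computations. No earlier result is needed beyond the definition in \Cref{eq:cka_linear}.

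\emph{Population value.} I interpret the population linear \gls{cka} as
\[
\frac{\|\mat{\Sigma}_{XY}\|_F^2}{\|\mat{\Sigma}_{XX}\|_F\,\|\mat{\Sigma}_{YY}\|_F}.
\]
By construction $\mat{\Sigma}_{XX}=\mat{\Sigma}_{YY}=\rho\mat{I}_d+(1-\rho)\mat{I}_d=\mat{I}_d$ and $\mat{\Sigma}_{XY}=\rho\,\mat{I}_d$. The ratio is therefore $\rho^2 d/d=\rho^2$, independently of $d$.

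\emph{Sample value.} I would work with the Gram form. Centered linear \gls{cka} equals
\[
\frac{\langle \mat{H}\mat{K}_X\mat{H},\mat{H}\mat{K}_Y\mat{H}\rangle_F}{\|\mat{H}\mat{K}_X\mat{H}\|_F\,\|\mat{H}\mat{K}_Y\mat{H}\|_F},
\qquad \mat{K}_X=\mat{X}\mat{X}^\top,\ \mat{K}_Y=\mat{Y}\mat{Y}^\top,
\]
because $\|\mat{X}_c^\top\mat{Y}_c\|_F^2=\mathrm{tr}(\mat{X}_c\mat{X}_c^\top\mat{Y}_c\mat{Y}_c^\top)$. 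The ratio is invariant to rescaling each Gram matrix, so I divide both by $d$. Each entry is then
\[
\tfrac1d(\mat{K}_X)_{ij}=\tfrac1d\textstyle\sum_{a=1}^d X_{ia}X_{ja},
\]
an average of $d$ i.i.d.\ terms, since the coordinates are i.i.d.\ across $a$ because all covariances are identity. By the strong law of large numbers, $\tfrac1d\mat{K}_X\to\mat{I}_n$ almost surely: diagonal entries converge to $\E[X_{ia}^2]=1$, and off-diagonal entries with $i\neq j$ converge to $\E[X_{ia}X_{ja}]=0$ because rows are independent. The same argument gives $\tfrac1d\mat{K}_Y\to\mat{I}_n$. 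There are only $n^2$ entries, a fixed finite number, so the almost-sure convergence holds jointly.

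\emph{Limit.} The centered Gram matrices converge to $\mat{H}\mat{I}_n\mat{H}=\mat{H}$. The \gls{cka} ratio is continuous at the pair $(\mat{H},\mat{H})$ because $\|\mat{H}\|_F^2=n-1>0$ when $n\ge2$, so the denominator is nonzero. By the continuous mapping theorem,
\[
\mathrm{CKA}(\mat{X},\mat{Y})\to \frac{\langle\mat{H},\mat{H}\rangle_F}{\|\mat{H}\|_F^2}=1 \quad\text{almost surely}.
\]
Subtracting the population value gives the gap $1-\rho^2>0$.

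Once the Gram reformulation is in place, the argument has no real obstacle. The one point to state carefully is that the limit is degenerate: both normalized Gram matrices tend to the identity, so the shared signal $\vect{u}_i$ plays no role. For fixed $n$, wide independent coordinates make every point nearly orthogonal to every other point in both spaces, and that common identity-like structure drives \gls{cka} to $1$. The same mechanism pushes the null \gls{cka} toward $1$ as well, which is consistent with the $\mathcal{O}(d/n)$ width baseline.
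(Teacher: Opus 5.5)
Your proposal is correct and follows the paper's proof essentially step for step: you get $\rho^2$ from the population covariances $\mat{I}_d$ and $\rho\,\mat{I}_d$, and the limit $\langle\mat{H},\mat{H}\rangle_F/\|\mat{H}\|_F^2=1$ from the strong law applied entrywise to $\tfrac1d\mat{X}\mat{X}^\top$ and $\tfrac1d\mat{Y}\mat{Y}^\top$ (finitely many entries) together with the scale invariance of \gls{cka}. You also check explicitly that the limiting denominator $\|\mat{H}\|_F^2=n-1$ is nonzero, which justifies the continuous-mapping step; the paper leaves this point implicit.
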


\begin{proof}
By independence of $\vect{u}_i,\vect{v}_i,\vect{w}_i$, the marginal covariances are $\Sigma_x=\Sigma_y=\mat{I}_d$ and the cross-covariance is $\Sigma_{xy}=\mathrm{Cov}(\vect{x}_i,\vect{y}_i)=\rho\,\mat{I}_d$. The population linear \gls{cka} is therefore
\begin{equation}
\mathrm{CKA}_{\mathrm{pop}}=\frac{\|\Sigma_{xy}\|_F^2}{\|\Sigma_x\|_F\,\|\Sigma_y\|_F}=\frac{\rho^2 d}{\sqrt{d}\,\sqrt{d}}=\rho^2.
\end{equation}
For the sample statistic at fixed $n$, write the columns of $\mat{X}$ as $\vect{g}_1,\dots,\vect{g}_d\in\R^n$. Each entry of $\mat{X}$ is $\mathcal{N}(0,1)$ (since $\rho+(1-\rho)=1$), and the columns are i.i.d.\ across $j$ with $\vect{g}_j\sim\mathcal{N}(\mat{0},\mat{I}_n)$. By the strong law of large numbers,
\begin{equation}
\frac{1}{d}\mat{X}\mat{X}^\top=\frac{1}{d}\sum_{j=1}^d \vect{g}_j\vect{g}_j^\top \xrightarrow{\mathrm{a.s.}} \E[\vect{g}_1\vect{g}_1^\top]=\mat{I}_n.
\end{equation}
Since $n$ is fixed, this convergence also holds in Frobenius norm, so $\tfrac{1}{d}\mat{H}\mat{X}\mat{X}^\top\mat{H}\to\mat{H}$ a.s.; the identical argument gives $\tfrac{1}{d}\mat{H}\mat{Y}\mat{Y}^\top\mat{H}\to\mat{H}$. By scale invariance of \gls{cka},
\begin{equation}
\mathrm{CKA}(\mat{X},\mat{Y})=\frac{\langle \tfrac{1}{d}\mat{H}\mat{X}\mat{X}^\top\mat{H},\,\tfrac{1}{d}\mat{H}\mat{Y}\mat{Y}^\top\mat{H}\rangle_F}{\|\tfrac{1}{d}\mat{H}\mat{X}\mat{X}^\top\mat{H}\|_F\,\|\tfrac{1}{d}\mat{H}\mat{Y}\mat{Y}^\top\mat{H}\|_F}\xrightarrow{\mathrm{a.s.}}\frac{\langle\mat{H},\mat{H}\rangle_F}{\|\mat{H}\|_F^2}=1.
\end{equation}
Subtracting the population value gives $\mathrm{CKA}(\mat{X},\mat{Y})-\rho^2\to 1-\rho^2>0$.
\end{proof}

Thus, at any fixed sample size $n$, increasing the width drives the estimated \gls{cka} toward $1$ regardless of the true alignment $\rho^2$, inflating it by up to $1-\rho^2$. This width-driven gap is distinct from the $\mathcal{O}(1/n)$ sample-size effect that vanishes as $n$ grows, and it is exactly what permutation calibration removes.
\Cref{app:width-h1-real} verifies this on real pretrained networks with nonzero true similarity.

\paragraph{CCA-based scores.}
\Gls{cca}, \gls{svcca}, and \gls{pwcca} fall outside the energy analysis above: their canonical correlations are computed from the \emph{whitened} cross-covariance, which normalizes away the cross-covariance energy of \Cref{prop:nonzero_energy}. They therefore do not follow the $\mathcal{O}(d/n)$ baseline. Their null instead reflects the rank deficiency of the whitening near and beyond $d \approx n$: mean \gls{cca} peaks at $d \approx n$ and decays as $\mathcal{O}(n/d)$ for $d > n$, \gls{pwcca} saturates toward its ceiling once $d \gtrsim n$, and \gls{svcca} is largely width-insensitive because it first projects onto a fixed low-dimensional subspace (\Cref{app:null-drift-full}). Permutation calibration removes this inflation without closed-form solutions, unlike the energy-based metrics.

\paragraph{Why we use permutation rather than closed forms.}
Closed-form bulk edges are ensemble- and normalization-specific and are brittle to the preprocessing used in practice (\textit{e.g.}, centering, whitening, kernelization).
Moreover, finite-$n$ corrections can be non-negligible.
We therefore estimate the relevant right-tail behavior nonparametrically via permutation.
This yields a conservative, implementation-faithful estimate of chance fluctuations without relying on fragile analytical formulas.

\subsection{The depth confounder}
\label{app:theory_evt}

Here we formalize why selection-based summaries (\textit{e.g.}, maximum similarity over layer pairs) inflate with the size of the search space using \gls{evt}.

Let $\mathcal{S}=\{S_{\ell,\ell'}:1\le \ell\le L_A,\,1\le \ell'\le L_B\}$ denote the collection of null similarity fluctuations under $H_0$, and let $M=L_A L_B$.

\begin{assumption}[Uniform sub-Gaussian right tails and integrability]\label{assm:subgaussian}
There exist $\mu \in \mathbb R$ and $\sigma>0$ such that for all $(\ell,\ell')$ and all $t\ge 0$,
\begin{equation}
    \Prob(S_{\ell,\ell'}-\mu \ge t)\le \exp\!\left(-\frac{t^2}{2\sigma^2}\right).
\end{equation}
Moreover, each $S_{\ell,\ell'}$ is integrable: $\E|S_{\ell,\ell'}|<\infty$ for all $(\ell,\ell')$.
\end{assumption}

\begin{proposition}[Maximal inequality, no independence required]
\label{prop:max_ineq}
Under \Cref{assm:subgaussian} and for $M\ge 2$,
\begin{equation}
    \E\Big[\max_{\ell,\ell'} S_{\ell,\ell'}\Big] \;\le\; \mu + C\,\sigma\sqrt{\log M},
\end{equation}
where $C>0$ is a constant (\textit{e.g.}, one can take $C=3$).
\end{proposition}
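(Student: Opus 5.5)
Write $Z:=\max_{\ell,\ell'}(S_{\ell,\ell'}-\mu)$, so that $\E[\max_{\ell,\ell'} S_{\ell,\ell'}]=\mu+\E[Z]$; the statement then reduces to showing $\E[Z]\le C\sigma\sqrt{\log M}$. By the integrability part of \Cref{assm:subgaussian}, $\E|Z|\le\sum_{\ell,\ell'}\E|S_{\ell,\ell'}-\mu|<\infty$, so $\E[Z]$ is well defined. Since $Z$ may be negative, I bound it by its positive part, $\E[Z]\le\E[Z_+]$. No independence across layer pairs will be used, only a union bound.

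The main route is the tail-integral formula $\E[Z_+]=\int_0^\infty\Prob(Z\ge t)\,dt$. The union bound over the $M$ pairs together with the uniform right-tail condition gives
\begin{equation}
\Prob(Z\ge t)\le\min\Bigl\{1,\;M\exp\!\Bigl(-\frac{t^2}{2\sigma^2}\Bigr)\Bigr\}.
\end{equation}
I will split the integral at $t_0=\sigma\sqrt{2\log M}$, which is exactly where $M e^{-t_0^2/2\sigma^2}=1$. On $[0,t_0]$ I bound the probability by $1$, contributing at most $t_0$. On $[t_0,\infty)$ I substitute $t=t_0+u$ and use $(t_0+u)^2\ge t_0^2+2t_0u$, which gives
\begin{equation}
\int_{t_0}^\infty M e^{-t^2/2\sigma^2}\,dt\le\int_0^\infty e^{-t_0u/\sigma^2}\,du=\frac{\sigma^2}{t_0}=\frac{\sigma}{\sqrt{2\log M}}.
\end{equation}
An alternative for this tail piece is the Mills-ratio bound, which gives the same quantity.

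Combining the two pieces yields $\E[Z]\le\sigma\sqrt{2\log M}+\sigma/\sqrt{2\log M}$. The only delicate step is turning this into a single constant times $\sqrt{\log M}$, and this is where the hypothesis $M\ge 2$ enters. I would write the bound as $\sigma\sqrt{\log M}\,\bigl(\sqrt2+1/(\sqrt2\log M)\bigr)$. Since $\log M\ge\log 2>0.69$, the bracket is at most $\sqrt2+1/(\sqrt2\cdot 0.69)\approx 1.414+1.025<3$, so $C=3$ works.

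If one prefers to avoid the tail integral, the Chernoff/Jensen route is a fallback. The one-sided condition alone does not give a moment generating function bound, so I would integrate the tail bound to obtain $\E[e^{\lambda(S-\mu)_+}]\le 1+\sqrt{2\pi}\,\lambda\sigma\, e^{\lambda^2\sigma^2/2}$, and then optimize over $\lambda$. This is messier, so I will commit to the tail-integral argument above.
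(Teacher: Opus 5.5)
Your proposal is correct and follows the paper's proof essentially step for step. Both set $Z=\max_{\ell,\ell'}S_{\ell,\ell'}-\mu$, bound $\E[Z]$ by $\int_0^\infty\min\{1,Me^{-t^2/2\sigma^2}\}\,dt$ via the union bound, split at $t_0=\sigma\sqrt{2\log M}$, and absorb $\sigma\sqrt{2\log M}+\sigma/\sqrt{2\log M}$ into $3\sigma\sqrt{\log M}$ using $M\ge 2$. The only cosmetic difference is that you derive the tail estimate $\int_{t_0}^\infty e^{-t^2/2\sigma^2}\,dt\le(\sigma^2/t_0)e^{-t_0^2/2\sigma^2}$ from $(t_0+u)^2\ge t_0^2+2t_0u$, whereas the paper cites it as a standard Gaussian tail bound.
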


\begin{proof}
Let $Z := \max_{\ell,\ell'} S_{\ell,\ell'} - \mu$.
Since $M<\infty$ and $\E|S_{\ell,\ell'}|<\infty$ for all $(\ell,\ell')$, we have
\begin{equation}
\E|Z| \le \E\Big[\max_{\ell,\ell'}|S_{\ell,\ell'}|\Big] + |\mu|
\le \sum_{\ell,\ell'}\E|S_{\ell,\ell'}| + |\mu| < \infty,
\end{equation}
so $Z$ is integrable, and the tail-integration formula applies.
By the union bound and 
\Cref{assm:subgaussian},
\begin{equation}
    \Prob(Z\ge t)\le M\exp\!\left(-\frac{t^2}{2\sigma^2}\right)\qquad \text{for all }t\ge 0.
\end{equation}
Using the tail-integration formula for an integrable real-valued random variable $Z$,
\begin{equation}
    \mathbb E[Z] = \int_0^\infty \mathbb P(Z \ge t)\,dt
    - \int_0^\infty \mathbb P(Z \le -t)\,dt
    \le \int_0^\infty \mathbb P(Z \ge t)\,dt,
\end{equation}
and the bound \(\mathbb P(Z \ge t)\le 1\), we obtain
\begin{equation}
\E[Z]\;\le\;\int_{0}^{\infty}\min\!\left\{1,\;M\exp\!\left(-\frac{t^2}{2\sigma^2}\right)\right\}\,dt.
\end{equation}
Let $t_0=\sigma\sqrt{2\log M}$.
This value of $t_0$ is the solution of $M\exp\!\left(-t_0^2/2\sigma^2\right) = 1$, \textit{i.e.}, the crossover where the bound $\min\{1, \cdot\}$ switches.
Splitting the integral at $t_0$ yields
\begin{equation}
\E[Z]\;\le\; t_0 \;+\; M\int_{t_0}^{\infty}\exp\!\left(-\frac{t^2}{2\sigma^2}\right)\,dt.
\end{equation}
Applying the standard Gaussian tail bound $\int_{t_0}^{\infty}e^{-t^2/(2\sigma^2)}dt \le (\sigma^2/t_0)e^{-t_0^2/(2\sigma^2)}$ gives
\begin{equation}
\E[Z]\;\le\;\sigma\sqrt{2\log M}\;+\;\frac{\sigma}{\sqrt{2\log M}}.
\end{equation}
For $M\ge 2$, the right-hand side is at most $3\sigma\sqrt{\log M}$, proving the claim with $C=3$.
\end{proof}

\paragraph{Remark.}
When the $S_{\ell,\ell'}$ are i.i.d.\ (or weakly dependent), classical \acrlong{evt} yields sharper asymptotics.
For example, if $S_{\ell,\ell'}\sim\mathcal{N}(\mu_0,\sigma_0^2)$ i.i.d., the centered maximum converges to a Gumbel distribution and
\begin{equation}
    \E[T_{\max}] \approx \mu_0 + \sigma_0 \left( \sqrt{2 \ln M} - \frac{\ln \ln M + \ln 4\pi}{2\sqrt{2 \ln M}} \right),
\end{equation}
as stated in standard references \citep{cramer1999mathematical,embrechts2013modelling}.
Real layer-wise similarities are dependent, so the approximation above should be treated as heuristic; \Cref{prop:max_ineq} provides a dependence-robust upper bound.

\subsection{Null Baselines for Neighborhood Metrics}
\label{app:neighborhood-null}

The preceding analysis focused on the cross-covariance-energy metrics (\gls{cka} and the RV coefficient), whose null baselines scale with $d/n$.
Neighborhood-based metrics such as mutual $k$-NN follow a fundamentally different regime, which we now characterize.

\begin{definition}[Mutual $k$-NN overlap]
\label{def:mknn}
For representations $\mat{X} \in \R^{n \times d_x}, \mat{Y} \in \R^{n \times d_y}$ and neighborhood size $k < n$, let $N_{\mat{X}}(i) \subseteq \{1, \ldots, n\} \setminus \{i\}$ denote the indices of the $k$ nearest neighbors of sample $i$ in $\mat{X}$ (\textit{e.g.}, Euclidean or cosine), and similarly for $N_{\mat{Y}}(i)$.
The mutual $k$-NN overlap is
\begin{equation}
    \mathrm{mKNN}(\mat{X}, \mat{Y}) = \frac{1}{n} \sum_{i=1}^{n} \frac{|N_{\mat{X}}(i) \cap N_{\mat{Y}}(i)|}{k}.
\end{equation}
\end{definition}


\begin{proposition}[Uniformity of $k$-NN index sets under i.i.d.\ sampling]
\label{prop:knn_uniform}
Fix an anchor index $i\in\{1,\dots,n\}$. Let $\vect{x}_1,\dots,\vect{x}_n\in\R^d$ be i.i.d. and define the $k$-NN set
$N_{\mat{X}}(i)\subseteq\{1,\dots,n\}\setminus\{i\}$ using a fixed distance $\mathrm{dist}(\cdot,\cdot)$.
Assume either \textit{(i)} $\{\mathrm{dist}(\vect{x}_i,\vect{x}_j)\}_{j\neq i}$ are almost surely distinct, or \textit{(ii)} ties are broken by selecting a
uniformly random $k$-subset among the set of minimizers. Then $N_{\mat{X}}(i)$ is uniformly distributed over the
$\binom{n-1}{k}$ $k$-subsets of $\{1,\dots,n\}\setminus\{i\}$.
\end{proposition}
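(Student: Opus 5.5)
The argument is a symmetry (exchangeability) argument for the points other than the anchor. Fix $i$ and let $\sigma$ be any permutation of $\{1,\dots,n\}$ that fixes $i$. Consider the relabeled sample $\vect{x}'_j := \vect{x}_{\sigma(j)}$. Because the $\vect{x}_j$ are i.i.d., the vector $(\vect{x}'_1,\dots,\vect{x}'_n)$ has the same joint law as $(\vect{x}_1,\dots,\vect{x}_n)$. Since $\sigma(i)=i$, the anchor is unchanged: $\vect{x}'_i=\vect{x}_i$.

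The $k$-NN rule is a deterministic function of the data in case \textit{(i)}, and a function of the data plus independent auxiliary randomness in case \textit{(ii)}. In both cases it is equivariant under relabeling. Concretely,
\[
N_{\mat{X}'}(i) = \sigma^{-1}\bigl(N_{\mat{X}}(i)\bigr),
\]
because $\mathrm{dist}(\vect{x}'_i,\vect{x}'_j)=\mathrm{dist}(\vect{x}_i,\vect{x}_{\sigma(j)})$. Combining this with equality in law of $\mat{X}'$ and $\mat{X}$ gives
\[
\Prob\bigl(N_{\mat{X}}(i)=A\bigr)=\Prob\bigl(N_{\mat{X}}(i)=\sigma(A)\bigr)
\]
for every $k$-subset $A\subseteq\{1,\dots,n\}\setminus\{i\}$ and every such $\sigma$.

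To finish, note that permutations fixing $i$ act transitively on the $k$-subsets of $\{1,\dots,n\}\setminus\{i\}$. Hence all $\binom{n-1}{k}$ subsets have the same probability. It remains to check that these probabilities sum to one, i.e. that $N_{\mat{X}}(i)$ is almost surely a well-defined $k$-subset. Under \textit{(i)}, distinct distances make the $k$ smallest unique a.s. Under \textit{(ii)}, the tie-breaking always outputs a $k$-subset. So each subset has probability $1/\binom{n-1}{k}$.

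The main subtlety I expect is in case \textit{(ii)}: making the equivariance statement rigorous with random tie-breaking. I would handle it by modeling the tie-break as a uniform random ordering $\xi$ of indices, independent of the data. Under relabeling, $\xi$ is replaced by $\sigma^{-1}\circ\xi$, which has the same law and is still independent of $\mat{X}'$. The pair (data, tie-break) is therefore exchangeable under $\sigma$, and the argument above goes through unchanged. A second point to verify is that "nearest" means ties among distances of equal value, not ties between distinct points. Since nothing else about $\mathrm{dist}$ is used, any fixed distance works.
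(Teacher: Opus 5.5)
Your proof is correct and follows essentially the same route as the paper. Both arguments use invariance in law of the sample under relabelings that fix the anchor, equivariance of the $k$-NN rule under such relabelings, transitivity of these permutations on $k$-subsets of $\{1,\dots,n\}\setminus\{i\}$, and normalization over the partition $\{N_{\mat{X}}(i)=S\}$. Your version is slightly more careful: permuting all $n$ indices while fixing $i$ handles the joint law with the anchor explicitly, and you treat the random tie-break as independent auxiliary randomness that is relabeled along with the data.
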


\begin{proof}
Let $\mathcal{I}:=\{1,\dots,n\}\setminus\{i\}$ be the candidate-neighbor index set.
For any permutation $\pi$ of $\mathcal{I}$, i.i.d.\ sampling implies
\[
(\vect{x}_j)_{j\in\mathcal{I}} \stackrel{d}{=} (\vect{x}_{\pi(j)})_{j\in\mathcal{I}}.
\]
The $k$-NN selection rule depends on the candidate points only through their distances to $\vect{x}_i$, so permuting the
candidate indices permutes the resulting neighbor set. Under either the no-ties assumption or the stated uniform tie-break
rule, for any two $k$-subsets $S,S'\subseteq\mathcal{I}$ there exists a permutation $\pi$ with $\pi(S)=S'$ and hence
\[
\Prob\!\big(N_{\mat{X}}(i)=S\big) = \Prob\!\big(N_{\mat{X}}(i)=S'\big).
\]
Since the events $\{N_{\mat{X}}(i)=S\}$ over all $|S|=k$ partition the sample space, each has probability
$\binom{n-1}{k}^{-1}$.
\end{proof}

\begin{theorem}[Null baseline for mutual $k$-NN]
\label{thm:mknn_null}
Let $\mat{X},\mat{Y}\in\R^{n\times d}$ have i.i.d.\ rows, with $\mat{X}$ independent of $\mat{Y}$.
Define $N_{\mat{X}}(i)$ and $N_{\mat{Y}}(i)$ as in \Cref{def:mknn}, using either almost sure absence of distance ties or
uniform random tie-breaking. Then
\[
\E_{H_0}\!\bigg[\mathrm{mKNN}(\mat{X},\mat{Y})\bigg] \;=\; \frac{k}{n-1}.
\]
\end{theorem}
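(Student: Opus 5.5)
By linearity of expectation, I will reduce the claim to a single anchor and then use \Cref{prop:knn_uniform} together with independence of $\mat{X}$ and $\mat{Y}$. Concretely,
\[
\E_{H_0}\big[\mathrm{mKNN}(\mat{X},\mat{Y})\big]=\frac{1}{nk}\sum_{i=1}^n \E\big[|N_{\mat{X}}(i)\cap N_{\mat{Y}}(i)|\big],
\]
so it suffices to show that $\E\big[|N_{\mat{X}}(i)\cap N_{\mat{Y}}(i)|\big]=k^2/(n-1)$ for every fixed anchor $i$. Summing over the $n$ anchors and dividing by $nk$ then gives $k/(n-1)$.

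For a fixed $i$, write $\mathcal{I}=\{1,\dots,n\}\setminus\{i\}$ and expand the intersection size as a sum of indicators:
\[
|N_{\mat{X}}(i)\cap N_{\mat{Y}}(i)|=\sum_{j\in\mathcal{I}}\mathbbm{1}[j\in N_{\mat{X}}(i)]\,\mathbbm{1}[j\in N_{\mat{Y}}(i)].
\]
The set $N_{\mat{X}}(i)$ is a measurable function of $\mat{X}$ alone (together with any independent tie-breaking randomization). Likewise, $N_{\mat{Y}}(i)$ depends only on $\mat{Y}$. Since $\mat{X}$ is independent of $\mat{Y}$, and the tie-breaking randomizations are taken independent of each other and of the data, the two indicators are independent. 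Hence each summand has expectation $\Prob(j\in N_{\mat{X}}(i))\,\Prob(j\in N_{\mat{Y}}(i))$.

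By \Cref{prop:knn_uniform}, applied separately to the i.i.d.\ rows of $\mat{X}$ and of $\mat{Y}$, each neighbor set is uniform over the $k$-subsets of $\mathcal{I}$. A fixed $j\in\mathcal{I}$ therefore belongs to it with probability $\binom{n-2}{k-1}/\binom{n-1}{k}=k/(n-1)$. Summing over the $n-1$ indices $j$ gives $(n-1)\cdot k^2/(n-1)^2=k^2/(n-1)$, which is what the reduction required.

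The only delicate point is justifying the independence of the two neighbor sets once tie-breaking is involved. I will handle it by modeling the random tie-break as an auxiliary uniform variable attached to each space, independent of everything else, so that $N_{\mat{X}}(i)$ and $N_{\mat{Y}}(i)$ are functions of independent random elements. Under the almost-sure distinctness hypothesis no such variable is needed. Note that rows within $\mat{X}$ are dependent through the shared anchor $\vect{x}_i$, but this does not matter: the argument never needs independence across $j$, only the marginal inclusion probability supplied by \Cref{prop:knn_uniform}.
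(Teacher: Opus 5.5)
Your proof is correct and follows the paper's argument. You fix an anchor, use \Cref{prop:knn_uniform} to make $N_{\mat{X}}(i)$ and $N_{\mat{Y}}(i)$ uniform $k$-subsets of $\{1,\dots,n\}\setminus\{i\}$, use the independence of $\mat{X}$ and $\mat{Y}$ to make them independent, and average over anchors. The differences are minor: you compute $\E|N_{\mat{X}}(i)\cap N_{\mat{Y}}(i)|=k^2/(n-1)$ by summing inclusion indicators instead of citing the hypergeometric mean, and your treatment of tie-breaking as independent auxiliary randomness is a bit more careful than the paper's ``measurable function of $\mat{X}$'' phrasing.
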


\begin{proof}
Fix an anchor $i$. By \Cref{prop:knn_uniform}, $N_{\mat{X}}(i)$ and $N_{\mat{Y}}(i)$ are each uniform random $k$-subsets
of the $(n-1)$-element set $\{1,\dots,n\}\setminus\{i\}$. Moreover, since $\mat{X}$ and $\mat{Y}$ are independent and
$N_{\mat{X}}(i)$ (resp.\ $N_{\mat{Y}}(i)$) is a measurable function of $\mat{X}$ (resp.\ $\mat{Y}$), the sets
$N_{\mat{X}}(i)$ and $N_{\mat{Y}}(i)$ are independent.

Therefore $|N_{\mat{X}}(i)\cap N_{\mat{Y}}(i)|$ has a hypergeometric distribution with population size $n-1$, number of
``successes'' $k$, and draws $k$, giving
\[
\E_{H_0}\!\bigg[|N_{\mat{X}}(i)\cap N_{\mat{Y}}(i)|\bigg] = \frac{k^2}{n-1}.
\]
Substituting into the definition of $\mathrm{mKNN}$,
\[
\E_{H_0}\bigg[\mathrm{mKNN}(\mat{X},\mat{Y})\bigg]
= \frac{1}{n}\sum_{i=1}^n \E_{H_0}\!\left[\frac{|N_{\mat{X}}(i) \cap N_{\mat{Y}}(i)|}{k}\right]
= \frac{1}{n}\sum_{i=1}^n \frac{k}{n-1}
= \frac{k}{n-1}.
\]
\end{proof}

\begin{proposition}[Per-anchor variance and generic bounds for $\mathrm{mKNN}$ under the null]
\label{prop:mknn_variance}
Under the assumptions of \Cref{thm:mknn_null}, for each anchor $i$ the intersection size
$H_i := |N_{\mat{X}}(i)\cap N_{\mat{Y}}(i)|$ is hypergeometric with mean $k^2/(n-1)$ and variance
\[
\mathrm{Var}[H_i] \;=\; \frac{k^2 (n-1-k)^2}{(n-1)^2 (n-2)}.
\]
Moreover, since $\mathrm{mKNN}(\mat{X},\mat{Y})\in[0,1]$ deterministically, we have the fully general bound
\[
\mathrm{Var}[\mathrm{mKNN}(\mat{X},\mat{Y})] \le \frac{1}{4}.
\]
If one \emph{additionally assumes} that the per-anchor terms $\{|N_{\mat{X}}(i)\cap N_{\mat{Y}}(i)|/k\}_{i=1}^n$ are
independent (this is a modeling assumption, not a consequence of $H_0$), then
$\mathrm{Var}[\mathrm{mKNN}(\mat{X},\mat{Y})]=\mathcal{O}(1/n)$.
\end{proposition}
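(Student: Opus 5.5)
The plan is to handle the three claims separately, in the order stated, reusing the structure already established in the proof of \Cref{thm:mknn_null}. For the per-anchor law, I fix an anchor $i$ and set $N:=n-1$, the size of the candidate set $\mathcal{I}=\{1,\dots,n\}\setminus\{i\}$. By \Cref{prop:knn_uniform}, $N_{\mat{X}}(i)$ and $N_{\mat{Y}}(i)$ are uniform $k$-subsets of $\mathcal{I}$. They are independent because $\mat{X}$ and $\mat{Y}$ are independent.

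I then condition on $N_{\mat{X}}(i)=S$ for a fixed $k$-subset $S$. Conditionally, $N_{\mat{Y}}(i)$ is still a uniform $k$-subset of $\mathcal{I}$. Hence $H_i$ counts how many of $k$ draws without replacement from a population of size $N$ land in a fixed set of $K=k$ ``successes''. This conditional law is $\mathrm{Hypergeom}(N,K=k,k)$, and since it does not depend on $S$, it is also the unconditional law of $H_i$. The mean $k^2/(n-1)$ was already obtained in \Cref{thm:mknn_null}.

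For the variance I will quote the standard hypergeometric formula
\begin{equation*}
\mathrm{Var}[H_i]=k\cdot\frac{K}{N}\cdot\frac{N-K}{N}\cdot\frac{N-k}{N-1}.
\end{equation*}
I will derive it quickly by writing $H_i=\sum_{j\in S}\mathbbm{1}[j\in N_{\mat{Y}}(i)]$. Each indicator has mean $k/N$, and each pair $j\neq j'$ has joint inclusion probability $k(k-1)/(N(N-1))$. Summing the variances and covariances gives the formula. Substituting $K=k$ and $N=n-1$ gives $k^2(n-1-k)^2/((n-1)^2(n-2))$. The only care needed is to note that this requires $n\ge 3$, since the $N-1=n-2$ denominator comes from the covariance term; the degenerate cases $k=n-1$ or $n=2$ give a zero variance and can be set aside.

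The general bound $\mathrm{Var}[\mathrm{mKNN}]\le 1/4$ is Popoviciu's inequality for a random variable supported in $[0,1]$. Concretely, $\mathrm{Var}[Z]\le \E[(Z-\tfrac12)^2]\le \tfrac14$ whenever $0\le Z\le 1$. Boundedness of $\mathrm{mKNN}$ in $[0,1]$ is immediate from \Cref{def:mknn}, since $0\le H_i\le k$.

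Under the additional independence assumption, the variance of the average splits as
\begin{equation*}
\mathrm{Var}[\mathrm{mKNN}]=\frac{1}{n^2}\sum_{i=1}^n\frac{\mathrm{Var}[H_i]}{k^2}=\frac{1}{n}\cdot\frac{(n-1-k)^2}{(n-1)^2(n-2)}.
\end{equation*}
Using $(n-1-k)^2\le (n-1)^2$, this is at most $1/(n(n-2))$, which is $\mathcal{O}(1/n)$ and in fact $\mathcal{O}(1/n^2)$. Without that assumption one would instead have to control the covariances between different anchors.

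None of these steps is a real obstacle. The only delicate point is justifying the hypergeometric law cleanly, which the conditioning on $N_{\mat{X}}(i)=S$ handles.
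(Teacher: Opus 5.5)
Your proposal is correct and follows the paper's proof step for step. Both use the hypergeometric variance formula (population $n-1$, $k$ successes, $k$ draws), the trivial bound from $\mathrm{mKNN}\in[0,1]$, and the variance of an average of independent per-anchor terms; your extra details are all correct, namely conditioning on $N_{\mat{X}}(i)=S$ to justify the hypergeometric law, the indicator derivation of the variance, and the observation that under independence the variance is at most $1/(n(n-2))$, i.e.\ $\mathcal{O}(1/n^2)$ uniformly in $k$, which sharpens the paper's ``$\mathcal{O}(1/n)$ for fixed $k$''.
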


\begin{proof}
The hypergeometric variance formula gives
\[
\mathrm{Var}[H_i]
=
k\cdot \frac{k}{n-1}\left(1-\frac{k}{n-1}\right)\cdot \frac{(n-1)-k}{(n-1)-1}
=
\frac{k^2 (n-1-k)^2}{(n-1)^2 (n-2)}.
\]
The bound $\mathrm{Var}[\mathrm{mKNN}]\le 1/4$ follows from $\mathrm{mKNN}\in[0,1]$.
Under the stated additional independence assumption across anchors,
\[
\mathrm{Var}\bigg[\mathrm{mKNN}(\mat{X}, \mat{Y})\bigg] = \frac{1}{n}\,\mathrm{Var}\!\left(\frac{H_1}{k}\right) = \frac{1}{n k^2}\,\mathrm{Var}[H_1],
\]
which is $\mathcal{O}(1/n)$ for fixed $k$.
\end{proof}

\section{Implementation}
\label{app:implementation}

A key advantage of null calibration is its simplicity: the framework can be applied to \emph{any} similarity metric with minimal code changes.
This section provides pseudocode for the two main calibration procedures described in the paper.

\paragraph{Scalar null calibration.}
\Cref{alg:scalar_calibration} shows the complete procedure for calibrating a single similarity comparison.
The only requirement is a function \texttt{similarity(X,Y)} that computes the raw metric.
The algorithm returns both a permutation $p$-value and a calibrated score with a principled zero point.

\begin{algorithm}[htbp]
\caption{Scalar Null Calibration}
\label{alg:scalar_calibration}
\begin{algorithmic}[1]
\REQUIRE Representations $\mat{X} \in \R^{n \times d_x}$, $\mat{Y} \in \R^{n \times d_y}$
\REQUIRE Similarity function $\texttt{sim}(\cdot, \cdot)$, permutations $K$, significance level $\alpha$
\ENSURE Calibrated score $s_{\mathrm{cal}}$, $p$-value $p$
\STATE $s_{\mathrm{obs}} \gets \texttt{sim}(\mat{X}, \mat{Y})$ \COMMENT{Observed similarity}
\STATE $\texttt{null\_scores} \gets []$
\FOR{$k = 1$ to $K$}
    \STATE $\pi \gets \texttt{random\_permutation}(n)$ \COMMENT{Permute sample indices}
    \STATE $\mat{Y}_\pi \gets \mat{Y}[\pi, :]$ \COMMENT{Permute rows of $\mat{Y}$}
    \STATE $\texttt{null\_scores}[k] \gets \texttt{sim}(\mat{X}, \mat{Y}_\pi)$
\ENDFOR
\STATE $\texttt{combined} \gets [s_{\mathrm{obs}}] \cup \texttt{null\_scores}$ \COMMENT{Combined set}
\STATE $\tau_\alpha \gets s_{(\lceil (1-\alpha)(K+1) \rceil)}$ \COMMENT{Ceiling order statistic of sorted \texttt{combined} (\Cref{eq:get_tau_alpha}); not an interpolated quantile}
\STATE $p \gets \frac{1 + \sum_{k=1}^{K} \mathbbm{1}[\texttt{null\_scores}[k] \geq s_{\mathrm{obs}}]}{K + 1}$ \COMMENT{Permutation $p$-value}
\STATE $s_{\mathrm{cal}} \gets \max\left(\frac{s_{\mathrm{obs}} - \tau_\alpha}{s_{\max} - \tau_\alpha}, 0\right)$ \COMMENT{Calibrated score (use $s_{\max}=1$ for bounded metrics)}
\RETURN $s_{\mathrm{cal}}, p$
\end{algorithmic}
\end{algorithm}

\paragraph{Aggregation-aware calibration for layer-wise comparisons.}
When comparing models with multiple layers and reporting a summary statistic (\textit{e.g.}, maximum similarity across layer pairs), the aggregation step must also be calibrated.
\Cref{alg:aggregation_calibration} shows how to extend scalar calibration to this setting.
The key insight is that the \emph{same} sample permutation must be applied consistently across all layers.

\begin{algorithm}[htbp]
\caption{Aggregation-Aware Null Calibration}
\label{alg:aggregation_calibration}
\begin{algorithmic}[1]
\REQUIRE Layer representations $\{\mat{X}^{(\ell)}\}_{\ell=1}^{L_A}$, $\{\mat{Y}^{(\ell')}\}_{\ell'=1}^{L_B}$ (all $n$ samples)
\REQUIRE Similarity function $\texttt{sim}(\cdot, \cdot)$, aggregator $T$ (\textit{e.g.}, $\max$), permutations $K$, level $\alpha$
\ENSURE Calibrated aggregate $T_{\mathrm{cal}}$, $p$-value $p_{\mathrm{agg}}$
\STATE \COMMENT{Compute observed similarity matrix}
\FOR{$\ell = 1$ to $L_A$}
    \FOR{$\ell' = 1$ to $L_B$}
        \STATE $\mat{S}[\ell, \ell'] \gets \texttt{sim}(\mat{X}^{(\ell)}, \mat{Y}^{(\ell')})$
    \ENDFOR
\ENDFOR
\STATE $T_{\mathrm{obs}} \gets T(\mat{S})$ \COMMENT{\textit{e.g.}, $\max_{\ell,\ell'} \mat{S}[\ell,\ell']$}
\STATE $\texttt{null\_aggregates} \gets []$
\FOR{$k = 1$ to $K$}
    \STATE $\pi \gets \texttt{random\_permutation}(n)$ \COMMENT{Single permutation for all layers}
    \FOR{$\ell = 1$ to $L_A$}
        \FOR{$\ell' = 1$ to $L_B$}
            \STATE $\mat{S}^{(k)}[\ell, \ell'] \gets \texttt{sim}(\mat{X}^{(\ell)}, \mat{Y}^{(\ell')}[\pi, :])$ \COMMENT{Same $\pi$ for all $\ell'$}
        \ENDFOR
    \ENDFOR
    \STATE $\texttt{null\_aggregates}[k] \gets T(\mat{S}^{(k)})$ \COMMENT{Aggregate under null}
\ENDFOR
\STATE $\texttt{combined} \gets [T_{\mathrm{obs}}] \cup \texttt{null\_aggregates}$ \COMMENT{Combined set}
\STATE $\tau_\alpha^{\mathrm{agg}} \gets T_{(\lceil (1-\alpha)(K+1) \rceil)}$ \COMMENT{Ceiling order statistic of sorted \texttt{combined}; not an interpolated quantile}
\STATE $p_{\mathrm{agg}} \gets \frac{1 + \sum_{k=1}^{K} \mathbbm{1}[\texttt{null\_aggregates}[k] \geq T_{\mathrm{obs}}]}{K + 1}$
\STATE $T_{\mathrm{cal}} \gets \max\left(\frac{T_{\mathrm{obs}} - \tau_\alpha^{\mathrm{agg}}}{s_{\max} - \tau_\alpha^{\mathrm{agg}}}, 0\right)$
\RETURN $T_{\mathrm{cal}}, p_{\mathrm{agg}}$
\end{algorithmic}
\end{algorithm}

\paragraph{Computational cost.}
Let $C_{\mathrm{sim}}$ denote the cost of a single similarity evaluation on $n$ samples.
Scalar calibration (\Cref{alg:scalar_calibration}) performs $K+1$ evaluations at cost $\mathcal{O}\!\left((K{+}1)\,C_{\mathrm{sim}}\right)$, and aggregation-aware calibration (\Cref{alg:aggregation_calibration}) over an $L_A \times L_B$ layer grid costs $\mathcal{O}\!\left((K{+}1)\,L_A L_B\,C_{\mathrm{sim}}\right)$.
The $K$ null evaluations are independent and run in parallel.
The marginal cost per permutation is typically far below $C_{\mathrm{sim}}$, because permuting the rows of $\mat{Y}$ leaves each model's own representations unchanged.
Metrics that are functionals of the $n\times n$ Gram, dissimilarity, or neighbor structures (\gls{cka}, the RV coefficient, \gls{rsa}, CKNNA) and the neighborhood metrics (\gls{mknn}, cycle-$k$NN) build these structures once in $\mathcal{O}(n^2 d)$ and reduce each null draw to a relabeling, at $\mathcal{O}(n^2)$ or $\mathcal{O}(nk)$ and independent of the representation width $d$.
The subspace and shape metrics (\gls{cca}, \gls{svcca}, \gls{pwcca}, Procrustes) are the exception: each null draw recomputes an eigendecomposition or SVD of a $d$-dependent operator, at up to $\mathcal{O}(n d^2 + d^3)$ per permutation, which we batch across permutations on the GPU.
On a single NVIDIA GTX 1080\,Ti with $K=200$, this overhead is $20$\,ms per layer pair for linear \gls{cka} and $52$\,ms for \gls{mknn}, so a full layer-wise comparison between two models completes within a few seconds.
We use $K \in \{200, \dots, 500\}$ permutations throughout, which we find sufficient for stable thresholds and $p$-values (\Cref{app:perm-budget}).

\section{Additional Experimental Results}
\label{app:additional-experiments}

This appendix provides additional analyses that support the main text claims.

\subsection{Phase diagrams across different noise distributions}
\label{app:phase-diagrams}

The theoretical analysis in \Cref{sec:theory} assumes Gaussian entries for tractability, but real neural network activations rarely follow Gaussian distributions.
Instead, they often exhibit heavy tails, sparsity, or multimodality.
A critical question is whether our calibration, which makes no distributional assumptions, remains effective under such deviations.

\Cref{fig:phase-diagram} shows phase diagrams under different noise distributions: Gaussian, Student-$t$ ($\nu=3$), Laplace, and Gaussian mixtures.
Each panel shows raw scores (left) and calibrated scores (right) across the $(d/n, \sigma)$ grid, where $\sigma$ controls the noise level added to a fixed shared signal.
At low $\sigma$, the signal dominates and both raw and calibrated scores correctly indicate high similarity.
At high $\sigma$, noise overwhelms the signal, and similarity should approach zero.
The key finding is that raw scores remain elevated (around 0.4--0.6) even at high noise levels where no detectable signal remains, while calibrated scores correctly collapse to near-zero.
This pattern holds across all noise distributions tested, confirming that permutation-based calibration adapts to the data-generating process without requiring explicit distributional modeling.

\begin{figure}[htbp]
  \centering
   \begin{subfigure}[t]{0.49\linewidth}
    \includegraphics[width=\linewidth]{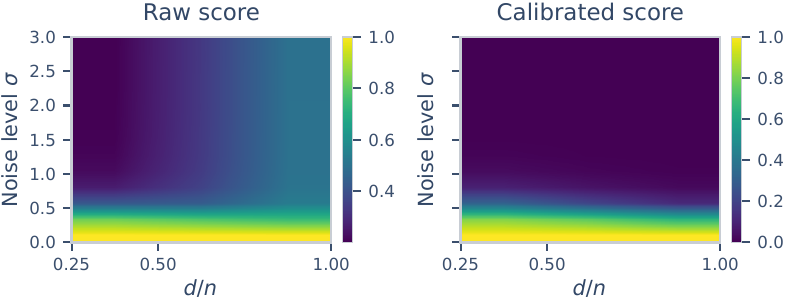}
    \caption{Gaussian}
  \end{subfigure}
  \hfill
  \begin{subfigure}[t]{0.49\linewidth}
    \includegraphics[width=\linewidth]{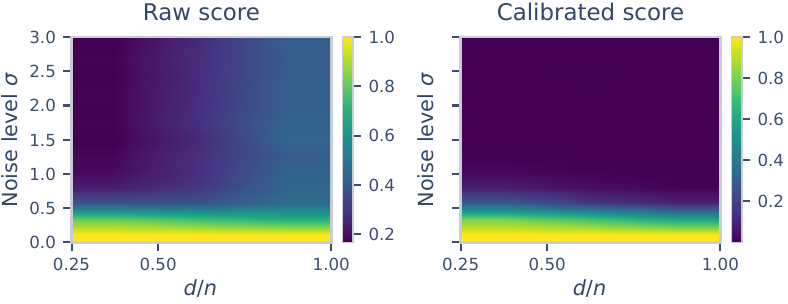}
    \caption{Student-$t$ ($\nu=3$)}
  \end{subfigure}
  \begin{subfigure}[t]{0.49\linewidth}
    \includegraphics[width=\linewidth]{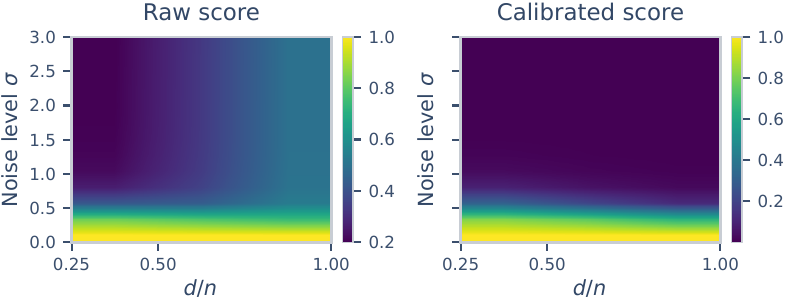}
    \caption{Laplace}
  \end{subfigure}
  \hfill
  \begin{subfigure}[t]{0.49\linewidth}
    \includegraphics[width=\linewidth]{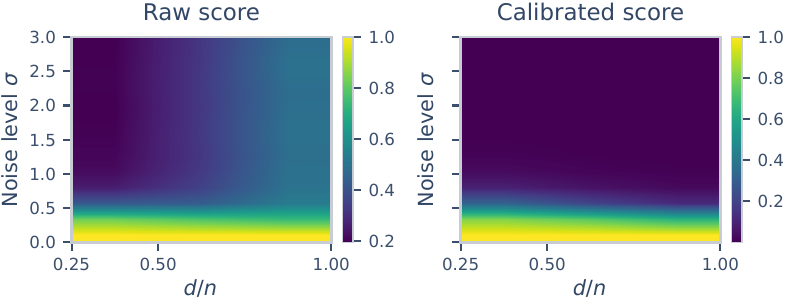}
    \caption{Gaussian mixture}
  \end{subfigure}
  \caption{
  \textbf{Phase diagrams under different noise types.}
  Calibrated scores (right) collapse to near-zero at high noise levels across the $(d/n, \sigma)$ grid, while raw scores (left) exhibit systematic positive bias.
  Calibration remains effective regardless of tail behavior.
  }
  \label{fig:phase-diagram}
\end{figure}

\subsection{SNR heatmaps}
\label{app:snr-heatmaps}

The experiments of the main paper (\Cref{fig:statistical-guarantees}) demonstrated that calibration eliminates false positives under $H_0$ while preserving sensitivity to fixed signals.
This section extends the analysis by characterizing how calibrated similarity varies jointly with signal strength, noise level, and dimensionality ratio, thereby delineating the regimes in which similarity estimation remains reliable.

\Cref{fig:snr-heatmaps} presents heatmaps of raw scores (top row) and calibrated scores (bottom row) across the (Noise level, Signal strength) grid for three signal ranks ($r \in \{1, 5, 10\}$).
The results reveal a clear phase transition structure.
Raw scores (top) show uniformly high values across most of the grid, obscuring the true detection boundary.
Calibrated scores (bottom) reveal the underlying signal: high scores concentrate in the low-noise, high-signal corner (bottom-left), while scores correctly collapse to zero as noise increases (moving right) or signal weakens (moving down).
The detection boundary shifts rightward (tolerating higher noise) as signal rank increases.
This phase structure is meaningful: it delineates when similarity measurements carry information about shared structure versus when they reflect only finite-sample artifacts.

\begin{figure}[htbp]
  \centering
  \begin{subfigure}[t]{0.32\linewidth}
    \includegraphics[width=\linewidth]{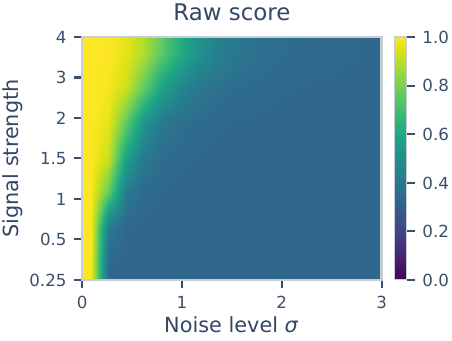}
    \caption{Rank $r=1$}
  \end{subfigure}
  \hfill
  \begin{subfigure}[t]{0.32\linewidth}
    \includegraphics[width=\linewidth]{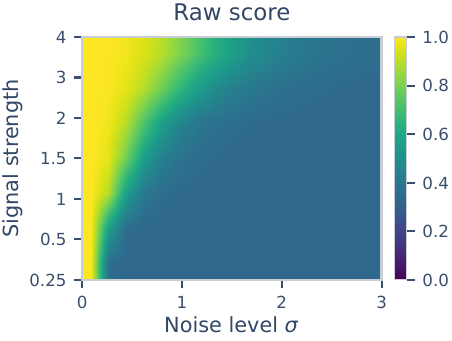}
    \caption{Rank $r=5$}
  \end{subfigure}
  \hfill
  \begin{subfigure}[t]{0.32\linewidth}
    \includegraphics[width=\linewidth]{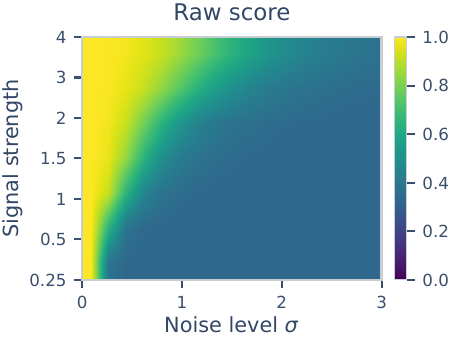}
    \caption{Rank $r=10$}
  \end{subfigure}
  \begin{subfigure}[t]{0.32\linewidth}
    \includegraphics[width=\linewidth]{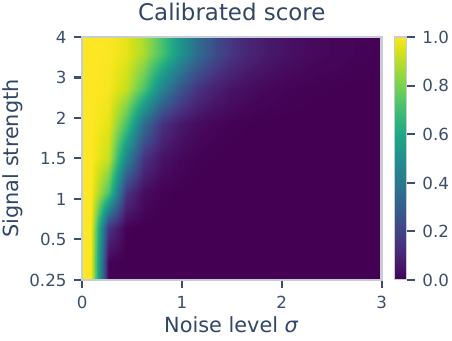}
    \caption{Rank $r=1$}
  \end{subfigure}
  \hfill
  \begin{subfigure}[t]{0.32\linewidth}
    \includegraphics[width=\linewidth]{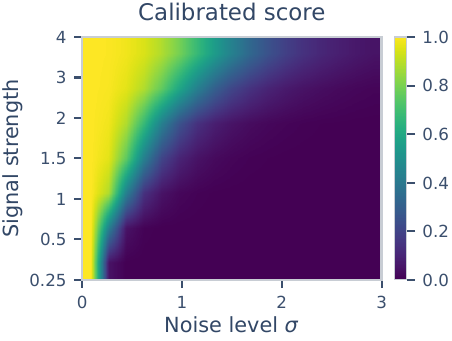}
    \caption{Rank $r=5$}
  \end{subfigure}
  \hfill
  \begin{subfigure}[t]{0.32\linewidth}
    \includegraphics[width=\linewidth]{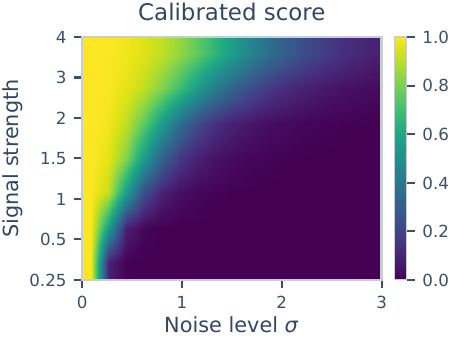}
    \caption{Rank $r=10$}
  \end{subfigure}
  \caption{
  \textbf{SNR heatmaps (calibrated scores).}
  Higher-rank signals are detected at higher noise levels.
  The clear gradient confirms calibration preserves sensitivity to genuine structure.
  }
  \label{fig:snr-heatmaps}
\end{figure}

\Cref{fig:snr-sweep-strength} provides a complementary view by collapsing the 2D heatmaps into 1D curves, plotting calibrated score against noise level for different signal strengths $s$.
As expected, calibrated scores decrease monotonically with noise level: at low noise, scores are high (reflecting the detectable shared signal), while at high noise, scores collapse to zero (reflecting that the signal is buried).
Stronger signals (larger $s$) maintain elevated scores across a wider range of noise levels before eventually succumbing.
Higher-rank signals ($r = 5, 10$) show more gradual decay compared to $r = 1$, consistent with their greater statistical detectability.
All curves converge to zero at high noise, confirming that the null floor is correctly calibrated regardless of signal strength or rank.

\begin{figure}[htbp]
  \centering
  \includegraphics[width=0.7\linewidth]{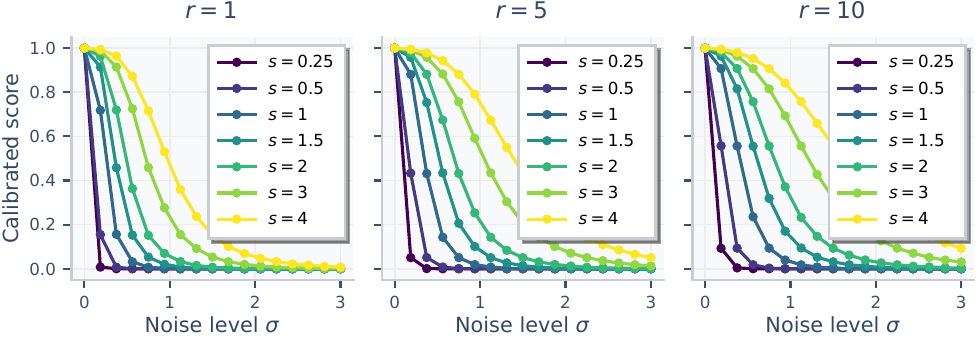}
  \caption{
  \textbf{Calibrated scores decay with noise level.}
  Each curve shows calibrated score versus noise level for a fixed signal strength $s$.
  Stronger signals maintain elevated scores across wider noise ranges; all curves converge to zero at high noise.
  }
  \label{fig:snr-sweep-strength}
\end{figure}

\subsection{Comparing calibration approaches}
\label{app:per-metric}

A natural question is whether the choice of calibration summary affects the correction.
We consider several approaches: \textit{(i)}~\emph{gated score}, which thresholds at a significance level and rescales ($\alpha \in \{0.05, 0.1\}$); \textit{(ii)}~\emph{null-centered}, subtracting the null mean; \textit{(iii)}~\emph{z-score}, standardizing by null mean and standard deviation; and \textit{(iv)}~\emph{ARI-style}, applying the chance-correction formula $(s - \E[s]) / (s_{\max} - \E[s])$.
\Cref{fig:null-drift-per-metric} evaluates these variants across metrics as $d/n$ increases.

The results demonstrate that the gated score, null-centered, and ARI-style corrections all successfully collapse to appropriate null baselines across all metrics, regardless of whether the raw metric exhibits severe inflation (CKA, approaching 0.8) or mild inflation (RSA and mKNN, below 0.1).
The z-score calibration, while correcting the mean, can exhibit artifacts when the null distribution is skewed, as occurs for bounded metrics like CKA at high $d/n$, making it less suitable as a universal correction.

\begin{figure}[htbp]
  \centering
  \begin{subfigure}[t]{\linewidth}
    \includegraphics[width=\linewidth]{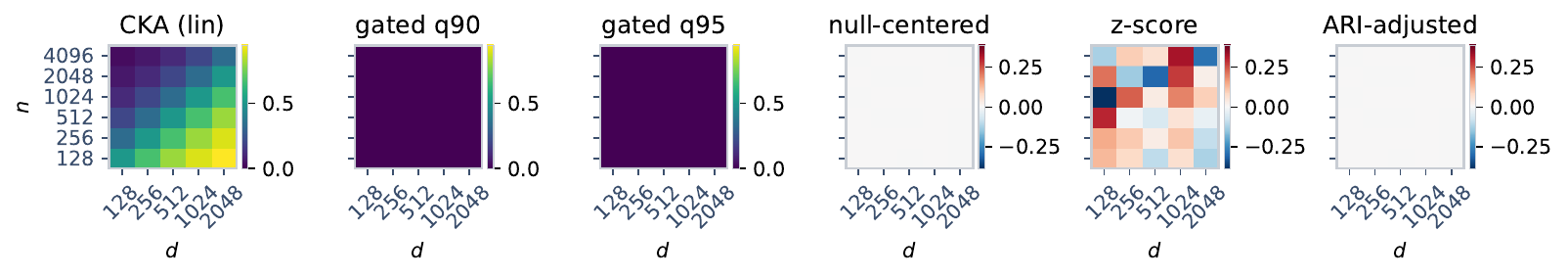}
    \caption{CKA linear}
  \end{subfigure}
  \begin{subfigure}[t]{\linewidth}
    \includegraphics[width=\linewidth]{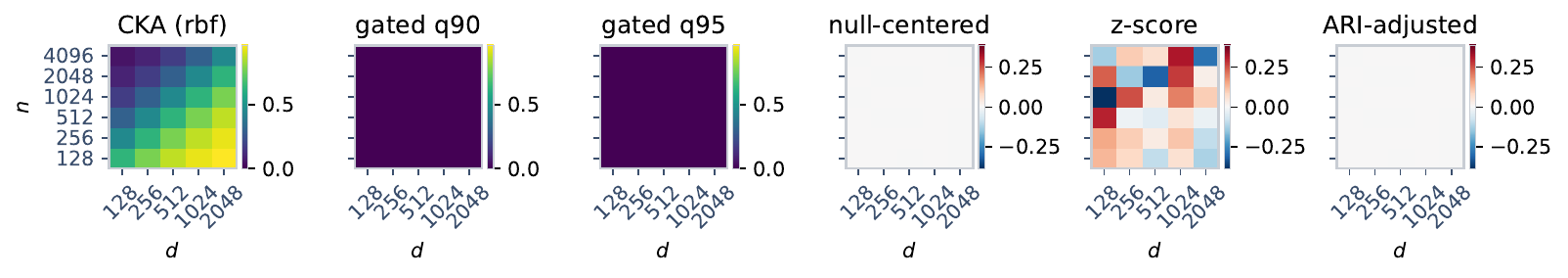}
    \caption{CKA RBF}
  \end{subfigure}
  \begin{subfigure}[t]{\linewidth}
    \includegraphics[width=\linewidth]{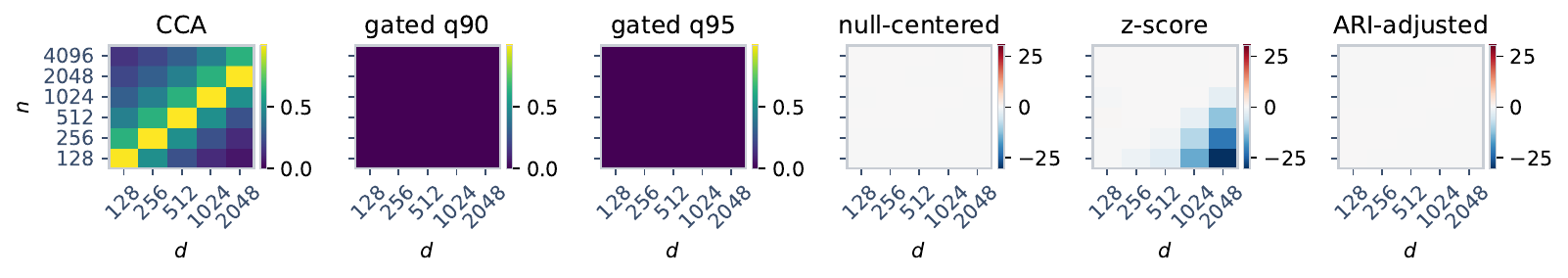}
    \caption{CCA}
  \end{subfigure}
  \begin{subfigure}[t]{\linewidth}
    \includegraphics[width=\linewidth]{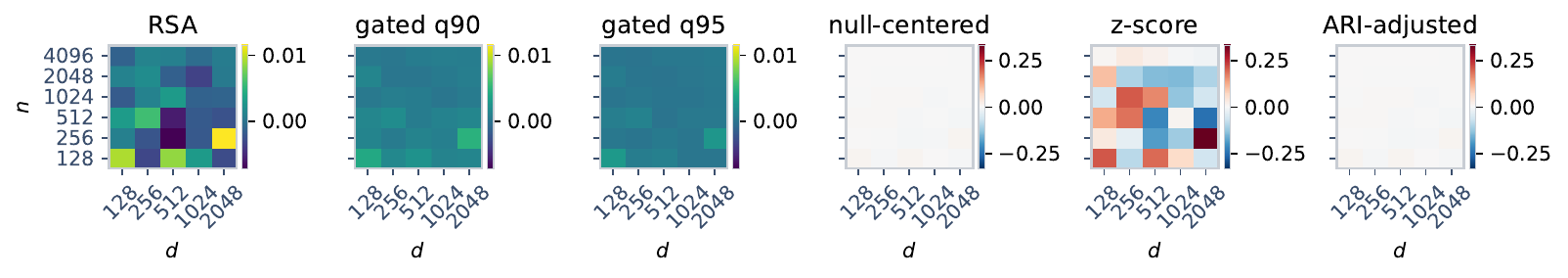}
    \caption{RSA (Spearman)}
  \end{subfigure}
  \begin{subfigure}[t]{\linewidth}
    \includegraphics[width=\linewidth]{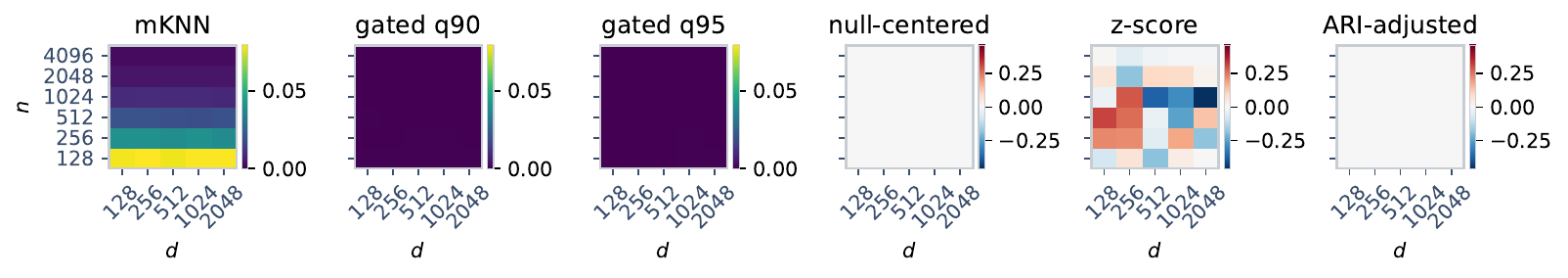}
    \caption{Mutual $k$-NN}
  \end{subfigure}
  \caption{
  \textbf{Comparing calibration approaches across metrics.}
  Each panel shows raw scores alongside four calibration variants (gated score, null-centered, z-score, ARI-style) as $d/n$ increases.
  Gated score, null-centered, and ARI-style corrections collapse to appropriate baselines; z-score exhibits artifacts for skewed null distributions.
  }
  \label{fig:null-drift-per-metric}
\end{figure}

\subsection{Comparison with analytical debiasing}
\label{app:analytical-comparison}

We validate our empirical null calibration by comparing it to existing analytical bias corrections for \gls{cka}.
\Cref{fig:cka-comparison} shows the difference between our calibrated \gls{cka} and two existing estimators: the debiased \gls{cka} of \citet{murphy2024biasedcka} and the dep-cols \gls{cka} of \citet{chun2025sparsecka}.

Our calibrated \gls{cka} closely matches the debiased \gls{cka} estimator, indicating that our calibration automatically corrects the dominant width-induced bias without requiring a metric-specific derivation.
In contrast, dep-cols \gls{cka} is designed to correct column dependence, which is not present in our experimental setup (columns are independent by construction), so it does not address the dominant width-induced inflation. Under genuine signal ($H_1$) it stays near its maximal value across all $d/n$, far above the closely agreeing debiased and calibrated estimates, while under the null ($H_0$) it fluctuates around zero with high variance.

Our calibration targets the same source of finite-sample bias that the unbiased \acrshort{hsic} estimator of \citet{song2012hsic} subtracts analytically: the self-similarity (diagonal) terms. This is consistent with the close agreement between the two corrections in \Cref{fig:cka-comparison}. Unlike such analytical estimators, our permutation calibration requires no metric-specific derivation and applies to metrics for which no debiasing exists.

\begin{figure}[H]
  \centering
  \includegraphics[width=0.8\linewidth]{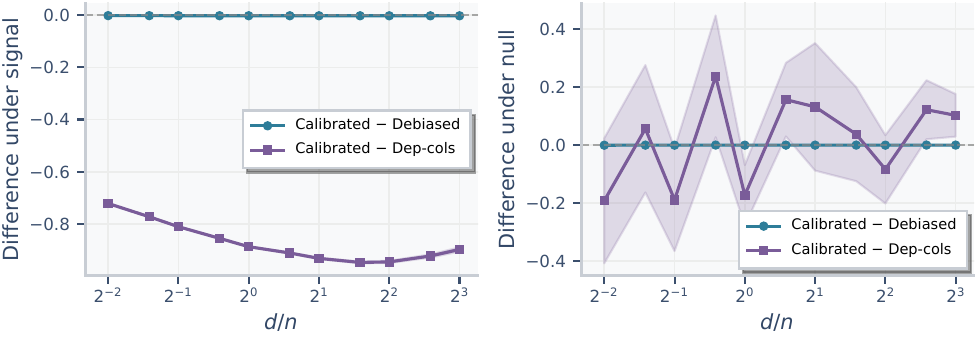}
  \caption{
  \textbf{Calibration recovers analytical debiasing.}
  Difference between calibrated \gls{cka} and existing estimators ($n=1024$, $d/n$ swept).
  (Left)~Under signal.
  (Right)~Under null.
  }
  \label{fig:cka-comparison}
\end{figure}

\subsection{Width confounder under genuine signal on real networks}
\label{app:width-h1-real}

We now verify on real pretrained networks that calibration corrects the width confounder when a genuine signal is present, complementing the analytical result of \Cref{prop:width_h1}. Because the bias scales as $\mathcal{O}(d/n)$ (\Cref{prop:nonzero_energy}), width and sample size can be separated only by varying one while holding the other fixed, which this experiment does.

We extract last-layer features from the DINOv2 and AugReg ViT families. Within a family the models share training objective, data, and architecture and differ only in width $d$ (\textit{e.g.}, ViT-S/B/L/g for DINOv2), so a within-family pair isolates the effect of width. All pairs use the same $n=1024$ WIT images as the \acrshort{prh} setting, and for each pair we compute raw and calibrated linear \gls{cka}.
At fixed $n=1024$, the permutation threshold $\tau$ (the magnitude of the width correction) increases with the total representation width $d_X+d_Y$, with Pearson correlation $r=0.88$ (\Cref{fig:width-h1-real}).
Since $n$ is held constant, this dependence is attributable to width alone: calibration adapts its correction to representation width rather than applying a fixed offset. Together with \Cref{prop:width_h1}, this confirms that calibration corrects the width confounder on real networks with genuine similarity.

\begin{figure}[htbp]
  \centering
  \includegraphics[width=0.33\linewidth]{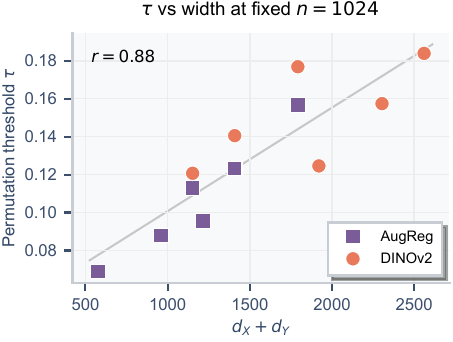}
  \caption{
  \textbf{The width correction scales with representation width (real networks).}
  For DINOv2 and AugReg ViT pairs on the same $n=1024$ WIT images, the permutation threshold $\tau$ (the width correction) grows with total width $d_X+d_Y$ (Pearson $r=0.88$). With $n$ fixed, the increasing correction is driven by width alone.
  }
  \label{fig:width-h1-real}
\end{figure}

\subsection{Permutation budget analysis}
\label{app:perm-budget}

Permutation-based calibration introduces a computational-statistical tradeoff: more permutations yield more stable threshold estimates but increase runtime.
Practitioners need guidance on the minimum budget required for reliable inference.

We analyze the stability of threshold estimates $\tau_\alpha$ and calibrated scores as a function of the permutation budget $K$ across 50 random seeds.
\Cref{fig:perm-budget} shows two panels: the left panel displays threshold estimates, while the right panel shows calibrated scores under $H_0$.
Threshold estimates (left) stabilize rapidly, reaching stable values by approximately $K = 50$ for all metrics tested.
Calibrated scores (right) exhibit more variability at very low budgets ($K < 50$), with occasional spikes due to unstable threshold estimation, but converge to near-zero by $K \approx 100$--$200$.

Based on these results, we recommend $K \geq 200$.
The computational cost scales linearly with $K$, so this recommendation represents a favorable tradeoff between precision and efficiency.

\begin{figure}[htbp]
  \centering
  \includegraphics[width=0.8\linewidth]{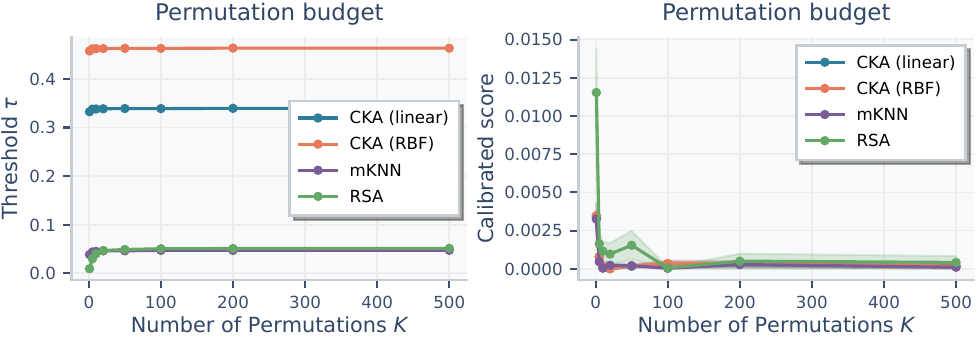}
  \caption{
  \textbf{Permutation budget analysis.}
  Left: threshold $\tau_\alpha$ stabilizes by $K \approx 50$.
  Right: calibrated scores under $H_0$ converge to near-zero by $K \approx 100$--$200$.
  Shaded regions show variability across random seeds.
  }
  \label{fig:perm-budget}
\end{figure}

\subsection{Full null drift results}
\label{app:null-drift-full}

The main text presents null drift results for a representative subset of metrics under Gaussian noise.
Here, we present additional results across all metrics evaluated in this work, including RSA, the RV coefficient, and Procrustes distance, as well as results under heavy-tailed noise distributions.

\Cref{fig:null-drift-full-gaussian} presents results under Gaussian noise for all metrics.
The severity and shape of the null baseline vary substantially across metric families: among the spectral metrics, \gls{cka} variants and the RV coefficient show the strongest monotonic inflation with width, whereas the \gls{cca} family follows a distinct rank-deficiency pattern (mean \gls{cca} peaks near $d \approx n$ then decays, \gls{pwcca} saturates, and \gls{svcca} stays width-insensitive); the neighborhood metrics show the mildest drift.
This reflects the structural sensitivity of the metrics to high-dimensional spurious correlations.
\Gls{rsa} is the exception: as a self-normalized correlation of within-space dissimilarities, its null stays near zero and shows no width-driven drift.
Critically, calibration eliminates drift across all metrics, collapsing scores to zero regardless of the raw bias magnitude.

\Cref{fig:null-drift-full-heavy} extends these results to heavy-tailed noise (Student-$t$, $\nu=3$).
The qualitative pattern is preserved: the confounded metrics exhibit positive drift under the null, and calibration eliminates this drift.
The magnitude of raw bias under heavy-tailed noise is comparable to the Gaussian case (marginally lower for the spectral metrics), and calibration adapts automatically without requiring distributional knowledge.

\begin{figure}[htbp]
  \centering
  \includegraphics[width=1.0\linewidth]{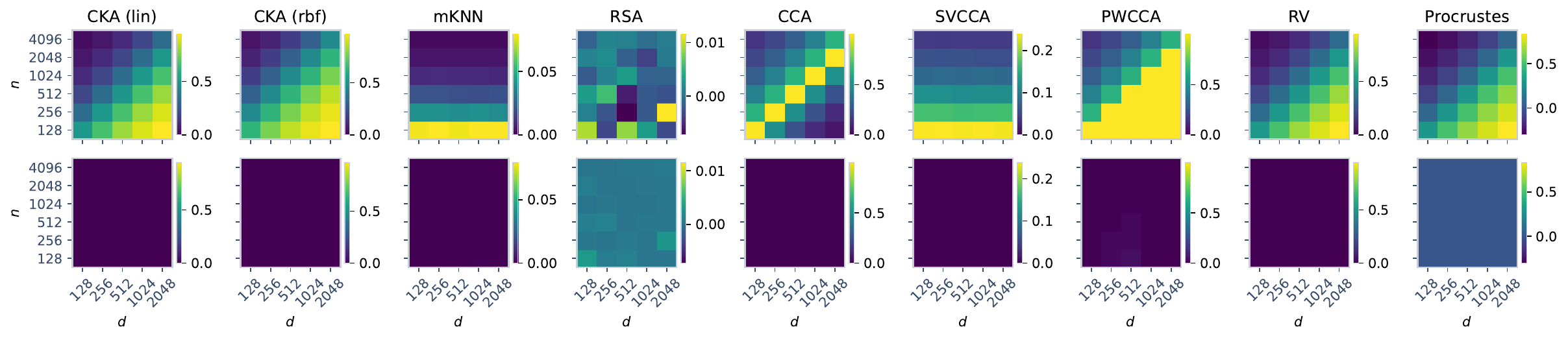}
  \caption{
  \textbf{Full null drift results (Gaussian).}
  Raw scores (top) exhibit systematic positive bias; calibrated scores (bottom) collapse to zero.
  }
  \label{fig:null-drift-full-gaussian}
\end{figure}

\begin{figure}[htbp]
  \centering
  \includegraphics[width=1.0\linewidth]{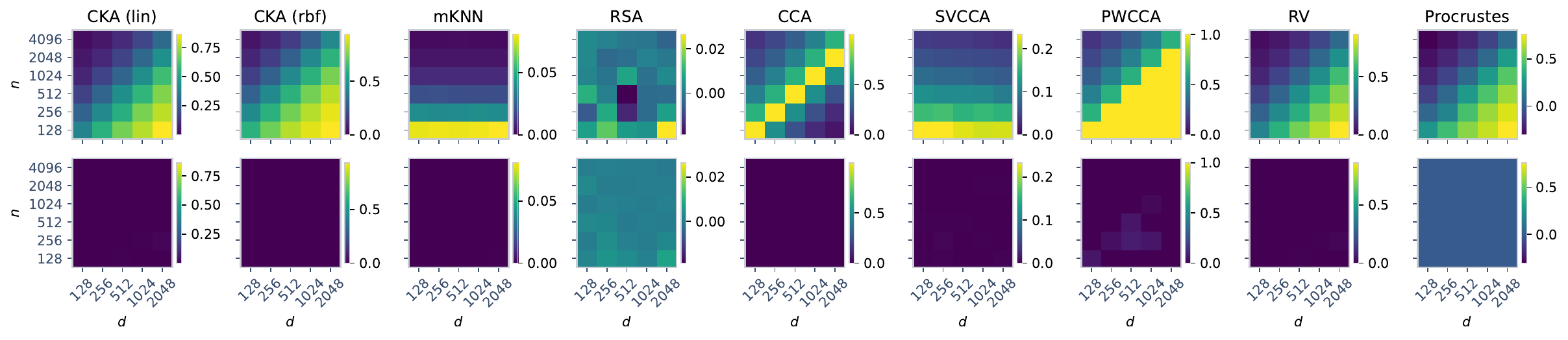}
  \caption{
  \textbf{Full null drift results (heavy-tailed).}
  Student-$t$ ($\nu=3$) noise.
  The pattern is consistent across all metrics: calibration eliminates spurious similarity regardless of noise distribution.
  }
  \label{fig:null-drift-full-heavy}
\end{figure}

\paragraph{Robustness to the generative process.}
The experiments above use i.i.d.\ Gaussian and heavy-tailed noise. To confirm that the width confounder is not tied to these specific generators, we also produce representations as $\vect{x}_i=f_x(\vect{a}_i)$ and $\vect{y}_i=f_y(\vect{a}_i)$, where $\vect{a}_i$ is a shared Gaussian input and $f_x,f_y$ are \emph{independent} random linear maps or MLPs, so the two representations share inputs but no systematic structure. \Cref{fig:width-confounder-generative} shows that raw \gls{cka} still inflates with dimensionality $d$ in every regime, while calibrated \gls{cka} stays at zero. Sharing inputs through unrelated networks thus produces width-driven inflation rather than genuine convergence, and calibration removes it regardless of how the representations are generated.

\begin{figure}[htbp]
  \centering
  \includegraphics[width=0.8\linewidth]{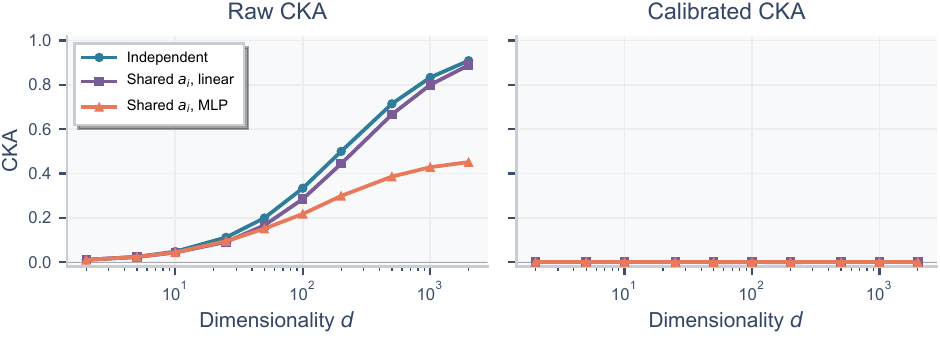}
  \caption{
  \textbf{The width confounder is robust to the generative process.}
  Even when $\mat{X}$ and $\mat{Y}$ are computed from the \emph{same} inputs by \emph{unrelated} random networks (linear or MLP), raw \gls{cka} (left) inflates with dimensionality $d$ although no systematic alignment is present; calibrated \gls{cka} (right) stays at zero. The inflation is a finite-width artifact, not evidence of convergence.
  }
  \label{fig:width-confounder-generative}
\end{figure}

\subsection{Extended \acrshort{prh} alignment results (image--text)}
\label{app:prh-full}

The main text establishes a divergence between local and global similarity metrics when applied to the \gls{prh}: neighborhood-based metrics retain significant cross-modal alignment after calibration, while spectral metrics lose their apparent convergence trend.
A natural question is whether this finding is robust across model families and metric variants.

Here we present comprehensive results across all five vision model families in the \gls{prh} setting (DINOv2, CLIP, ImageNet-21K, MAE, and CLIP-finetuned) and a broad range of metrics spanning the local-to-global spectrum (\Cref{fig:prh-alignment-full,fig:prh-alignment-additional,fig:prh-alignment-expensive}).

The results reinforce and extend the main text findings.
Neighborhood metrics (mKNN, cycle-$k$NN, CKNNA) show a consistent alignment trend across all vision families with a neighborhood size of 10.
This pattern holds for both self-supervised (DINOv2, MAE) and supervised (ImageNet-21K) pretraining objectives, as well as for both CLIP-aligned and CLIP-finetuned variants.
Spectral metrics (CKA linear, CKA RBF, unbiased CKA, RV coefficient, \gls{svcca}) show a different pattern: raw scores suggest increasing alignment with model scale, but calibrated scores show no such scaling trend.

\paragraph{Trend with model scale, before versus after calibration.}
For each metric separately, we measure the Pearson correlation between language-model capability (the model ranking of \citet{pmlr-v235-huh24a}) and the similarity score across all model pairs, before and after calibration (\Cref{app:trend-correlation}). Global metrics lose most of their trend with scale after calibration, while local (neighborhood) metrics retain it across neighborhood sizes.

\begin{table}[ht]
\centering

\caption{\textbf{Trend of each similarity metric with model scale, before versus after calibration.} Pearson correlation between language-model capability and the similarity score in the \gls{prh} setting. The costly spectral and geometric metrics (SVCCA, PWCCA, Procrustes, RV) are evaluated on the reduced DINOv2 subset ($12$ language models $\times\ 4$ vision models); the remaining metrics use the full grid ($204$ pairs). RV coincides with linear \gls{cka} under centering and is listed only for completeness. Global metrics lose their trend with scale after calibration; local (neighborhood) metrics retain it.}
\label{app:trend-correlation}
\small
\begin{tabular}{lcc}
\toprule
\textbf{Metric} & \textbf{Before} & \textbf{After} \\
\midrule
\multicolumn{3}{l}{\textit{Global metrics (spectral and geometric)}} \\
CKA (linear)     & $0.86$ & $0.45$ \\
CKA (RBF)        & $0.83$ & $0.35$ \\
Unbiased CKA     & $0.65$ & $-0.01$ \\
Procrustes       & $0.89$ & $0.39$ \\
RV coefficient   & $0.92$ & $0.31$ \\
\midrule
\multicolumn{3}{l}{\textit{Local metrics (neighborhood)}} \\
mKNN ($k{=}10$)   & $0.85$ & $0.84$ \\
mKNN ($k{=}20$)   & $0.84$ & $0.84$ \\
mKNN ($k{=}50$)   & $0.83$ & $0.82$ \\
mKNN ($k{=}100$)  & $0.79$ & $0.79$ \\
CKNNA ($k{=}10$)  & $0.87$ & $0.87$ \\
CKNNA ($k{=}20$)  & $0.86$ & $0.86$ \\
CKNNA ($k{=}50$)  & $0.86$ & $0.86$ \\
CKNNA ($k{=}100$) & $0.84$ & $0.84$ \\
cycle-$k$NN ($k{=}10$)  & $0.85$ & $0.85$ \\
cycle-$k$NN ($k{=}20$)  & $0.85$ & $0.85$ \\
cycle-$k$NN ($k{=}50$)  & $0.84$ & $0.85$ \\
cycle-$k$NN ($k{=}100$) & $0.77$ & $0.74$ \\
\bottomrule
\end{tabular}
\end{table}

\begin{figure}[htbp]
  \centering
  \begin{subfigure}[t]{\linewidth}
    \includegraphics[width=\linewidth]{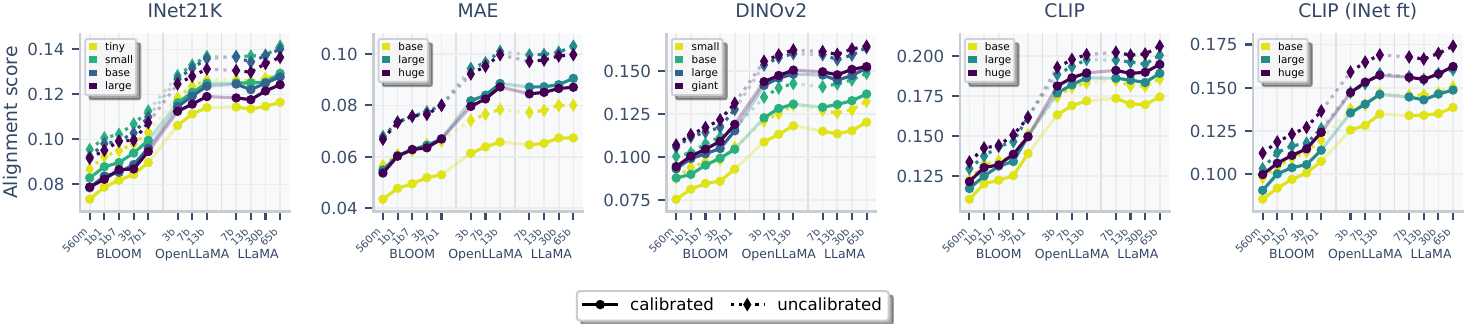}
    \caption{\gls{mknn}: Neighborhood overlap.}
  \end{subfigure}
  \begin{subfigure}[t]{\linewidth}
    \includegraphics[width=\linewidth]{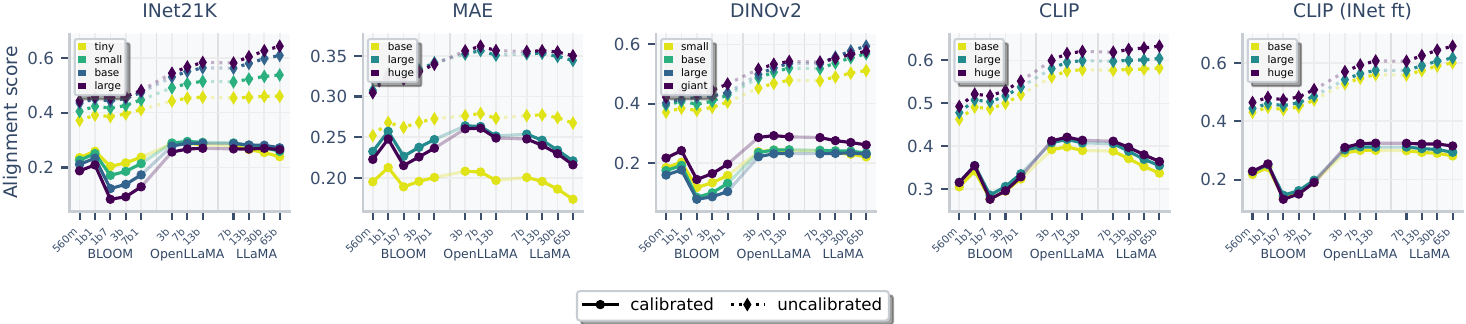}
    \caption{CKA RBF: Spectral alignment.}
  \end{subfigure}
  \begin{subfigure}[t]{\linewidth}
    \includegraphics[width=\linewidth]{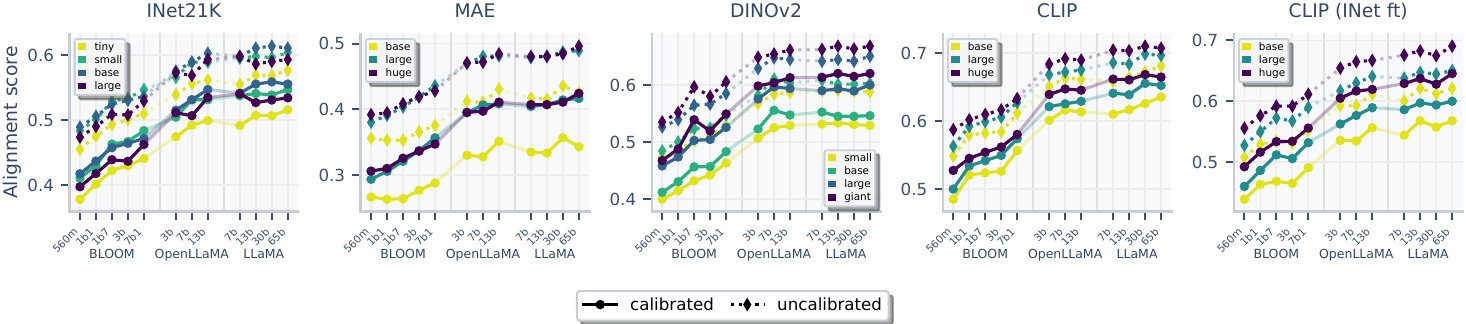}
    \caption{cycle-$k$NN: Bidirectional consistency.}
  \end{subfigure}
  \begin{subfigure}[t]{\linewidth}
    \includegraphics[width=\linewidth]{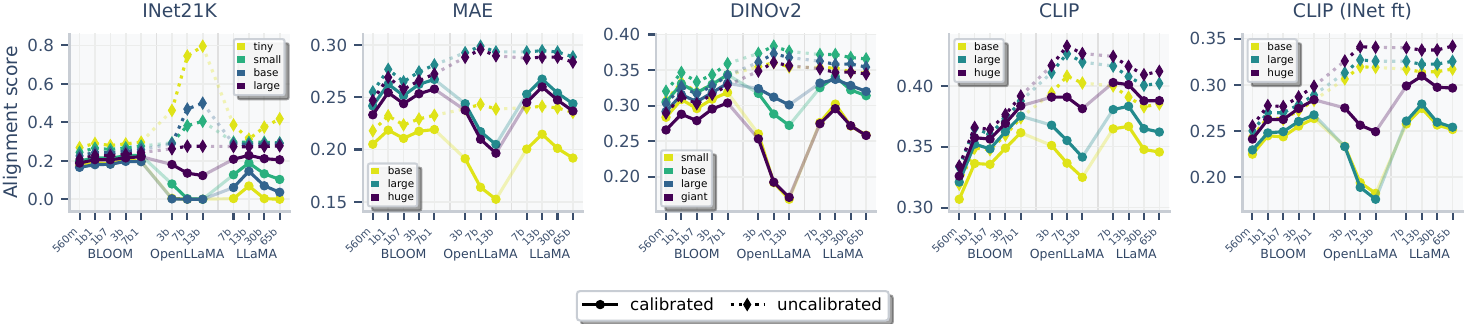}
    \caption{Unbiased CKA.}
  \end{subfigure}
  \caption{
  \textbf{\Acrshort*{prh} alignment results (all vision families).}
  All five vision model families are shown (DINOv2, CLIP, ImageNet-21K, MAE, CLIP-finetuned).
  The divergence between local and global metrics is consistent across all families.
  }
  \label{fig:prh-alignment-full}
\end{figure}

\begin{figure}[htbp]
  \centering
  \begin{subfigure}[t]{\linewidth}
    \includegraphics[width=\linewidth]{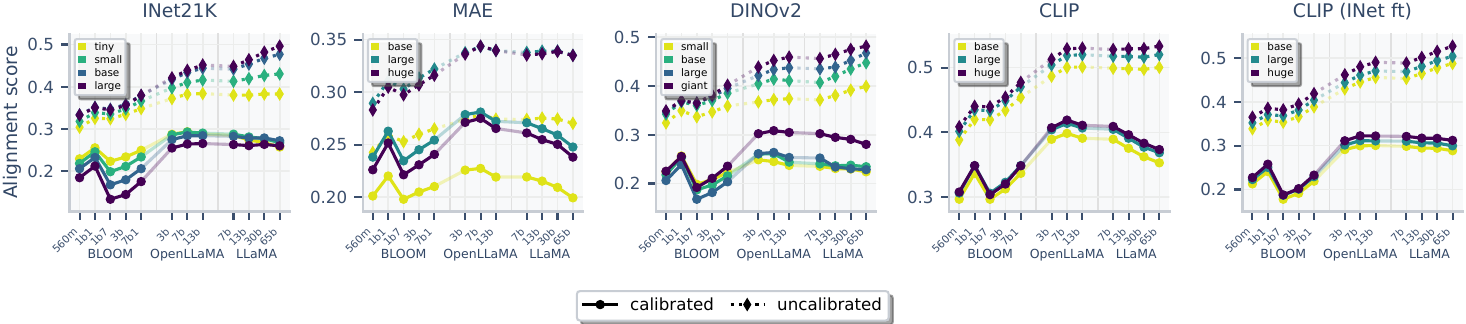}
    \caption{CKA linear.}
  \end{subfigure}
  \begin{subfigure}[t]{\linewidth}
    \includegraphics[width=\linewidth]{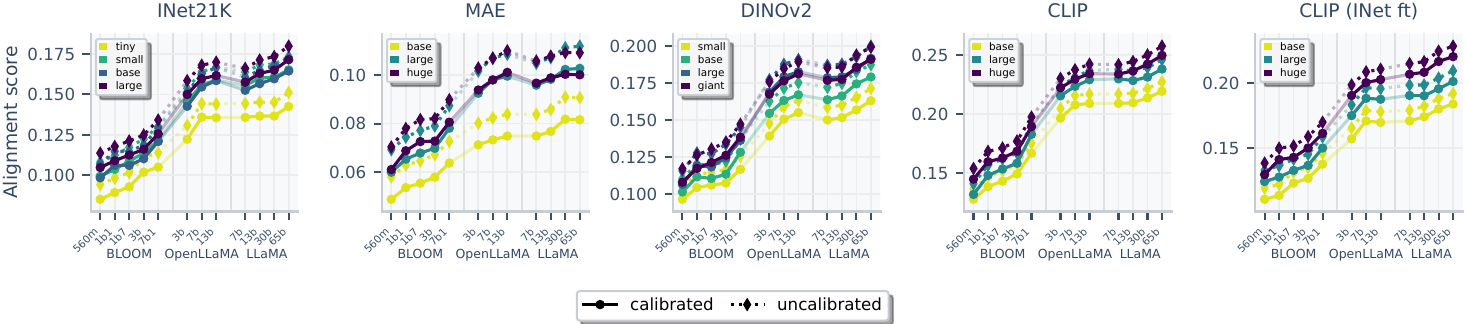}
    \caption{CKNNA.}
  \end{subfigure}
  \caption{
  \textbf{Additional \acrshort{prh} metrics (all vision families).}
  CKA linear (a) shows the same loss of convergence trend as CKA RBF.
  CKNNA (b) shows consistent local alignment across all vision families.
  }
  \label{fig:prh-alignment-additional}
\end{figure}

\begin{figure}[htbp]
  \centering
  \begin{subfigure}[t]{0.32\linewidth}
    \includegraphics[width=\linewidth]{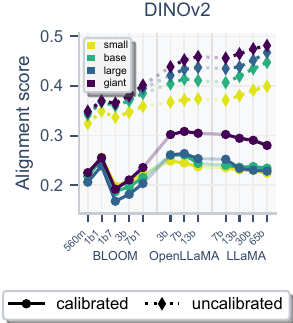}
    \caption{RV coefficient.}
  \end{subfigure}
  \begin{subfigure}[t]{0.32\linewidth}
    \includegraphics[width=\linewidth]{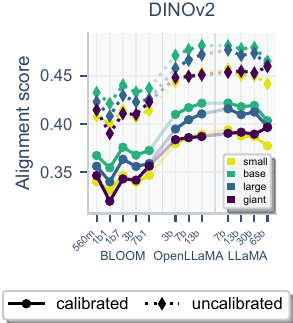}
    \caption{\Gls{svcca}.}
  \end{subfigure}
  \begin{subfigure}[t]{0.32\linewidth}
    \includegraphics[width=\linewidth]{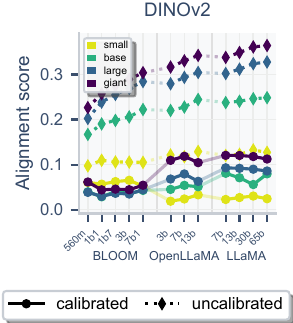}
    \caption{Procrustes.}
  \end{subfigure}
  \caption{
  \textbf{Additional geometric and spectral metrics (DINOv2).}
  The RV coefficient (a), \gls{svcca} (b), and Procrustes distance (c) confirm the same pattern as CKA: calibrated scores show no convergence trend with model scale, so the disappearance of global convergence is not specific to \gls{cka} but extends to shape- and geometry-based metrics.
  Due to the computational cost of permutation-based calibration for these metrics, we report results for DINOv2 only.
  }
  \label{fig:prh-alignment-expensive}
\end{figure}

\paragraph{Continuous language-performance axis.}
The preceding figures place each language model at its performance \emph{rank}.
Because \citet{pmlr-v235-huh24a} instead plot cross-modal alignment against a \emph{continuous} measure of language performance, we reproduce all of the above results on that axis for direct comparison.
We quantify language performance by bits-per-byte (BPB) over OpenWebText, as in \citet{pmlr-v235-huh24a}, oriented so that larger values, $\max(\mathrm{BPB}) - \mathrm{BPB}$, indicate stronger next-token prediction, and place each model at its measured value with a per-encoder least-squares fit (\Cref{fig:prh-alignment-full-langperf,fig:prh-alignment-additional-langperf,fig:prh-alignment-expensive-langperf}).
The local--global divergence is unchanged on this axis: after calibration, neighborhood metrics retain their alignment trend with language performance while spectral metrics do not, so the finding does not depend on whether capability is expressed as a rank or as a continuous performance value.

\begin{figure}[htbp]
  \centering
  
  \begin{subfigure}[t]{\linewidth}
    \includegraphics[width=\linewidth]{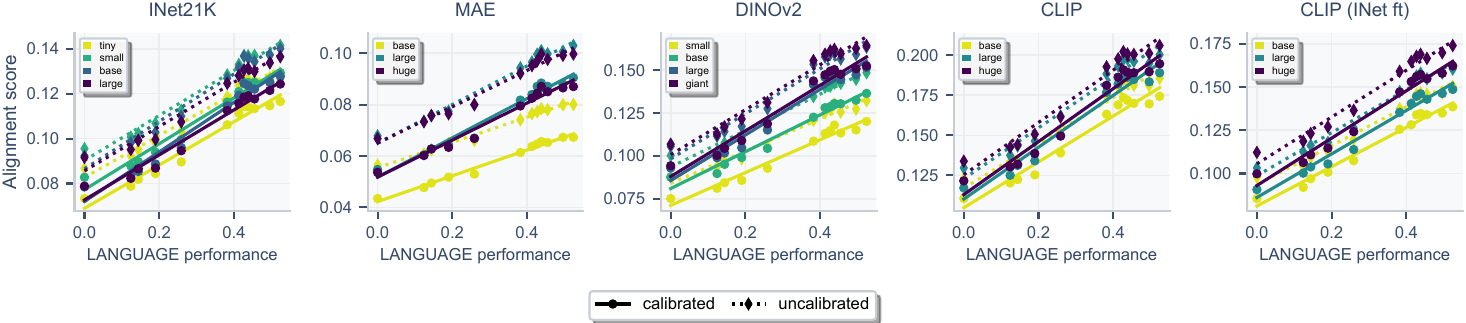}
    \caption{\gls{mknn}: Neighborhood overlap.}
  \end{subfigure}
  \begin{subfigure}[t]{\linewidth}
    \includegraphics[width=\linewidth]{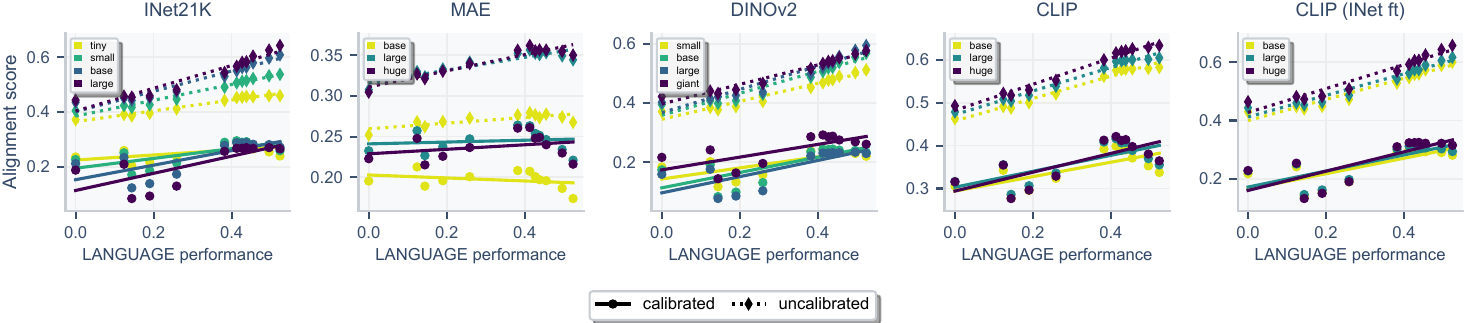}
    \caption{CKA RBF: Spectral alignment.}
  \end{subfigure}
  \begin{subfigure}[t]{\linewidth}
    \includegraphics[width=\linewidth]{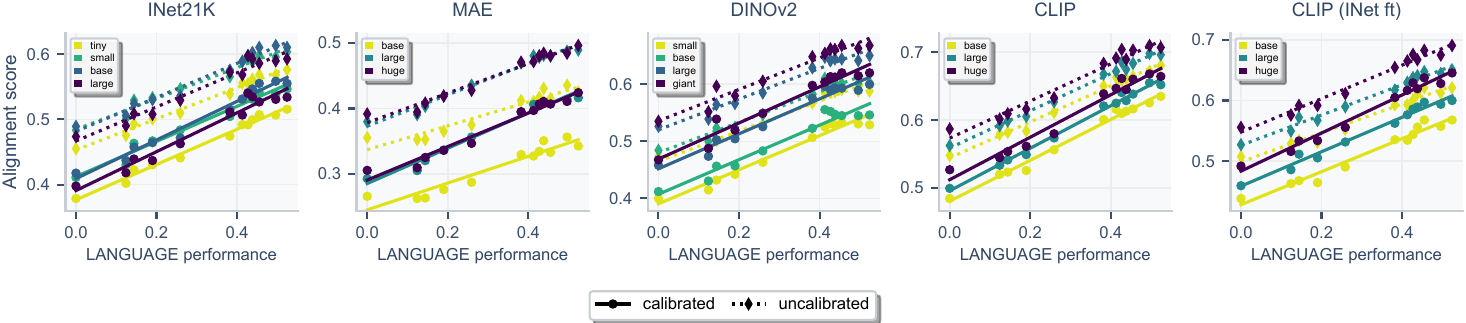}
    \caption{cycle-$k$NN: Bidirectional consistency.}
  \end{subfigure}
  \begin{subfigure}[t]{\linewidth}
    \includegraphics[width=\linewidth]{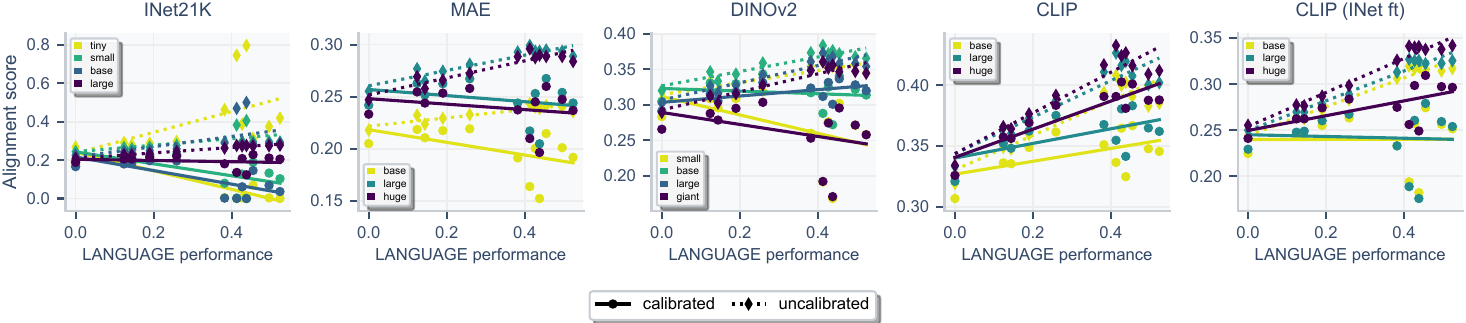}
    \caption{Unbiased CKA.}
  \end{subfigure}
  \caption{
  \textbf{\Acrshort*{prh} alignment versus language performance (all vision families).}
  As \Cref{fig:prh-alignment-full}, but with the horizontal axis given by continuous language performance ($\max(\mathrm{BPB}) - \mathrm{BPB}$ over OpenWebText) and a per-encoder least-squares fit.
  The local--global divergence is unchanged from the rank-based axis.
  }
  \label{fig:prh-alignment-full-langperf}
\end{figure}

\begin{figure}[htbp]
  \centering
  
  \begin{subfigure}[t]{\linewidth}
    \includegraphics[width=\linewidth]{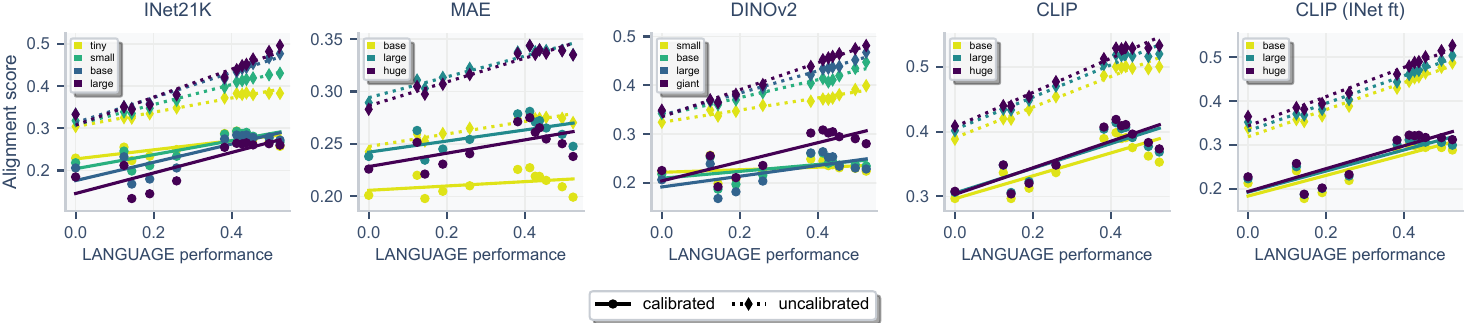}
    \caption{CKA linear.}
  \end{subfigure}
  \begin{subfigure}[t]{\linewidth}
    \includegraphics[width=\linewidth]{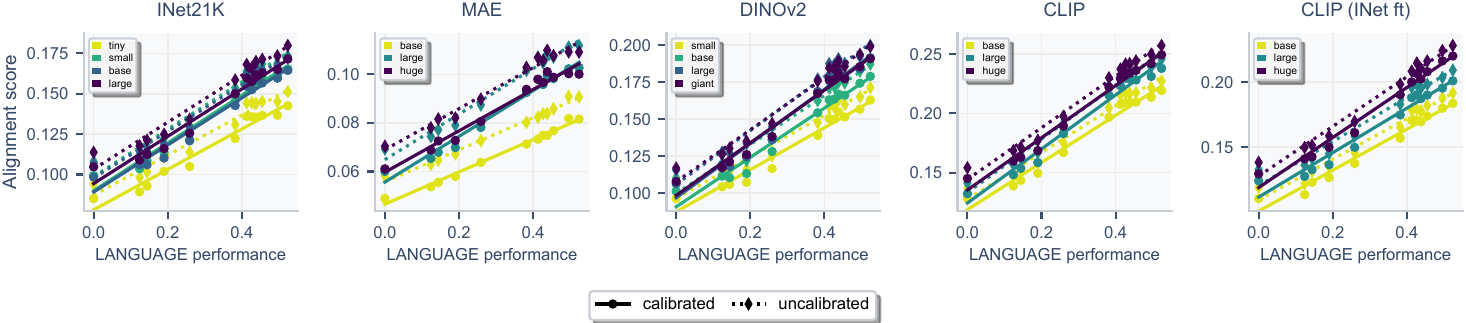}
    \caption{CKNNA.}
  \end{subfigure}
  \caption{
  \textbf{Additional \acrshort{prh} metrics versus language performance (all vision families).}
  As \Cref{fig:prh-alignment-additional}, with the continuous language-performance axis.
  CKA linear (a) loses its trend after calibration; CKNNA (b) retains local alignment across all vision families.
  }
  \label{fig:prh-alignment-additional-langperf}
\end{figure}

\begin{figure}[htbp]
  \centering
  
  \begin{subfigure}[t]{0.32\linewidth}
    \includegraphics[width=\linewidth]{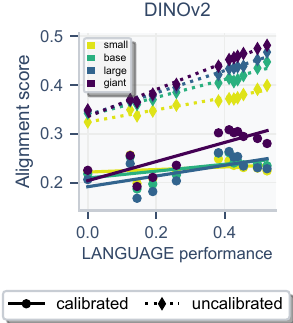}
    \caption{RV coefficient.}
  \end{subfigure}
  \begin{subfigure}[t]{0.32\linewidth}
    \includegraphics[width=\linewidth]{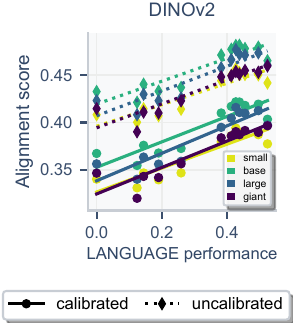}
    \caption{\Gls{svcca}.}
  \end{subfigure}
  \begin{subfigure}[t]{0.32\linewidth}
    \includegraphics[width=\linewidth]{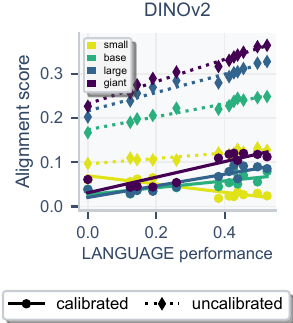}
    \caption{Procrustes.}
  \end{subfigure}
  \caption{
  \textbf{Additional geometric and spectral metrics versus language performance (DINOv2).}
  As \Cref{fig:prh-alignment-expensive}, with the continuous language-performance axis.
  The RV coefficient (a), \gls{svcca} (b), and Procrustes (c) show no calibrated convergence trend with language performance, matching the rank-based axis.
  }
  \label{fig:prh-alignment-expensive-langperf}
\end{figure}

\paragraph{Statistical significance.}
Beyond calibrated scores, we report permutation $p$-values to quantify statistical evidence against the null hypothesis of no cross-modal alignment (\Cref{fig:prh-pvalues}).
All 204 vision--language model pairs are significant at $p < 0.05$, with most achieving $p \approx 0.002$ (the minimum attainable with $K=500$ permutations) for both local and global metrics.
This confirms that cross-modal similarity is statistically significant (\textit{i.e.}, has some alignment) across all model pairs.
The critical distinction between local and global metrics lies not in statistical significance but in the magnitude and trends of calibrated scores.
Local metrics show substantial alignment above the null threshold that persists across scales, whereas global metrics, although significant, show no convergence in calibrated effect sizes.

\begin{figure}[htbp]
  \centering
  \begin{subfigure}[t]{\linewidth}
    \includegraphics[width=\linewidth]{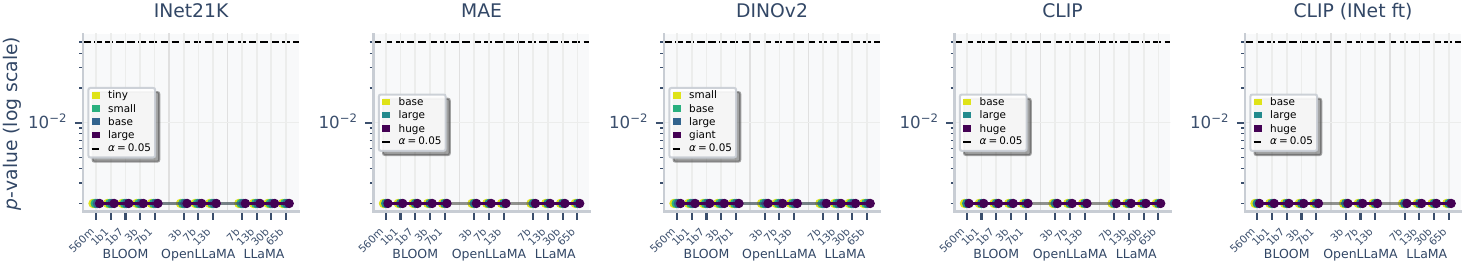}
    \caption{\gls{mknn} ($k=10$).}
  \end{subfigure}
  \begin{subfigure}[t]{\linewidth}
    \includegraphics[width=\linewidth]{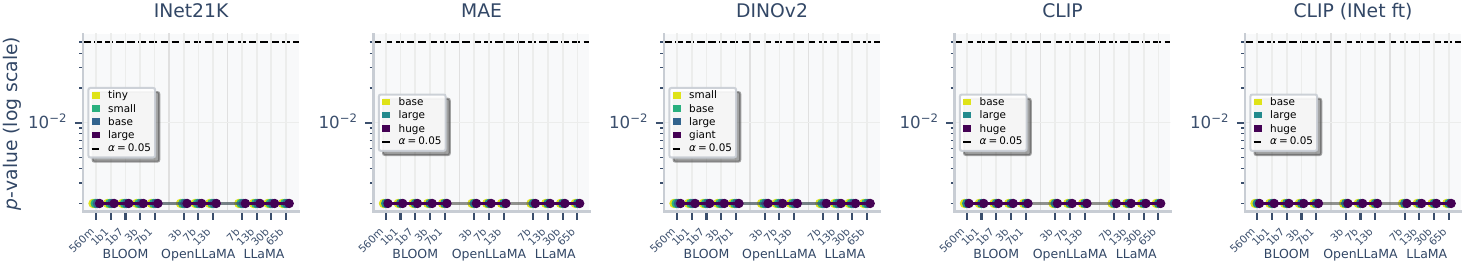}
    \caption{CKA linear.}
  \end{subfigure}
  \caption{
  \textbf{Permutation $p$-values for \acrshort*{prh} alignment.}
  All model pairs are significant at $p < 0.05$, with most achieving $p < 0.005$ for both local (a) and global (b) metrics.
  The difference between metric families lies in calibrated effect sizes, not significance.
  }
  \label{fig:prh-pvalues}
\end{figure}

\subsection{Extended video--language alignment results}
\label{app:video-language}

The main text extends the \gls{prh} analysis to video--language alignment following \citet{zhu2025dynamic}.
Here, we provide additional results to verify that the local-vs-global pattern observed for image--language alignment extends to the video modality.

We use 1024 samples from the PVD \citep{bolya2025perception-encoder,cho2025perceptionlm} test set.
We evaluate video-native models (VideoMAE~\citep{tong2022videomae}) and, as a frame-level baseline, image models (DINOv2 and CLIP) applied to the middle frame of each video, comparing all against the same three language model families used in the image--language experiments (BLOOM, OpenLLaMA, LLaMA) at multiple scales.
For VideoMAE we include both the fine-tuned scale series (small/base/large/huge, fine-tuned on Kinetics) used in the main text and the corresponding non-fine-tuned checkpoints.
\Cref{fig:v2t-full} shows results for spectral (\gls{cka} RBF) and neighborhood (\gls{mknn}, CKNNA) metrics.

The pattern mirrors the image--language findings.
For spectral metrics, raw scores suggest alignment, whereas calibrated scores drop significantly, indicating that much of the apparent alignment is attributable to width and depth confounders.
In contrast, neighborhood metrics retain significant alignment after calibration, confirming that video and language representations share local topological structure.
This local alignment strengthens with the capability of the video encoder: both fine-tuning on Kinetics and increasing scale raise the calibrated neighborhood alignment, in line with the Aristotelian hypothesis that more capable representations converge more in local structure. Calibration removes the spectral inflation throughout.

\begin{figure}[htbp]
  \centering
  \begin{subfigure}[t]{\linewidth}
    \centering
    \includegraphics[width=0.7\linewidth]{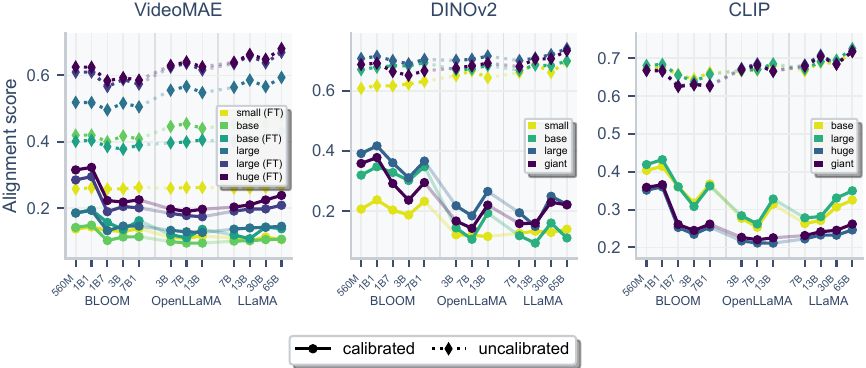}
    \caption{\Gls{cka} RBF (spectral).}
  \end{subfigure}

  \begin{subfigure}[t]{\linewidth}
    \centering
    \includegraphics[width=0.7\linewidth]{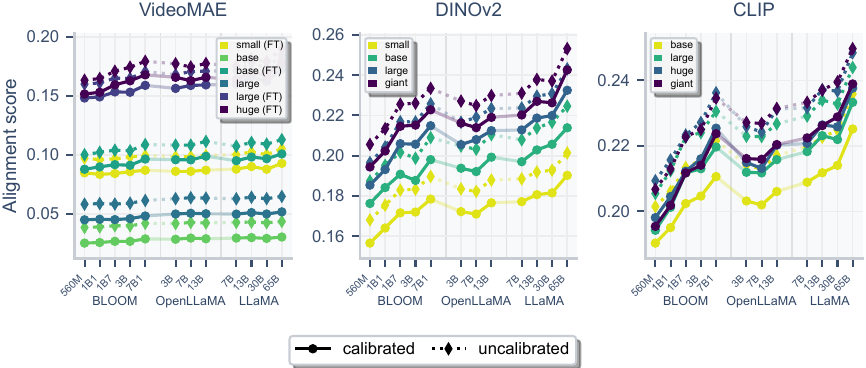}
    \caption{\gls{mknn} ($k=10$, neighborhood).}
  \end{subfigure}

  \begin{subfigure}[t]{\linewidth}
    \centering
    \includegraphics[width=0.7\linewidth]{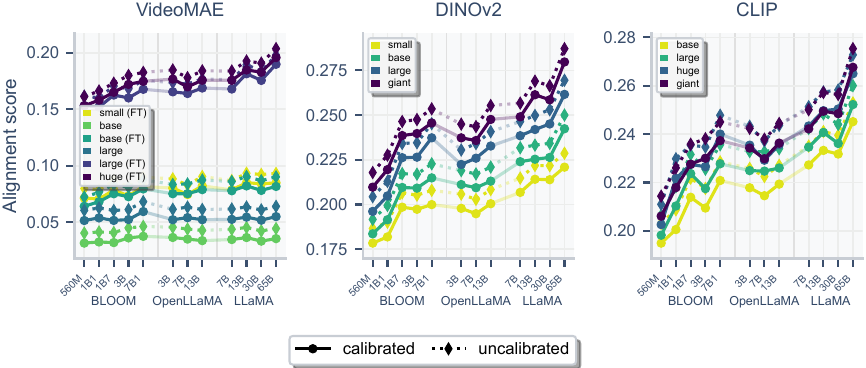}
    \caption{CKNNA ($k=10$, neighborhood).}
  \end{subfigure}
  \caption{
  \textbf{Extended video--language alignment.}
  Each panel shows VideoMAE (fine-tuned and non-fine-tuned), DINOv2, and CLIP (frame-level) against the language model families; raw scores are dotted, calibrated solid.
  (a)~Spectral alignment (\gls{cka} RBF) drops after calibration. (b,c)~Neighborhood alignment is retained and strengthens with video-encoder capability.
  }
  \label{fig:v2t-full}
\end{figure}

\subsection{Characterizing the locality of cross-modal alignment}
\label{app:locality-analysis}

The main text establishes that local neighborhood metrics retain significant alignment after calibration, while global spectral metrics do not.
We next characterize \emph{how local} this alignment is.
Both \gls{mknn} and \gls{cka}-RBF have hyperparameters that control their sensitivity to local versus global structure.
By varying these parameters, we can characterize the scale at which cross-modal alignment emerges.

\paragraph{Experimental setup.}
We vary two locality parameters: the neighborhood size $k$ in \gls{mknn}, testing $k \in \{10, 20, 50, 100\}$ where smaller values focus on immediate neighbors and larger values consider broader local structure, and the RBF kernel bandwidth $\sigma$ in \gls{cka}-RBF, testing $\sigma \in \{0.1, 0.5, 2.0, 5.0\}$, which controls the length scale over which the kernel assigns significant weight.

\paragraph{RBF bandwidth.}
The RBF (radial basis function) kernel is defined as $k(\vect{x}, \vect{y}) = \exp\left(-\|\vect{x} - \vect{y}\|^2 / 2\sigma^2\right)$.
The bandwidth $\sigma$ determines the \emph{length scale} of similarity.
When $\sigma$ is small (\textit{e.g.}, $0.1$), the kernel is sharply peaked: only very close points contribute significantly to the Gram matrix, making the similarity measure sensitive to \emph{exact pairwise distances} in the immediate neighborhood.
When $\sigma$ is large (\textit{e.g.}, $5.0$), the kernel is broad: even moderately distant points contribute, and the similarity measure aggregates information over larger neighborhoods, becoming sensitive to coarser geometric structure.

\paragraph{Neighborhood size.}
For \gls{mknn}, the parameter $k$ controls how many nearest neighbors are considered when measuring overlap.
Small $k$ (\textit{e.g.}, $10$) measures agreement on immediate neighbors, \textit{i.e.}, the closest points to each sample, capturing fine-grained local topology.
Large $k$ (\textit{e.g.}, $100$) measures agreement on a broader neighborhood.
With $n = 1000$ samples and $k = 100$, we ask whether the $10\%$ closest points agree across representations.
Crucially, mKNN is a \emph{rank-based} metric: it asks \emph{which} points are neighbors (ordinal information), not \emph{how close} they are (cardinal information).

\paragraph{mKNN across $k$ values.}
\Cref{fig:locality-mknn} shows the PRH alignment results for mKNN with varying $k$.
A consistent pattern emerges: all $k$ values show significant alignment after calibration, with calibrated scores remaining well above zero even at $k = 100$.
However, the scaling trend is most pronounced at small $k$.
For $k = 10$, raw scores show a clear upward trend with model capacity that persists after calibration.
At large $k$, this trend flattens even in raw scores. For $k = 100$, raw scores plateau for larger models, suggesting that broader neighborhood agreement is already saturated across model scales.
This pattern indicates that scaling-driven improvement in alignment is concentrated at the finest topological level.

\begin{figure}[htbp]
  \centering
  \begin{subfigure}[t]{\linewidth}
    \includegraphics[width=\linewidth]{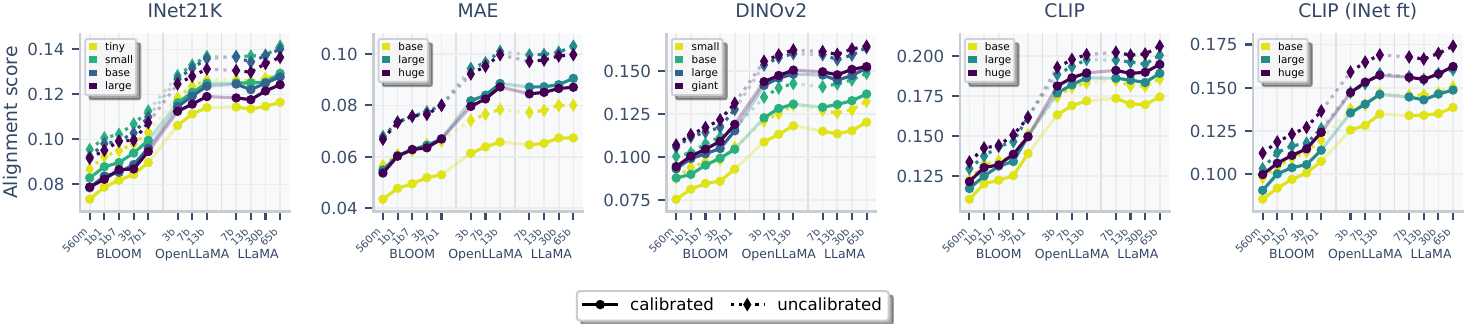}
    \caption{mKNN ($k=10$)}
  \end{subfigure}
  \begin{subfigure}[t]{\linewidth}
    \includegraphics[width=\linewidth]{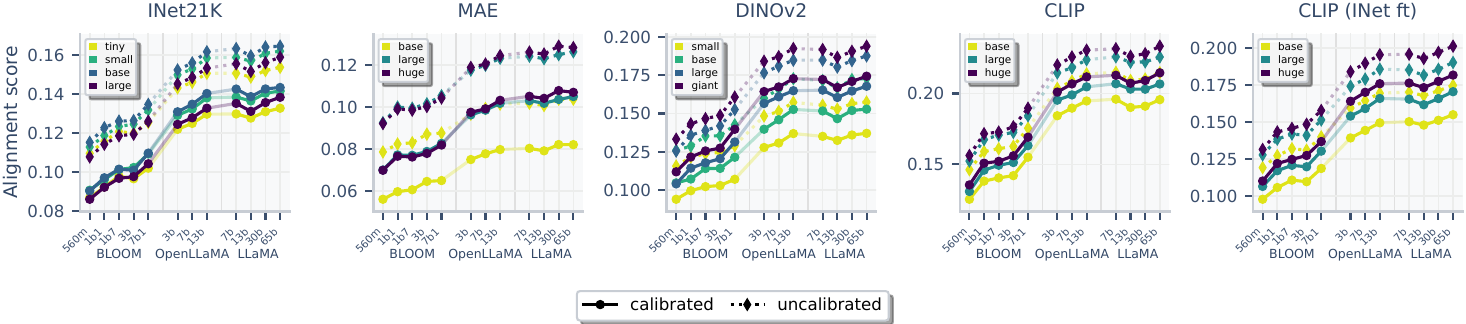}
    \caption{mKNN ($k=20$)}
  \end{subfigure}
  \begin{subfigure}[t]{\linewidth}
    \includegraphics[width=\linewidth]{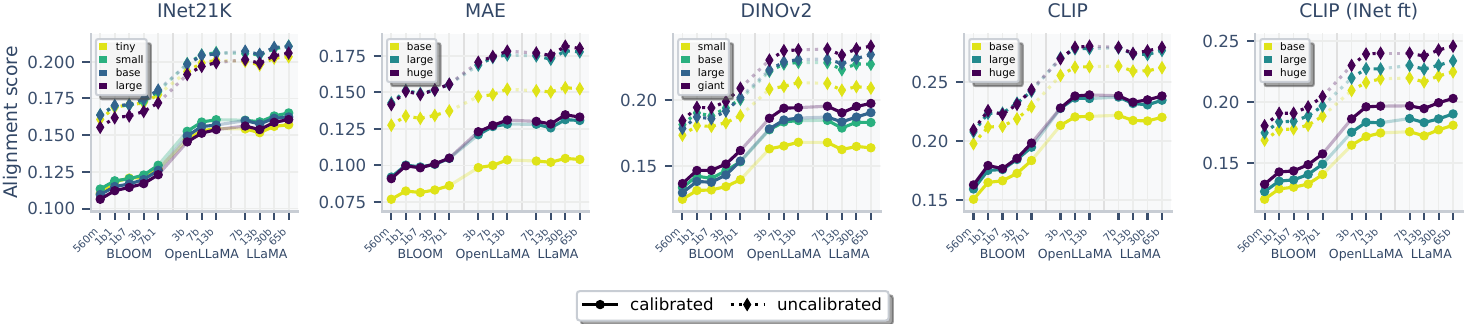}
    \caption{mKNN ($k=50$)}
  \end{subfigure}
  \begin{subfigure}[t]{\linewidth}
    \includegraphics[width=\linewidth]{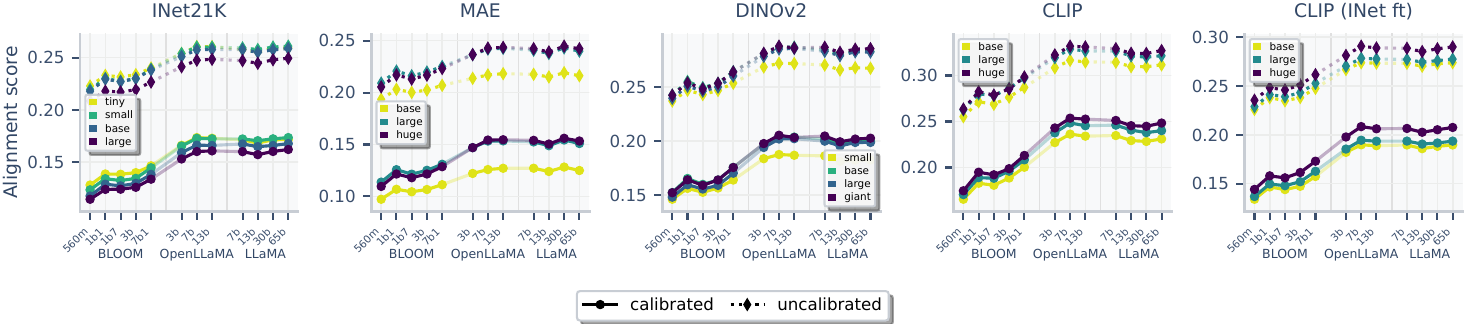}
    \caption{mKNN ($k=100$)}
  \end{subfigure}
  \caption{
  \textbf{PRH alignment with varying neighborhood size $k$ for mKNN.}
  All $k$ values show significant alignment after calibration.
  The scaling trend is clearest at small $k$ and flattens at large $k$, suggesting scaling improvements are concentrated at the finest local scale.
  }
  \label{fig:locality-mknn}
\end{figure}

\paragraph{CKA-RBF across bandwidth values.}
\Cref{fig:locality-cka-rbf}, and the accompanying $p$-values in \Cref{fig:locality-cka-rbf-pvalues}, show results for CKA-RBF with varying bandwidth $\sigma$, revealing a different pattern from mKNN.
At $\sigma = 0.1$ (very local), there is no significant alignment after calibration: raw scores are near $1.0$, reflecting the high similarity of any high-dimensional representations under a sharply peaked kernel. However, calibrated scores collapse to approximately zero with $p$-values exceeding $0.05$ for most model pairs, indicating that the observed similarity is indistinguishable from chance.
At $\sigma = 0.5$, alignment emerges, but with a flattening trend after calibration.
Calibrated scores initially rise with model scale, then plateau and slightly decline for the largest models.
At $\sigma = 2.0$ and $\sigma = 5.0$, significant alignment persists, but the calibrated trend also flattens, resembling the pattern observed for large-$k$ mKNN: alignment exists, but scaling-driven improvement disappears after calibration.

\begin{figure}[htbp]
  \centering
  \begin{subfigure}[t]{\linewidth}
    \includegraphics[width=\linewidth]{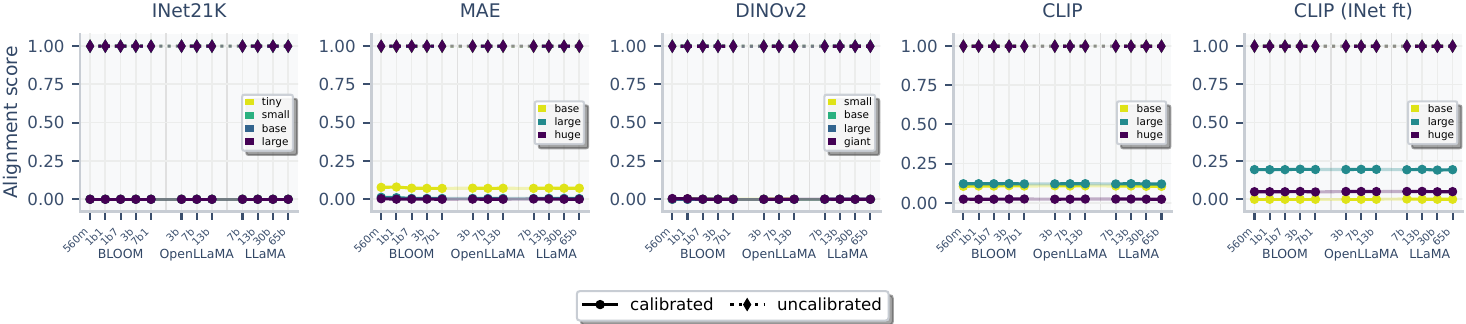}
    \caption{CKA-RBF ($\sigma=0.1$)}
  \end{subfigure}
  \begin{subfigure}[t]{\linewidth}
    \includegraphics[width=\linewidth]{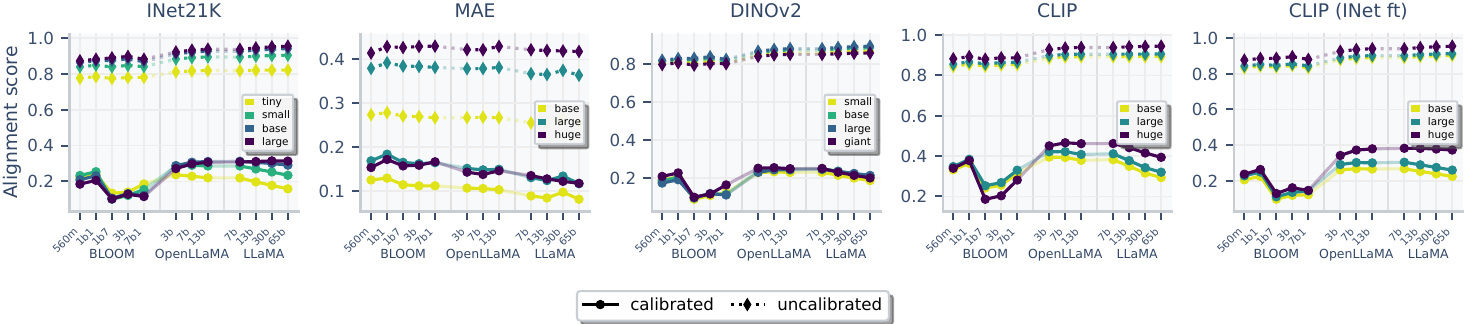}
    \caption{CKA-RBF ($\sigma=0.5$)}
  \end{subfigure}
  \begin{subfigure}[t]{\linewidth}
    \includegraphics[width=\linewidth]{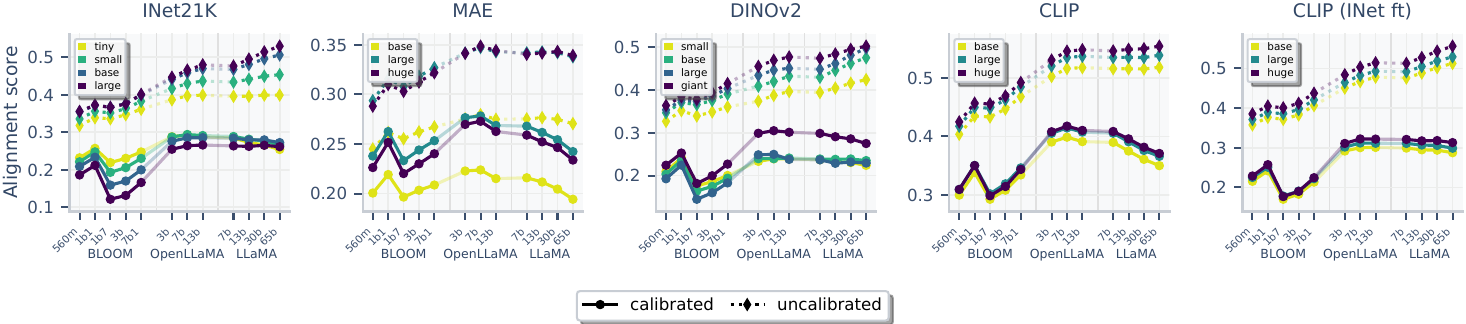}
    \caption{CKA-RBF ($\sigma=2.0$)}
  \end{subfigure}
  \begin{subfigure}[t]{\linewidth}
    \includegraphics[width=\linewidth]{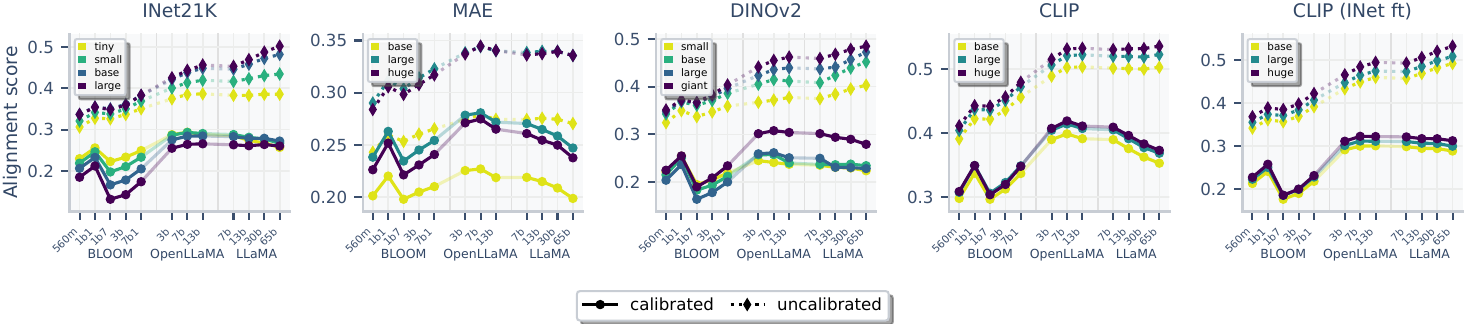}
    \caption{CKA-RBF ($\sigma=5.0$)}
  \end{subfigure}
  \caption{
  \textbf{PRH alignment with varying bandwidth $\sigma$ for CKA-RBF.}
  At very small $\sigma$ (a), no significant alignment remains after calibration.
  Larger $\sigma$ values (b--d) show significant alignment, but the scaling trend flattens after calibration.
  }
  \label{fig:locality-cka-rbf}
\end{figure}

\begin{figure}[htbp]
  \centering
  \begin{subfigure}[t]{\linewidth}
    \includegraphics[width=\linewidth]{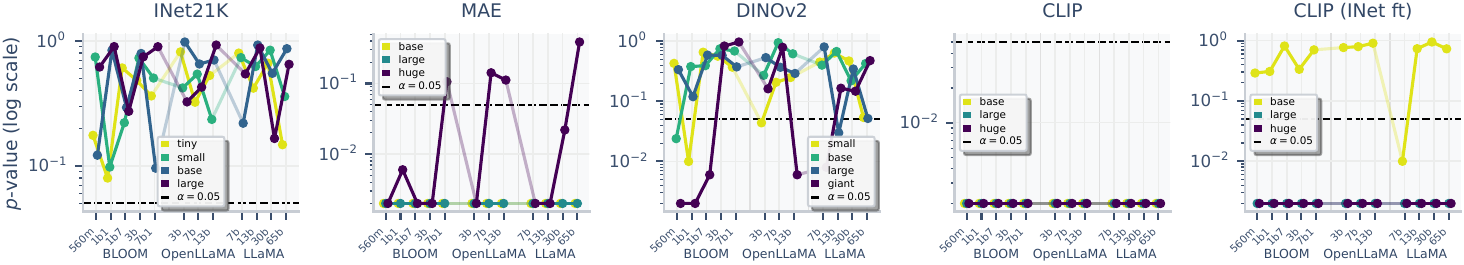}
    \caption{CKA-RBF ($\sigma=0.1$)}
  \end{subfigure}
  \begin{subfigure}[t]{\linewidth}
    \includegraphics[width=\linewidth]{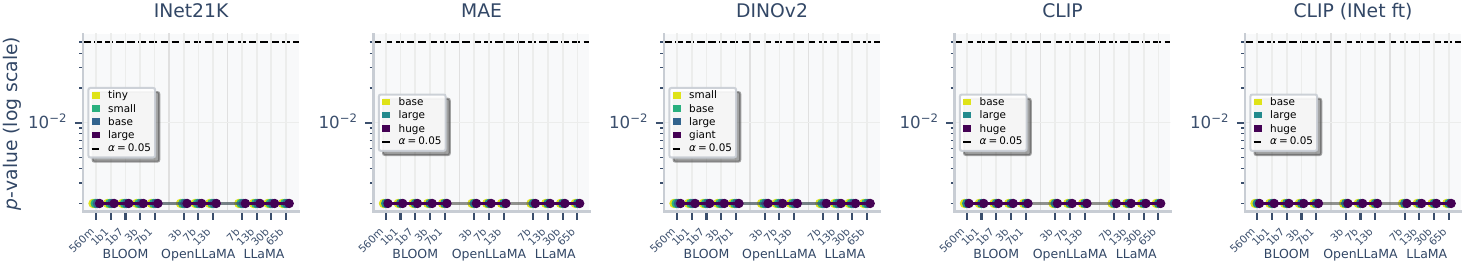}
    \caption{CKA-RBF ($\sigma=0.5$)}
  \end{subfigure}
  \begin{subfigure}[t]{\linewidth}
    \includegraphics[width=\linewidth]{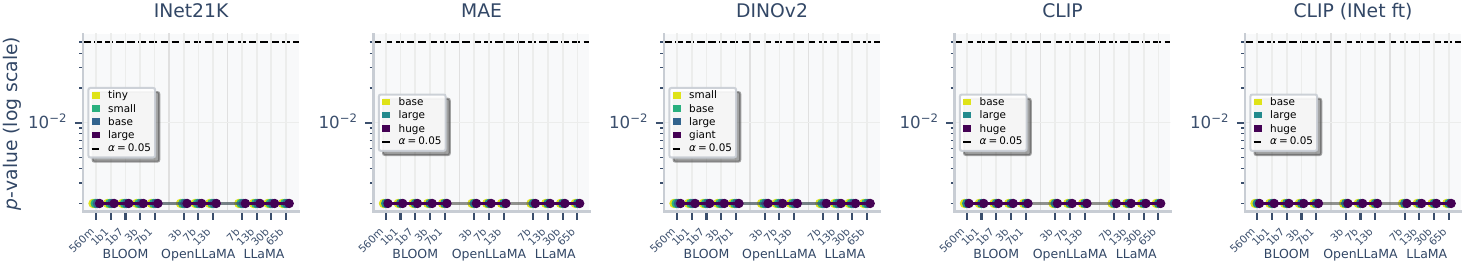}
    \caption{CKA-RBF ($\sigma=2.0$)}
  \end{subfigure}
  \begin{subfigure}[t]{\linewidth}
    \includegraphics[width=\linewidth]{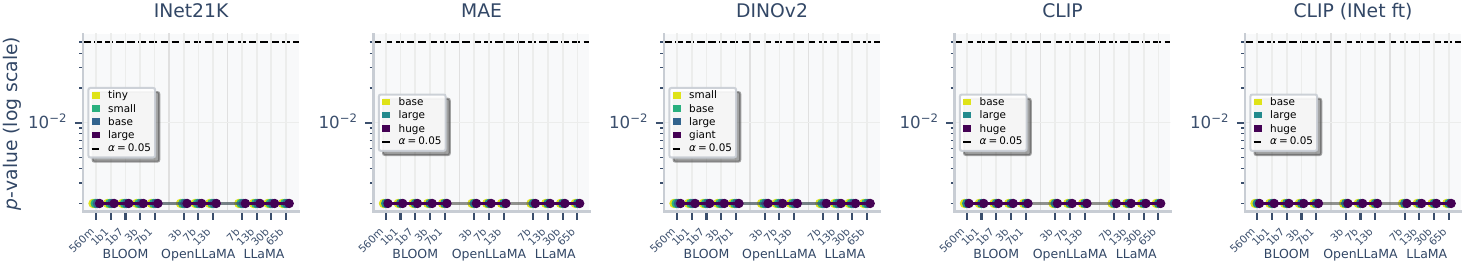}
    \caption{CKA-RBF ($\sigma=5.0$)}
  \end{subfigure}
  \caption{
  \textbf{Significance of PRH alignment with varying bandwidth $\sigma$ for CKA-RBF.}
  Alignment with $\sigma = 0.1$ (a) is not significant for multiple models where larger bandwidths have significance (b--d).
  }
  \label{fig:locality-cka-rbf-pvalues}
\end{figure}

\paragraph{Topological versus metric alignment.}
The contrasting behavior of mKNN and small-$\sigma$ CKA-RBF reveals a fundamental distinction in what ``local alignment'' means.
On one hand, mKNN measures \emph{topological} alignment: do the representations agree on \emph{which} points are neighbors?
This captures ordinal information where the ranking of distances matters but not their absolute values.
On the other hand, small-$\sigma$ CKA-RBF measures \emph{metric} alignment: do the representations agree on \emph{how close} neighbors are?
This captures cardinal information where exact distance values matter.

The fact that mKNN shows alignment at all $k$ values while small-$\sigma$ CKA-RBF shows no alignment reveals that cross-modal representations agree on neighborhood identity (which points are close) but not on exact local distances (how close they are).
This finding is consistent with the observation that different training objectives and architectures induce different distance scales in representation space while preserving the relative ordering of neighbors.
The \emph{Aristotelian} Representation Hypothesis should therefore be understood as convergence to shared \emph{topological} structure rather than shared \emph{metric} structure.


\subsection{Sensitivity to significance level $\alpha$}
\label{app:alpha-sensitivity}

The main text uses a significance level of $\alpha=0.05$ throughout.
To confirm that the \gls{prh} conclusions are not sensitive to this choice, we repeat the \gls{prh} evaluation from \Cref{sec:prh} with $\alpha \in \{0.01, 0.05, 0.10\}$ for representative global (CKA linear, CKA RBF) and local (\gls{mknn} with $k=10$) metrics.

\Cref{fig:alpha-cka-lin,fig:alpha-cka-rbf,fig:alpha-mknn} show that the conclusions are entirely invariant to the choice of $\alpha$.
For global metrics, calibrated scores show no convergence trend at any significance level.
For local metrics, calibrated scores retain their alignment trend across all three $\alpha$ values.
Stricter thresholds ($\alpha=0.01$) produce slightly lower calibrated scores, while more permissive thresholds ($\alpha=0.10$) produce slightly higher ones, but the qualitative pattern is unchanged.
This confirms that our findings are not an artifact of a particular significance level.

\begin{figure}[htbp]
  \centering
  \begin{subfigure}[t]{\linewidth}
    \includegraphics[width=\linewidth]{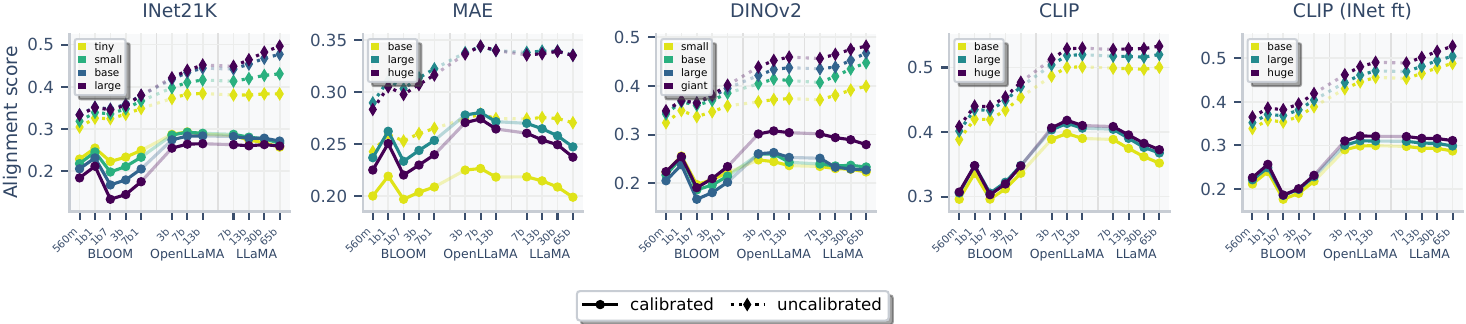}
    \caption{$\alpha = 0.01$}
  \end{subfigure}
  \begin{subfigure}[t]{\linewidth}
    \includegraphics[width=\linewidth]{figures/prh_alignment_lines_by_family_cka_lin.pdf}
    \caption{$\alpha = 0.05$ (default)}
  \end{subfigure}
  \begin{subfigure}[t]{\linewidth}
    \includegraphics[width=\linewidth]{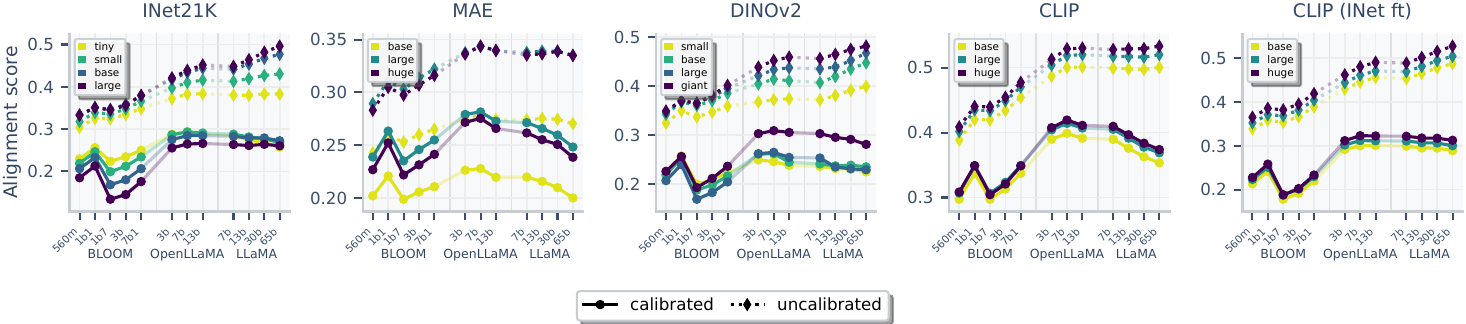}
    \caption{$\alpha = 0.10$}
  \end{subfigure}
  \caption{
  \textbf{Sensitivity to $\alpha$ for CKA linear.}
  Calibrated scores show no convergence trend regardless of significance level.
  }
  \label{fig:alpha-cka-lin}
\end{figure}

\begin{figure}[htbp]
  \centering
  \begin{subfigure}[t]{\linewidth}
    \includegraphics[width=\linewidth]{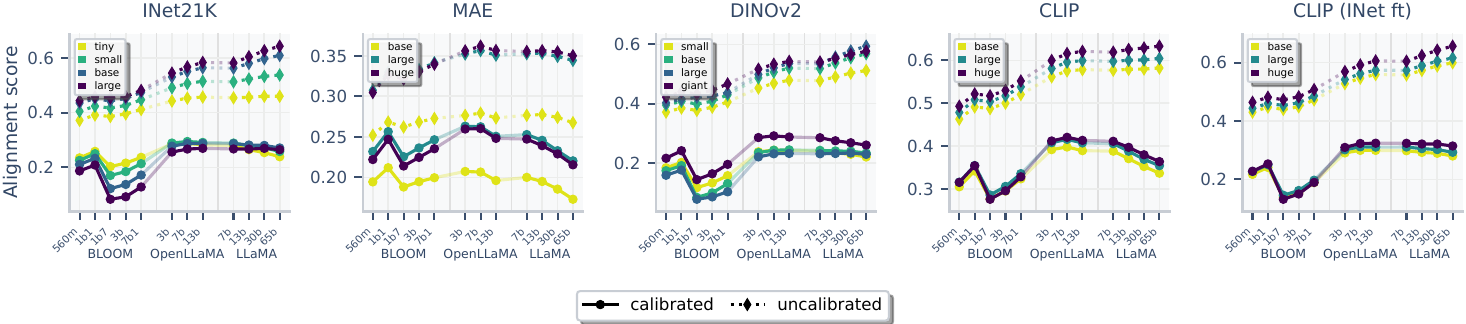}
    \caption{$\alpha = 0.01$}
  \end{subfigure}
  \begin{subfigure}[t]{\linewidth}
    \includegraphics[width=\linewidth]{figures/prh_alignment_lines_by_family_cka_rbf.pdf}
    \caption{$\alpha = 0.05$ (default)}
  \end{subfigure}
  \begin{subfigure}[t]{\linewidth}
    \includegraphics[width=\linewidth]{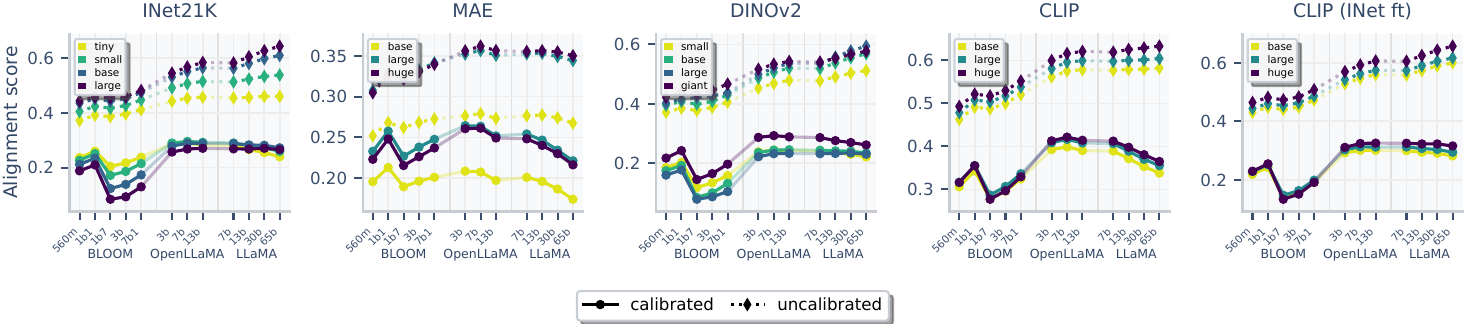}
    \caption{$\alpha = 0.10$}
  \end{subfigure}
  \caption{
  \textbf{Sensitivity to $\alpha$ for CKA RBF.}
  The same pattern holds: no convergence trend at any significance level.
  }
  \label{fig:alpha-cka-rbf}
\end{figure}

\begin{figure}[htbp]
  \centering
  \begin{subfigure}[t]{\linewidth}
    \includegraphics[width=\linewidth]{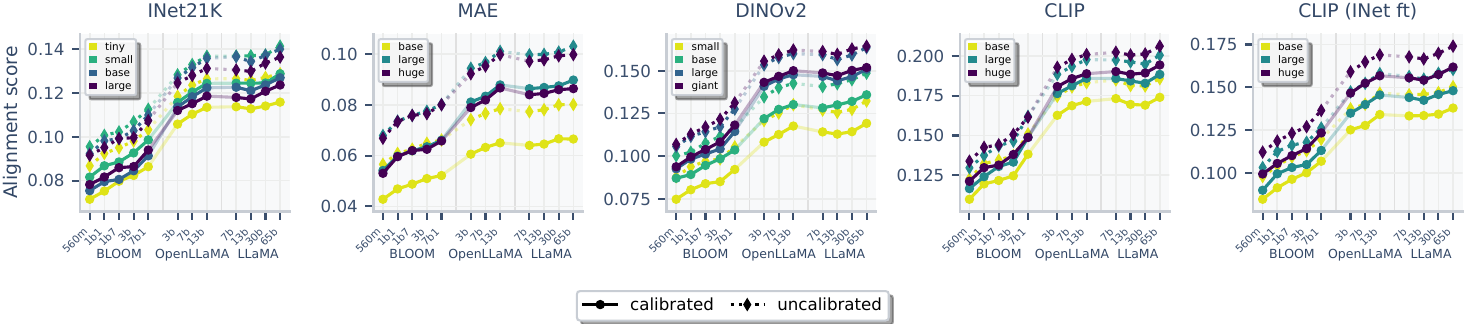}
    \caption{$\alpha = 0.01$}
  \end{subfigure}
  \begin{subfigure}[t]{\linewidth}
    \includegraphics[width=\linewidth]{figures/prh_alignment_lines_by_family_fdr_mutual_knn_k10.pdf}
    \caption{$\alpha = 0.05$ (default)}
  \end{subfigure}
  \begin{subfigure}[t]{\linewidth}
    \includegraphics[width=\linewidth]{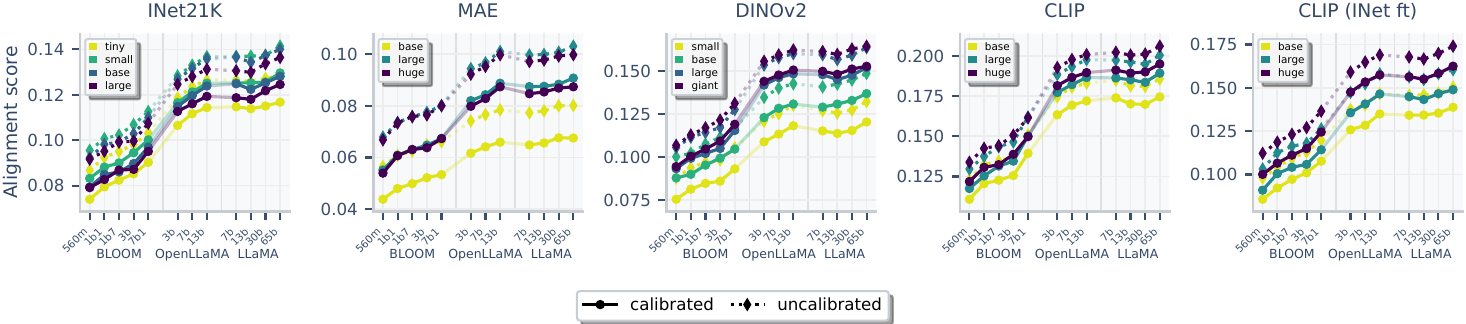}
    \caption{$\alpha = 0.10$}
  \end{subfigure}
  \caption{
  \textbf{Sensitivity to $\alpha$ for \gls{mknn} ($k=10$).}
  Local alignment and its scaling trend persist across all significance levels.
  }
  \label{fig:alpha-mknn}
\end{figure}

%% file: references.bib
@inproceedings{pmlr-v97-kornblith19a,
  title        = {{Similarity of neural network representations revisited}},
  author       = {Kornblith, Simon and Norouzi, Mohammad and Lee, Honglak and Hinton, Geoffrey},
  year         = 2019,
  booktitle    = {International Conference on Machine Learning}
}

@inproceedings{raghu2017svcca,
  title        = {{SVCCA: Singular Vector Canonical Correlation Analysis for Deep Learning Dynamics and Interpretability}},
  author       = {Raghu, Maithra and Gilmer, Justin and Yosinski, Jason and Sohl-Dickstein, Jascha},
  year         = 2017,
  booktitle    = {Advances in Neural Information Processing Systems}
}

@misc{aristotle,
  title        = {{Categories}},
  author       = {Aristotle},
  year         = {ca. 350 B.C.E}
}

@article{kriegeskorte2008rsa,
  title        = {{Representational similarity analysis--connecting the branches of systems neuroscience}},
  author       = {Kriegeskorte, Nikolaus and Mur, Marieke and Bandettini, Peter A.},
  year         = 2008,
  journal      = {Frontiers in Systems Neuroscience}
}

@inproceedings{williams2021shapemetrics,
  title        = {{Generalized Shape Metrics on Neural Representations}},
  author       = {Williams, Alex H. and Kunz, Erin and Kornblith, Simon and Linderman, Scott W.},
  year         = 2021,
  booktitle    = {Advances in Neural Information Processing Systems}
}

@inproceedings{pmlr-v235-huh24a,
  title        = {{Position: The platonic representation hypothesis}},
  author       = {Huh, Minyoung and Cheung, Brian and Wang, Tongzhou and Isola, Phillip},
  year         = 2024,
  booktitle    = {International Conference on Machine Learning}
}

@article{nichols2002permutation,
  title        = {{Nonparametric permutation tests for functional neuroimaging: a primer with examples}},
  author       = {Nichols, Thomas E. and Holmes, Andrew P.},
  year         = 2002,
  journal      = {Human Brain Mapping}
}

@article{phipson2010pvalues,
  title        = {{Permutation P-values Should Never Be Zero: Calculating Exact P-values When Permutations Are Randomly Drawn.}},
  author       = {Phipson, Belinda and Smyth, Gordon K},
  year         = 2010,
  journal      = {Statistical Applications in Genetics \& Molecular Biology}
}

@article{wachter1978strong,
  title        = {{The strong limits of random matrix spectra for sample matrices of independent elements}},
  author       = {Wachter, Kenneth W},
  year         = 1978,
  journal      = {The Annals of Probability}
}

@article{muller2002random,
  title        = {{A random matrix model of communication via antenna arrays}},
  author       = {M{\"u}ller, Ralf R},
  year         = 2002,
  journal      = {IEEE Transactions on information theory}
}

@book{cramer1999mathematical,
  title        = {{Mathematical methods of statistics}},
  author       = {Cram{\'e}r, Harald},
  year         = 1999
}

@book{good2005permutation,
  title        = {{Permutation, parametric and bootstrap tests of hypotheses}},
  author       = {Good, Phillip},
  year         = 2005
}

@book{westfall1993resampling,
  title        = {{Resampling-based multiple testing: Examples and methods for p-value adjustment}},
  author       = {Westfall, Peter H and Young, S Stanley},
  year         = 1993
}

@book{embrechts2013modelling,
  title        = {{Modelling Extremal Events: for Insurance and Finance}},
  author       = {Embrechts, Paul and Kl{\"u}ppelberg, Claudia and Mikosch, Thomas},
  year         = 2013,
  series       = {Stochastic Modelling and Applied Probability}
}

@article{klabunde2023survey,
  title        = {{Similarity of neural network models: A survey of functional and representational measures}},
  author       = {Klabunde, Max and Schumacher, Tobias and Strohmaier, Markus and Lemmerich, Florian},
  year         = 2025,
  journal      = {ACM Computing Surveys}
}

@article{ding2021grounding,
  title        = {{Grounding representation similarity through statistical testing}},
  author       = {Ding, Frances and Denain, Jean-Stanislas and Steinhardt, Jacob},
  year         = 2021,
  journal      = {Advances in Neural Information Processing Systems}
}

@inproceedings{harvey2024decodable,
  title        = {{What Representational Similarity Measures Imply about Decodable Information}},
  author       = {Harvey, Sarah E and Lipshutz, David and Williams, Alex H},
  year         = 2024,
  booktitle    = {Proceedings of UniReps: the Second Edition of the Workshop on Unifying Representations in Neural Models},
  organization = {PMLR}
}

@article{bo2024functional,
  title        = {{Evaluating representational similarity measures from the lens of functional correspondence}},
  author       = {Bo, Yiqing and Soni, Ansh and Srivastava, Sudhanshu and Khosla, Meenakshi},
  year         = 2024,
  journal      = {arXiv preprint arXiv:2411.14633}
}

@article{murphy2024biasedcka,
  title        = {{Correcting biased centered kernel alignment measures in biological and artificial neural networks}},
  author       = {Murphy, Alex and Zylberberg, Joel and Fyshe, Alona},
  year         = 2024,
  journal      = {arXiv preprint arXiv:2405.01012}
}

@article{chun2025sparsecka,
  title        = {{Estimating Neural Representation Alignment from Sparsely Sampled Inputs and Features}},
  author       = {Chun, Chanwoo and Canatar, Abdulkadir and Chung, SueYeon and Lee, Daniel D},
  year         = 2025,
  journal      = {arXiv preprint arXiv:2502.15104}
}

@article{cui2022deconfounded,
  title        = {{Deconfounded representation similarity for comparison of neural networks}},
  author       = {Cui, Tianyu and Kumar, Yogesh and Marttinen, Pekka and Kaski, Samuel},
  year         = 2022,
  journal      = {Advances in Neural Information Processing Systems}
}

@article{diedrichsen2020wuc,
  title        = {{Comparing representational geometries using whitened unbiased-distance-matrix similarity}},
  author       = {Diedrichsen, J{\"o}rn and Berlot, Eva and Mur, Marieke and Sch{\"u}tt, Heiko H and Shahbazi, Mahdiyar and Kriegeskorte, Nikolaus},
  year         = 2021,
  journal      = {Neurons, Behavior, Data Analysis, and Theory}
}

@article{cai2019brsa,
  title        = {{Representational structure or task structure? Bias in neural representational similarity analysis and a Bayesian method for reducing bias}},
  author       = {Cai, Ming Bo and Schuck, Nicolas W and Pillow, Jonathan W and Niv, Yael},
  year         = 2019,
  journal      = {PLoS Computational Biology}
}

@article{smilde2009rv,
  title        = {{Matrix correlations for high-dimensional data: the modified RV-coefficient}},
  author       = {Smilde, Age K and Kiers, Henk AL and Bijlsma, Sabina and Rubingh, CM and Van Erk, MJ},
  year         = 2009,
  journal      = {Bioinformatics}
}

@incollection{hotelling1992relations,
  title        = {{Relations between two sets of variates}},
  author       = {Hotelling, Harold},
  year         = 1992,
  booktitle    = {Breakthroughs in Statistics: Methodology and Distribution}
}

@article{morcos2018insights,
  title        = {{Insights on representational similarity in neural networks with canonical correlation}},
  author       = {Morcos, Ari and Raghu, Maithra and Bengio, Samy},
  year         = 2018,
  journal      = {Advances in Neural Information Processing Systems}
}

@article{song2012hsic,
  title        = {{Feature Selection via Dependence Maximization}},
  author       = {Song, Le and Smola, Alex and Gretton, Arthur and Bedo, Justin and Borgwardt, Karsten},
  year         = 2012,
  journal      = {Journal of Machine Learning Research}
}

@inproceedings{zhu2025dynamic,
  title        = {{Dynamic Reflections: Probing Video Representations with Text Alignment}},
  author       = {Zhu, Tyler and Han, Tengda and Guibas, Leonidas and P{\u{a}}tr{\u{a}}ucean, Viorica and Ovsjanikov, Maks},
  year         = 2026,
  booktitle    = {International Conference on Learning Representations}
}

@article{schrimpf2018brain,
  title        = {{Brain-score: Which artificial neural network for object recognition is most brain-like?}},
  author       = {Schrimpf, Martin and Kubilius, Jonas and Hong, Ha and Majaj, Najib J and Rajalingham, Rishi and Issa, Elias B and Kar, Kohitij and Bashivan, Pouya and Prescott-Roy, Jonathan and Geiger, Franziska and others},
  year         = 2018,
  journal      = {BioRxiv}
}

@article{raugel2025disentangling,
  title        = {{Disentangling the factors of convergence between brains and computer vision models}},
  author       = {Raugel, Jos{\'e}phine and Szafraniec, Marc and Vo, Huy V and Couprie, Camille and Labatut, Patrick and Bojanowski, Piotr and Wyart, Valentin and King, Jean-R{\'e}mi},
  year         = 2025,
  journal      = {arXiv preprint arXiv:2508.18226}
}

@article{marcos2025convergent,
  title        = {{Convergent transformations of visual representation in brains and models}},
  author       = {Marcos-Manch{\'o}n, Pablo and Fuentemilla, Llu{\'\i}s},
  year         = 2025,
  journal      = {arXiv preprint arXiv:2507.13941}
}

@inproceedings{beyer1999nearest,
  title        = {{When is ``nearest neighbor'' meaningful?}},
  author       = {Beyer, Kevin and Goldstein, Jonathan and Ramakrishnan, Raghu and Shaft, Uri},
  year         = 1999,
  booktitle    = {International Conference on Database Theory},
  organization = {Springer}
}

@inproceedings{aggarwal2001surprising,
  title        = {{On the surprising behavior of distance metrics in high dimensional space}},
  author       = {Aggarwal, Charu C and Hinneburg, Alexander and Keim, Daniel A},
  year         = 2001,
  booktitle    = {International Conference on Database Theory},
  organization = {Springer}
}

@article{holm1979simple,
  title        = {{A simple sequentially rejective multiple test procedure}},
  author       = {Holm, Sture},
  year         = 1979,
  journal      = {Scandinavian Journal of Statistics}
}

@article{benjamini1995controlling,
  title        = {{Controlling the false discovery rate: a practical and powerful approach to multiple testing}},
  author       = {Benjamini, Yoav and Hochberg, Yosef},
  year         = 1995,
  journal      = {Journal of the Royal Statistical Society: Series B (Methodological)}
}

@article{neyshabur2020being,
  title        = {{What is being transferred in transfer learning?}},
  author       = {Neyshabur, Behnam and Sedghi, Hanie and Zhang, Chiyuan},
  year         = 2020,
  journal      = {Advances in Neural Information Processing Systems}
}

@inproceedings{nguyen2021do,
  title        = {{Do Wide and Deep Networks Learn the Same Things? Uncovering How Neural Network Representations Vary with Width and Depth}},
  author       = {Thao Nguyen and Maithra Raghu and Simon Kornblith},
  year         = 2021,
  booktitle    = {International Conference on Learning Representations}
}

@article{bonferroni1936teoria,
  title        = {{Teoria statistica delle classi e calcolo delle probabilit\`a}},
  author       = {Bonferroni, Carlo},
  year         = 1936,
  journal      = {Pubblicazioni del R Istituto Superiore di Scienze Economiche e Commerciali di Firenze}
}

@article{robert1976unifying,
  title        = {{A unifying tool for linear multivariate statistical methods: the RV-coefficient}},
  author       = {Robert, Paul and Escoufier, Yves},
  year         = 1976,
  journal      = {Journal of the Royal Statistical Society: Series C (Applied Statistics)}
}

@book{lehmann2005testing,
  title        = {{Testing statistical hypotheses}},
  author       = {Lehmann, Erich Leo and Romano, Joseph P},
  year         = 2005
}

@article{bolya2025perception-encoder,
  title        = {{Perception Encoder: The best visual embeddings are not at the output of the network}},
  author       = {Daniel Bolya and Po-Yao Huang and Peize Sun and Jang Hyun Cho and Andrea Madotto and Chen Wei and Tengyu Ma and Jiale Zhi and Jathushan Rajasegaran and Hanoona Abdul Rasheed and Junke Wang and Marco Monteiro and Hu Xu and Shiyu Dong and Nikhila Ravi and Shang-Wen Li and Piotr Dollar and Christoph Feichtenhofer},
  year         = 2025,
  journal      = {Advances in Neural Information Processing Systems}
}

@article{cho2025perceptionlm,
  title        = {{Perception{LM}: Open-Access Data and Models for Detailed Visual Understanding}},
  author       = {Jang Hyun Cho and Andrea Madotto and Effrosyni Mavroudi and Triantafyllos Afouras and Tushar Nagarajan and Muhammad Maaz and Yale Song and Tengyu Ma and Shuming Hu and Suyog Jain and Miguel Martin and Huiyu Wang and Hanoona Abdul Rasheed and Peize Sun and Po-Yao Huang and Daniel Bolya and Nikhila Ravi and Shashank Jain and Tammy Stark and Seungwhan Moon and Babak Damavandi and Vivian Lee and Andrew Westbury and Salman Khan and Philipp Kraehenbuehl and Piotr Dollar and Lorenzo Torresani and Kristen Grauman and Christoph Feichtenhofer},
  year         = 2025,
  journal      = {Advances in Neural Information Processing Systems}
}

@article{tong2022videomae,
  title        = {{Video{MAE}: Masked autoencoders are data-efficient learners for self-supervised video pre-training}},
  author       = {Tong, Zhan and Song, Yibing and Wang, Jue and Wang, Limin},
  year         = 2022,
  journal      = {Advances in Neural Information Processing Systems}
}

@article{weenink2003canonical,
  title        = {{Canonical correlation analysis}},
  author       = {Weenink, David},
  year         = 2003,
  journal      = {Proceedings of the Institute of Phonetic Sciences of the University of Amsterdam}
}

@inproceedings{Maniparambil_2024_CVPR,
  title        = {{Do Vision and Language Encoders Represent the World Similarly?}},
  author       = {Maniparambil, Mayug and Akshulakov, Raiymbek and Djilali, Yasser Abdelaziz Dahou and El Amine Seddik, Mohamed and Narayan, Sanath and Mangalam, Karttikeya and O'Connor, Noel E.},
  year         = 2024,
  booktitle    = {Conference on Computer Vision and Pattern Recognition}
}

@inproceedings{tjandrasuwita2025understanding,
  title        = {{Understanding the Emergence of Multimodal Representation Alignment}},
  author       = {Megan Tjandrasuwita and Chanakya Ekbote and Liu Ziyin and Paul Pu Liang},
  year         = 2025,
  booktitle    = {International Conference on Machine Learning}
}

@inproceedings{srinivasan2021wit,
  title        = {{Wit: Wikipedia-based image text dataset for multimodal multilingual machine learning}},
  author       = {Srinivasan, Krishna and Raman, Karthik and Chen, Jiecao and Bendersky, Michael and Najork, Marc},
  year         = 2021,
  booktitle    = {International ACM SIGIR Conference on Research and Development in Information Retrieval}
}

@inproceedings{klabunde2025resi,
  title        = {{ReSi: A Comprehensive Benchmark for Representational Similarity Measures}},
  author       = {Max Klabunde and Tassilo Wald and Tobias Schumacher and Klaus Maier-Hein and Markus Strohmaier and Florian Lemmerich},
  year         = 2025,
  booktitle    = {International Conference on Learning Representations}
}

@article{soni2024conclusions,
  title        = {{Conclusions about Neural Network to Brain Alignment are Profoundly Impacted by the Similarity Measure}},
  author       = {Soni, Ansh and Srivastava, Sudhanshu and Kording, Konrad and Khosla, Meenakshi},
  year         = 2024,
  journal      = {bioRxiv}
}

@article{kapoor2025bridging,
  title        = {{Bridging Critical Gaps in Convergent Learning: How Representational Alignment Evolves Across Layers, Training, and Distribution Shifts}},
  author       = {Kapoor, Chaitanya and Srivastava, Sudhanshu and Khosla, Meenakshi},
  year         = 2025,
  journal      = {Advances in Neural Information Processing Systems}
}

@inproceedings{schaeffer2024position,
  title        = {{Position: Maximizing Neural Regression Scores May Not Identify Good Models of the Brain}},
  author       = {Schaeffer, Rylan and Khona, Mikail and Chandra, Sarthak and Ostrow, Mitchell and Miranda, Brando and Koyejo, Sanmi},
  year         = 2024,
  booktitle    = {NeurIPS 2024 Workshop on Unifying Representations in Neural Models (UniReps)}
}

@book{livan2018introduction,
  title        = {{Introduction to Random Matrices: Theory and Practice}},
  author       = {Livan, Giacomo and Novaes, Marcel and Vivo, Pierpaolo},
  year         = 2018,
  series       = {SpringerBriefs in Mathematical Physics}
}

@article{groger2026limited,
  title={With Limited Data for Multimodal Alignment, Let the {STRUCTURE} Guide You},
  author={Gr{\"o}ger, Fabian and Wen, Shuo and Le, Huyen and Brbic, Maria},
  journal={Advances in Neural Information Processing Systems},
  year={2025}
}
